
\documentclass{article}
\usepackage{microtype}
\usepackage{graphicx}
\usepackage{subfigure}
\usepackage{booktabs} 

\usepackage{hyperref}



\usepackage[accepted]{icml2025}

\usepackage{amsmath}
\usepackage{amssymb}
\usepackage{mathtools}
\usepackage{amsthm}
\usepackage[linesnumbered,ruled,vlined]{algorithm2e}
\usepackage{thmtools} 
\usepackage{minitoc}
\usepackage{thm-restate}
\usepackage[capitalize,noabbrev]{cleveref}
\usepackage{scalefnt}
\usepackage{tabularx}

\newtheorem{theorem}{Theorem}[section]

\newtheorem{definition}[theorem]{Definition}
\newtheorem{assump}[theorem]{Assumption}

\usepackage[textsize=tiny]{todonotes}

\newcommand{\DP}{\texttt{DP}\xspace}
\newcommand{\CDP}{\texttt{CDP}\xspace}
\newcommand{\RDP}{\texttt{RDP}\xspace}
\newcommand{\DPSGD}{\texttt{DPSGD}\xspace}

\newcommand{\GMM}{\texttt{GMM}\xspace}
\newcommand{\EM}{\texttt{EM}\xspace}

\newcommand{\DPFL}{\texttt{DPFL}\xspace}

\newcommand{\userind}{i}
\newcommand{\FL}{\texttt{FL}\xspace}
\newcommand{\ML}{\texttt{ML}\xspace}

\newcommand{\algname}[1]{{\sf\scalefont{0.90}{#1}}}

\usepackage{color}
\usepackage{enumerate}
\usepackage{graphicx}
\usepackage{amsfonts, amsmath, bm, amssymb}
\usepackage{dsfont}


\usepackage{xspace}
\makeatletter
\DeclareRobustCommand\onedot{\futurelet\@let@token\@onedot}
\def\@onedot{\ifx\@let@token.\else.\null\fi\xspace}

\makeatother















\newcommand{\thetav     }{\boldsymbol \theta     }


\DeclareMathOperator*{\argmax}{arg\,max}
\DeclareMathOperator*{\argmin}{arg\,min}

\icmltitlerunning{Differentially Private Clustered Federated Learning}

\begin{document}

\twocolumn[
\icmltitle{Differentially Private Clustered Federated Learning}




\begin{icmlauthorlist}
\icmlauthor{Saber Malekmohammadi}{yyy,zzz}
\icmlauthor{Afaf Taik}{zzz}
\icmlauthor{Golnoosh Farnadi}{zzz,xxx,vvv}
\end{icmlauthorlist}

\icmlaffiliation{yyy}{School of Computer Science, University of Waterloo, Waterloo, Canada}
\icmlaffiliation{zzz}{Mila - Quebec AI institute, Montreal, Canada}
\icmlaffiliation{xxx}{School of Computer Science, McGill University, Montreal, Canada}
\icmlaffiliation{vvv}{Universit\'e de Montr\'eal, Montreal, Canada}

\icmlcorrespondingauthor{Saber Malekmohammadi}{saber.malekmohammadi@uwaterloo.ca}

\icmlkeywords{Machine Learning, ICML}

\vskip 0.3in
]



\printAffiliationsAndNotice{} 

\begin{abstract}
Federated learning (\FL), which is a decentralized machine learning (\ML) approach, often incorporates differential privacy (\DP) to provide rigorous data privacy guarantees. Previous works attempted to address high \emph{structured} data heterogeneity in vanilla \FL settings through clustering clients (a.k.a clustered \FL), but these methods remain sensitive and prone to errors, further exacerbated by the \DP noise. This vulnerability makes the previous methods inappropriate for differentially private \FL (\DPFL) settings with structured data heterogeneity. To address this gap, we propose an algorithm for differentially private clustered \FL, which is robust to the \DP noise in the system and identifies the underlying clients' clusters correctly. To this end, we propose to cluster clients based on both their model updates and training loss values. Furthermore, for clustering clients' model updates at the end of the first round, our proposed approach addresses the server's uncertainties by employing large batch sizes as well as Gaussian Mixture Models (\GMM) to reduce the impact of \DP and stochastic noise and avoid potential clustering errors. This idea is efficient especially in privacy-sensitive scenarios with more \DP noise. We provide theoretical analysis to justify our approach and evaluate it across diverse data distributions and privacy budgets. Our experimental results show its effectiveness in addressing large structured data heterogeneity in \DPFL. 
\end{abstract}

\section{Introduction}
Federated learning (\FL) \citep{mcmahan2017communication} is a collaborative \ML paradigm, which allows multiple clients to train a shared global model without sharing their data. However, in order for \FL algorithms to ensure rigorous privacy guarantees against data privacy attacks \citep{Hitaj2017DeepMU, Rigaki2020ASO, Wang2018BeyondIC, Zhu2019DeepLF, Geiping2020InvertingG}, they are reinforced with \DP\citep{Dwork2006, Dwork2006OurDO,Dwork2011AFF, Dwork2014TheAF}. This is done in the presence of a trusted server \citep{McMahan2018LearningDP, Geyer2017DPFedAvg} as well as its absence \citep{Zhao2020LocalDP, Duchi2013LocalPA, Duchi2016MinimaxOP}. In the latter case and for sample-level \DP, each client runs \DPSGD \citep{Abadi2016} locally and shares its noisy model updates with the server at the end of each round.

A key challenge in \FL settings is ensuring an acceptable performance across clients under heterogeneous data distributions.  
Several existing works focus on accuracy parity across clients with a \emph{single} common model by agnostic \FL \citep{mohri2019agnostic} and client reweighting \citep{li2019fair, li2020tilted, zhang2023proportional}. However, a single global model often fails to adapt to the data heterogeneity across clients \citep{chu2023focus}, especially when high data heterogeneity exists. Furthermore, when using a single model and augmenting \FL with \DP, different subgroups of clients are unevenly affected - even with loose privacy guarantees \citep{Farrand2020NeitherPN, Fioretto_2022,bagdasaryan2019differential}. In fact, subgroups with minority clients experience a larger drop in model utility, due to the inequitable gradient clipping in \DPSGD \citep{Abadi2016, bagdasaryan2019differential, Xu2021RemovingDI, Esipova2022DisparateII}. Accordingly, some works proposed to use model personalization by multi-task learning \citep{Smith2017FederatedML, ditto, Marfoq2021FederatedML, Wu2023PersonalizedFL}, transfer learning \citep{Li2019FedMDHF, Liu2020ASF} and clustered \FL \citep{ghosh2020, mansour2020approaches, fedsoft2022, Sattler2019ClusteredFL, werner2023provably, briggs2020federated}. The latter has been proposed for vanilla \FL and is suitable when ``structured data heterogeneity" exists across clusters of clients (as in this work): subsets of clients can be naturally grouped together based on their data distributions and one model is learned for each group (cluster). However, as discussed in \citep{werner2023provably}, the existing non-private clustered \FL approaches are vulnerable to errors in clustering due to their sensitivity to:  1. model initialization 2. randomness in clients' model updates due to stochastic noise. The \DP noise existing in \DPFL systems' training mechanism exacerbates this vulnerability by injecting more randomness.

To address the aforementioned gap, we propose a differentially private clustered \FL algorithm which uses both clients' model updates and loss values for clustering clients, making it more robust to \DP/stochastic noise (\Cref{alg:R-DPCFL}): 1) Justified by our theoretical analysis (\Cref{lemma:updatesnoise} and \ref{lemma:localdp}) and in order to cluster clients correctly, our proposed algorithm uses a full batch size in the first \FL round and a small batch size in the subsequent rounds, to reduce the noise in clients' model updates at the end of the first round. 2) Then, the server soft clusters clients based on these less noisy model updates using a Gaussian Mixture Model (\GMM). Depending on the ``confidence" of the learned \GMM, the server keeps using it to soft cluster clients during the next few rounds (\Cref{sec:applicability}). 3) Finally, the server switches the clustering strategy to local clustering of clients based on their loss values in the remaining rounds. These altogether make our \DP clustered \FL algorithm effective and robust. The highlights of our contributions are as follows:

\begin{itemize}

\item  We propose a \DP clustered \FL algorithm (\algname{R-DPCFL}), which combines information from both clients' model updates and their loss values. The algorithm is robust and achieves high-quality clustering of clients, even in the presence of \DP noise in the system (\Cref{alg:R-DPCFL}).

\item We theoretically prove that increasing clients' batch sizes in the first round (and decreasing them in the subsequent rounds) improves the server's ability to cluster clients based on their model updates at the end of the first round with high accuracy (\Cref{lemma:localdp}). 
\item  We show that utilizing sufficiently large client batch sizes in the first round (and sufficiently small batch sizes in the next rounds) enables super-linear convergence rate for learning a \GMM on clients' model updates at the end of the first round. This leads to soft clustering of clients using a \GMM with a low computational overhead (\Cref{theorem:convrate}).  

\item We extensively evaluate across diverse datasets and scenarios, and demonstrate the effectiveness of our robust \DP clustered \FL algorithm in detecting the underlying cluster structure of clients, which leads to an overall utility improvement for the system (\Cref{sec:evaluation}).

\end{itemize}

\section{Related work}



Model personalization is a technique for improving utility under moderate data heterogeneity \citep{ditto, liu2022csfl}, which usually leverages extra computations, e.g., extra local iterations \citep{ditto}. On the other hand, clustered \FL has been proposed for personalized \FL with ``\emph{structured}" data heterogeneity, where clients can be naturally partitioned into clusters: clients in the same cluster have similar data distributions, while there is significant heterogeneity among the various clusters. Existing clustered \FL algorithms group clients based on their loss values 
\citep{ghosh2020,mansour2020approaches, fedsoft2022, chu2023focus, liu2022privacypersonalizationcrosssilofederated} or their model updates (based on e.g., their euclidean distance \citep{werner2023provably, briggs2020federated} or cosine similarity \citep{Sattler2019ClusteredFL}). As shown by \cite{werner2023provably}, the algorithms are prone to clustering errors in the early rounds of \FL training --due to gradient stochasticity, model initialization or the form of loss functions far from their optima-- which can even propagate in the subsequent rounds. This vulnerability is exacerbated in \DPFL systems, due to the existing extra \DP noise. 
Without addressing this vulnerability, \cite{LUO2024384} proposed a \DP clustered \FL algorithm with a limited applicability, which clusters clients based on the labels that they do not have in their local data. In contrast, our \DP clustered \FL algorithm is applicable to any setting characterized by a number of clients, where each client holds many data samples and needs sample-level privacy protection. Cross-silo \FL systems can be considered as an instance. The closest study to this setting was recently done by \cite{liu2022csfl}, which considers silo-specific sample-level \DP and studies the interplay between privacy and data heterogeneity. More specifically, they show that when clients have large dataset sizes and under ``moderate" data heterogeneity across clients: 1. participation in \FL by clients is encouraged over local training, as the model averaging on the server side yields to mitigation of \DP noise 2. under the same total privacy budget, model personalization - through mean regularized multi-task learning (\algname{MR-MTL}) - leads to a better performance compared to learning a single global model or local training by clients (see \cref{app:mrmtl} about \algname{MR-MTL} formulation). Complementing the work, we show that \algname{MR-MTL}, local training and even loss-based  client clustering are not efficient for \DPFL with ``structured" data heterogeneity across clusters of clients.

\begin{figure*}[h!]
    \centering
    \includegraphics[width=0.35\linewidth]{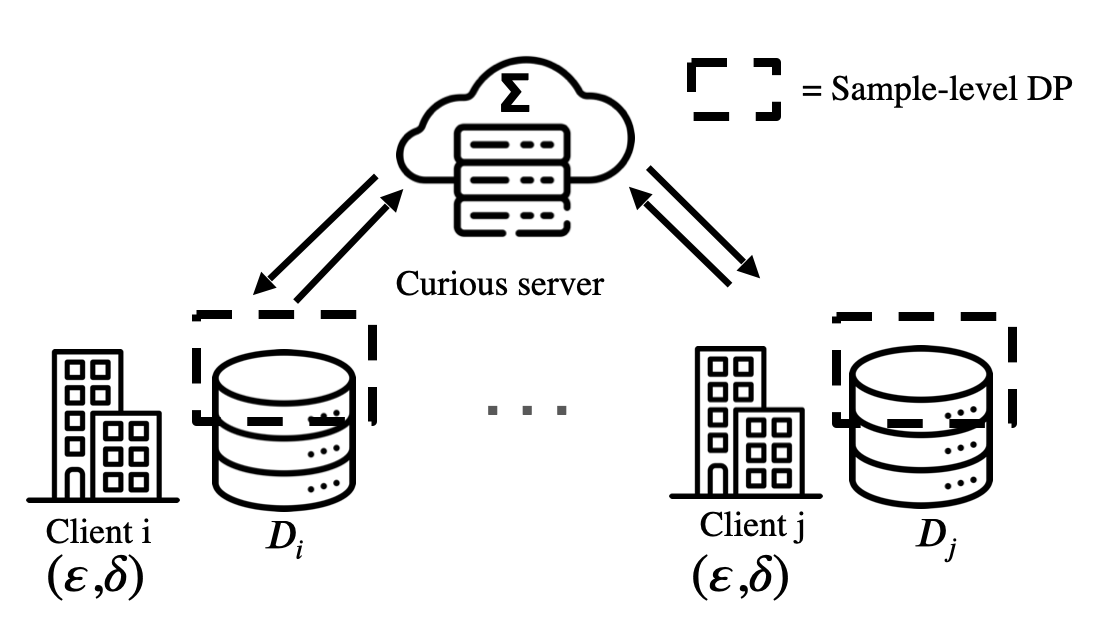} \hspace{4em}\includegraphics[width=0.95\columnwidth,height=5cm]{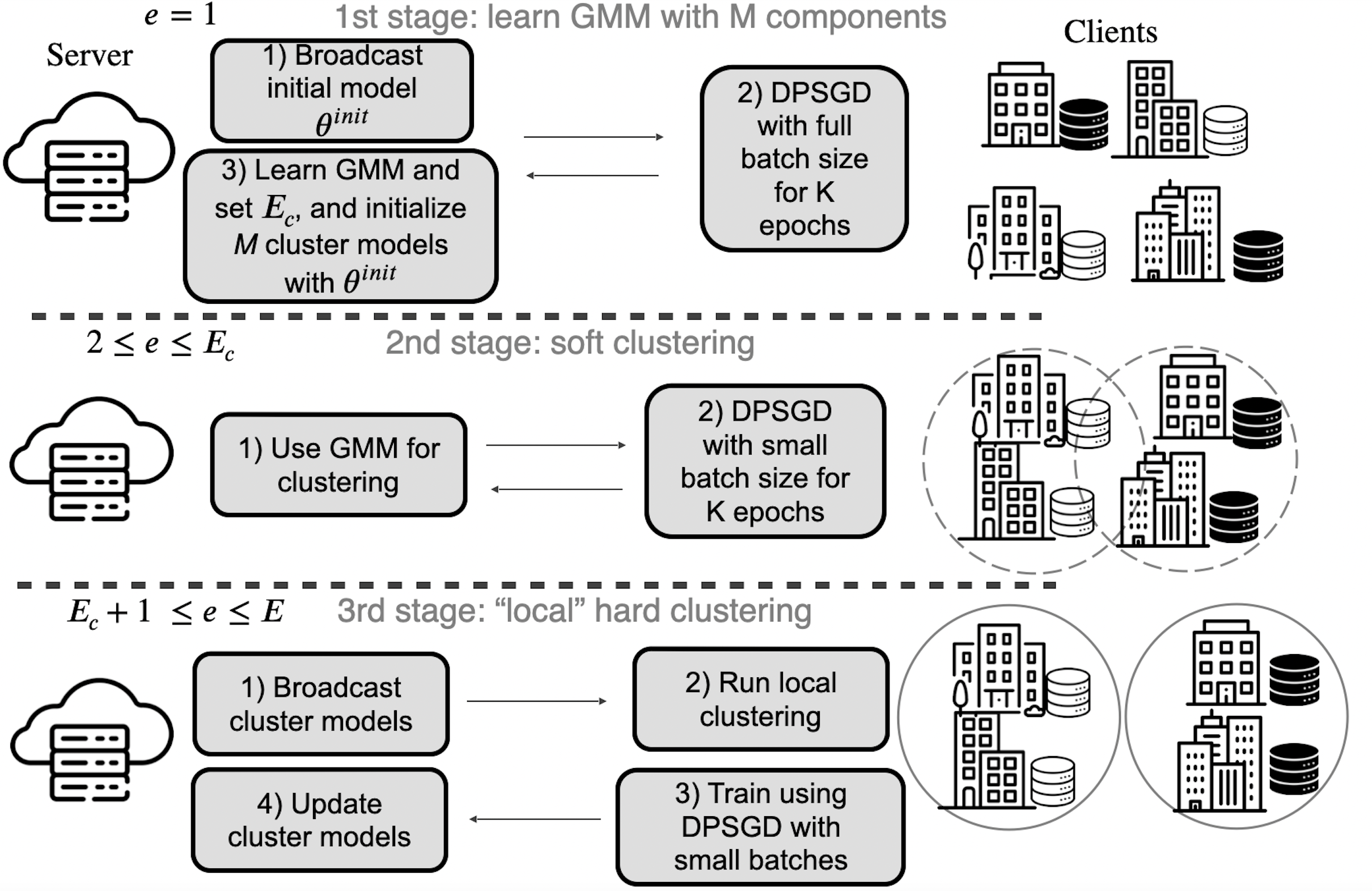}
    
    \caption{\textbf{Left:} Considered threat model in this work, where client $i$ has local train data $\mathcal{D}_i$ and ``sample-level" \DP privacy parameters $(\epsilon, \delta)$, and does not trust any external party. \textbf{Right:} Three main stages of the proposed \algname{R-DPCFL} algorithm.}
    \label{fig:security_model}
    \vspace{-0.5em}
\end{figure*}

\section{Definitions, Notations and  assumptions}\label{sec:notations}

There are multiple definitions of \DP. We adopt the following definition to be satisfied by every client:

\begin{definition}[($\epsilon,\delta$)-\DP \citep{Dwork2006OurDO}]
\label{def:epsilondeltadp}
A randomized mechanism $\mathcal{M}:\mathcal{A}\to \mathcal{R}$ with domain $\mathcal{A}$ and range $\mathcal{R}$ satisfies $(\epsilon,\delta)$-\DP if for any two adjacent inputs $\mathcal{D}$, $\mathcal{D}'\in \mathcal{A}$, which differ only by a single record (by replacement), and for any measurable subset of outputs $\mathcal{S} \subseteq \mathcal{R}$ it holds that
\begin{align*}
    \texttt{Pr}[\mathcal{M}(\mathcal{D})\in \mathcal{S}] \leq e^{\epsilon} \texttt{Pr}[\mathcal{M}(\mathcal{D}')\in \mathcal{S}]+\delta.
\end{align*}
\end{definition}

The gaussian mechanism randomizes the output of a query $f$ as $\mathcal{M}(\mathcal{D}) \triangleq f(\mathcal{D})+\mathcal{N}(0,\sigma^2)$. The randomized output of the mechanism satisfies ($\epsilon, \delta$)-\DP for a continuum of pairs ($\epsilon, \delta$): it is ($\epsilon, \delta$)-\DP for all $\epsilon$ and $\sigma>\frac{\sqrt{2\ln (1.25/\delta)}}{\epsilon} \Delta_2 f$, where $\Delta_2f \triangleq \max_{\mathcal{D},\mathcal{D}'}\parallel f(\mathcal{D})-f(\mathcal{D}')\parallel_2 $ is the $l_2$-sensitivity of the query $f$ with respect to its input. Also, the $\epsilon$ and $\delta$ privacy parameters resulting from running Gaussian mechanism depend on the quantity $z=\frac{\sigma}{\Delta_2f}$ (called ``noise scale"). We consider a \DPFL system (see \cref{fig:security_model}, left), where there are $n$ clients running \DPSGD with the same ``sample-level" privacy parameters ($\epsilon, \delta$): the set of information (including model updates and cluster selections) sent by client $i$ to the server satisfies $(\epsilon, \delta)$-\DP for all adjacent datasets $\mathcal{D}_i$ and $\mathcal{D}_i'$ of the client $i$ differing in one sample.

Let $x\in \mathcal{X}\subseteq\mathbb{R}^d$ and $y \in \mathcal{Y}=\left\{1, \ldots, C \right\}$ denote an input data point and its target label. Client $i$ holds dataset $\mathcal{D}_i$ with $N_i$ samples from distribution $P_i(x,y)=P_i(y|x)P_i(x)$. Let $h: \mathcal{X}\times \mathbf{\thetav} \to\mathbb{R}^C$ be the predictor function, which is parameterized by $\mathbf{\thetav}\in \mathbb{R}^p$. Also, let $\ell:\mathbb{R}^C\times\mathcal{Y}\to \mathbb{R}_+$ be the used loss function (cross-entropy loss). Client $i$ in the system has empirical train loss $f_i(\mathbf{\thetav})=\frac{1}{N_i}\sum_{(x,y)\in \mathcal{D}_i}[\ell(h(x,\mathbf{\thetav}), y)]$, with minimum value $f_i^*$. There are $E$ communication rounds indexed by $e$ and $K$ local epochs with learning rate $\eta_l$ during each round. There are $M$ clusters of clients indexed by $m$, and the server holds $M$ cluster models $\{\thetav_m^e\}_{m=1}^M$ for them at the beginning of round $e$. Clients $i$ and $j$ belonging to the same cluster have the same data distributions, while there is high data heterogeneity across clusters. $s(i)$ denotes the true cluster of client $i$ and $R^e(i)$ denotes the cluster assigned to it at the beginning of round $e$. Let us assume the batch size used by client $i$ in the first round $e=1$ is $b_i^1$, which may be different from the batch size $b_i^{>1}$ that it uses in the rest of the rounds $e>1$. At the $t$-th gradient update during the round $e$, client $i$ uses batch $\mathcal{B}_i^{e,t}$ with size $b_i^e$, and computes the following \DP noisy batch gradient:
\begin{align}
    \Tilde{g}_i^{e,t}(\mathbf{\thetav}) = \frac{1}{b_i^e}\bigg[ \Big(\sum_{j \in \mathcal{B}_i^{e,t}} \Bar{g}_{ij}(\mathbf{\thetav})\Big) + \mathcal{N}(0, \sigma_{i, \texttt{DP}}^2 \mathbb{I}_p)\bigg],
    \label{eq:noisy_sg}
\end{align}
where $\Bar{g}_{ij}(\mathbf{\thetav}) = \texttt{clip}(\nabla \ell(h(x_{ij},\mathbf{\thetav}), y_{ij}), c)$, and $c$ is a clipping threshold to clip sample gradients: for a given vector $\mathbf{v}$, $\texttt{clip}(\mathbf{v}, c) =  \min\{\|\mathbf{v}\|, c\} \cdot \frac{\mathbf{v}}{\|\mathbf{v}\|}$.  Also, $\mathcal{N}$ is the Gaussian noise distribution with variance  $\sigma_{i,\texttt{DP}}^2$, where $\sigma_{i,\texttt{DP}}=c\cdot z_i(\epsilon, \delta, b_i^1, b_i^{>1}, N_i, K, E)$, and $z_i$ is the noise scale needed for achieving $(\epsilon, \delta)-$\DP by client $i$, which can be determined with a privacy accountant, e.g., Renyi-\DP accountant \citep{mironov2019renyidifferentialprivacysampled} used in this work, which is capable of accounting composition of \emph{heterogeneous} \DP mechanisms \citep{mironovRDP}. The privacy parameter $\delta$ is fixed to $10^{-4}$ in this work and for every client $i$: $\delta < N_i^{-1}$. For an arbitrary random $\mathbf{v} = (v_1, \ldots, v_p)^\top \in \mathbb{R}^{p\times 1}$, we define $\texttt{Var}(\mathbf{v}):= \sum_{j=1}^p \mathbb E[(v_j - \mathbb E[v_j])^2]$, i.e., variance of $\mathbf{v}$ is the sum of the variances of its elements. \Cref{tab:notations} in the appendix summarizes the used notations. Finally, we have the following assumption:

\begin{assump}\label{assump:_boundedvariance}
The stochastic gradient $g_i^{e,t}(\thetav) = \frac{1}{b_i^e} \sum_{j \in \mathcal{B}_i^{e,t}} g_{ij}(\thetav)$ is an unbiased estimate of $\nabla f_i(\thetav)$ with a bounded variance: $\forall \thetav \in \mathbb R^p: \texttt{Var}(g_i^{e,t}(\thetav)) \leq \sigma_{i, g}^2(b_i^e)$. The tight bound $\sigma_{i, g}^2(b_i^e)$ is a constant depending only on the used batch size $b_i^e$: the larger $b_i^e$, the smaller $\sigma_{i, g}^2(b_i^e)$.
\end{assump}

\begin{algorithm*}[t]
\caption{\algname{R-DPCFL}}
\label{alg:R-DPCFL}
\KwIn{Initial parameter $\mathbf{\thetav}^{\textit{init}}$,  dataset sizes $\{N_1, \ldots, N_n\}$, batch sizes $\{b_1^{>1}, \ldots, b_n^{>1}\}$, clip bound $c$, local epochs $K$, global round $E$, number of clusters $M$ (optional)}

\KwOut{cluster models $\{\thetav_m^E\}_{m=1}^M$}


\For{each client $\userind \in \{1,\ldots,n\}$}
        {
          $b_i^1 \gets N_i$ \tcp*{full batch size}
          
          $z_i\gets$ \texttt{RDP}($\epsilon, \delta, b_i^1, b_i^{>1}, N_i, K, E$) 
        }
 
\For{$e\in \{1, \ldots, E\}$}
{
    
    \uIf {$e=1$}
    {
        \For{each client $\userind \in \{1,\ldots,n\}$ \textbf{in parallel}}
        {
          $\Delta \Tilde{\mathbf{\thetav}}_i^1 \gets$ \DPSGD($\mathbf{\thetav}^{init}, b_i^1, N_i, K, z_i, c$) 
        }
        
        on server:

        \uIf {$M$ is unknown} 
        {
        $M = \argmax_{M'} \texttt{MSS}\Big(\texttt{GMM}(\Delta \Tilde{\mathbf{\thetav}}_1^1, \ldots, \Delta \Tilde{\mathbf{\thetav}}_n^1; M')\Big)$ \tcp*{set $M$ (\cref{sec:applicability})}
        }

        $\{\pi_1, \ldots, \pi_{n}, \texttt{MPO}\} = \texttt{GMM}(\Delta \Tilde{\mathbf{\thetav}}_1^1, \ldots, \Delta \Tilde{\mathbf{\thetav}}_n^1; M)$  \tcp*{\textbf{1st stage:} \texttt{GMM}}

        set $E_c(\texttt{MPO})$ \tcp*{set $E_c$ (\cref{sec:applicability})}

        Initialize cluster models uniformly: $\thetav_1^{2} = \ldots = \thetav_M^{2} = \mathbf{\thetav}^{\textit{init}}$ 
        
        continue \tcp*{go to round $e=2$}
        

}

\uElseIf {$e\in \{2, \ldots, E_c\}$}
    {
        \For{each client $i \in \{1,\dots,n\}$}
        {
          $R^{e}(i) \gets m \textit{~ with probability $\pi_i[m]$}$ \tcp*{\textbf{2nd stage:} soft clustering}
        } 
    }

    \uElse
    {
    on server: broadcast  cluster models $\{\thetav_m^e\}_{m=1}^M$ to all clients
    
    \For{each client  $i \in \{1,\dots,n\}$}
        {   $R^{e}(i) = \argmin_m f_i(\mathbf{\thetav}_m^{e})
            $ \tcp*{\textbf{3rd stage:} private local clustering}
        }
    }

\For{each client $\userind \in \{1,..,n\}$ \textbf{in parallel}}
    { 
       
      $\Delta \Tilde{\mathbf{\thetav}}_i^e \gets$\DPSGD($\mathbf{\thetav}_{R^e(i)}^e, b_i^{>1}, N_i, K, z_i, c$) \tcp*{batch size $b_i^{>1}$}
    }

    on server:

    \For{each client $i \in \{1,\dots,n\}$}
        {   
        {$w_i^e \gets \frac{1}{\sum_{j=1}^n \mathds{1}_{R^e(j)=R^e(i)} }$}
        }
        
    \For{$m \in \{1,\dots,M\}$}
    {
      
      $\mathbf{\thetav}_m^{e+1} \gets \mathbf{\thetav}_m^e + \sum_{i \in \{1,\dots,n\}} \mathds{1}_{R^e(i)=m} w_i^e \Delta \Tilde{\mathbf{\thetav}}_i^e$
    }

}

\end{algorithm*}

\section{Methodology and proposed algorithm}
As discussed in \citep{werner2023provably},  existing \emph{non-\DP} clustered \FL algorithms are prone to clustering errors, especially in the first rounds (see \cref{app:grad_loss} for a detailed discussion and an illustrating example). Motivated by this vulnerability, \emph{which will get exacerbated by \DP noise}, we next propose a \DP clustered \FL algorithm which starts with clustering clients based on their model updates for the first several rounds and then switches its strategy to cluster clients based on their loss values. We augment this idea with some other non-obvious techniques to enhance the clustering accuracy.

\begin{figure*}[t]
\centering
\includegraphics[width=0.58\columnwidth,height=4.5cm]{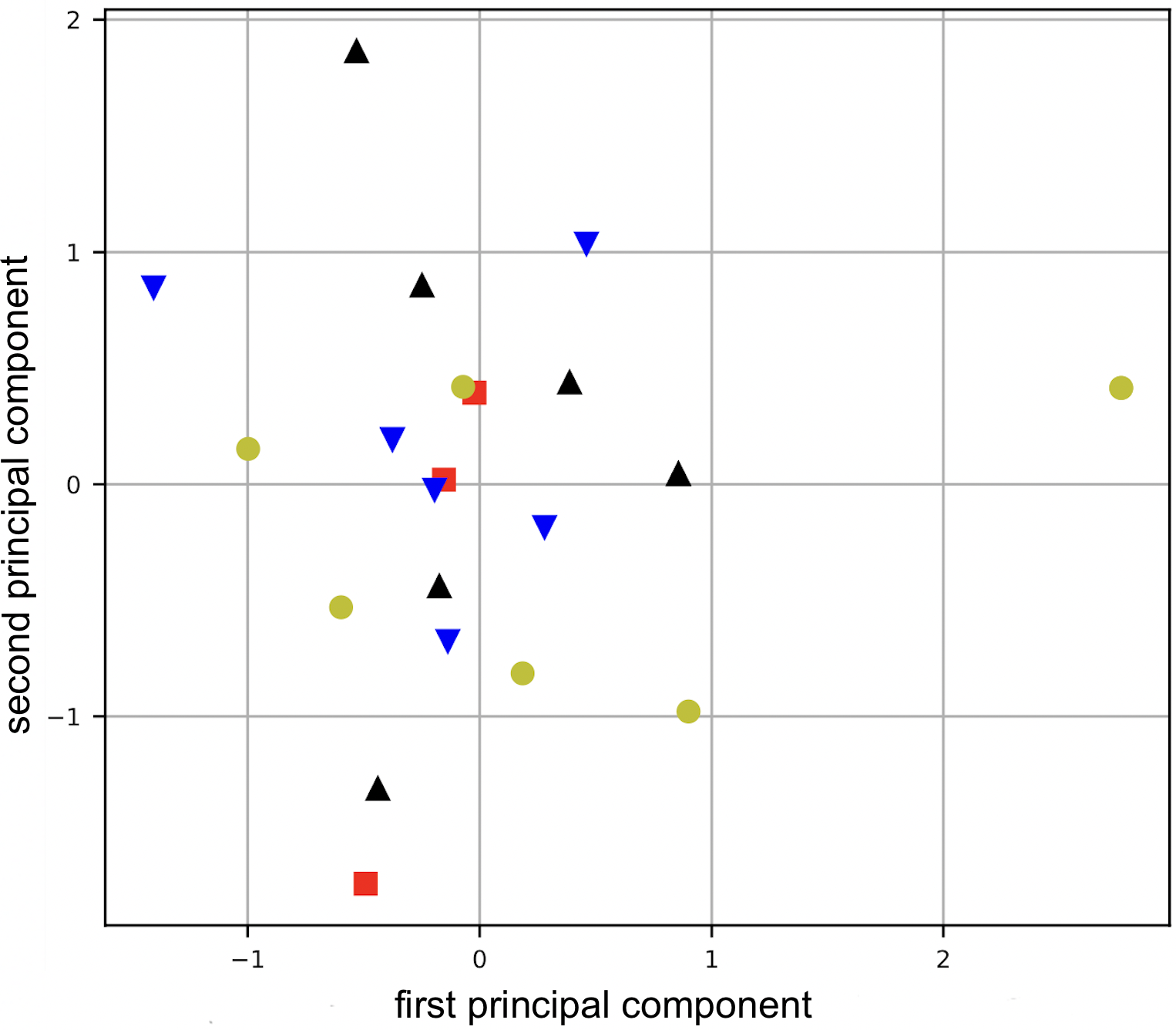}~~~~~~~~
\includegraphics[width=0.63\columnwidth, height=4.5cm]{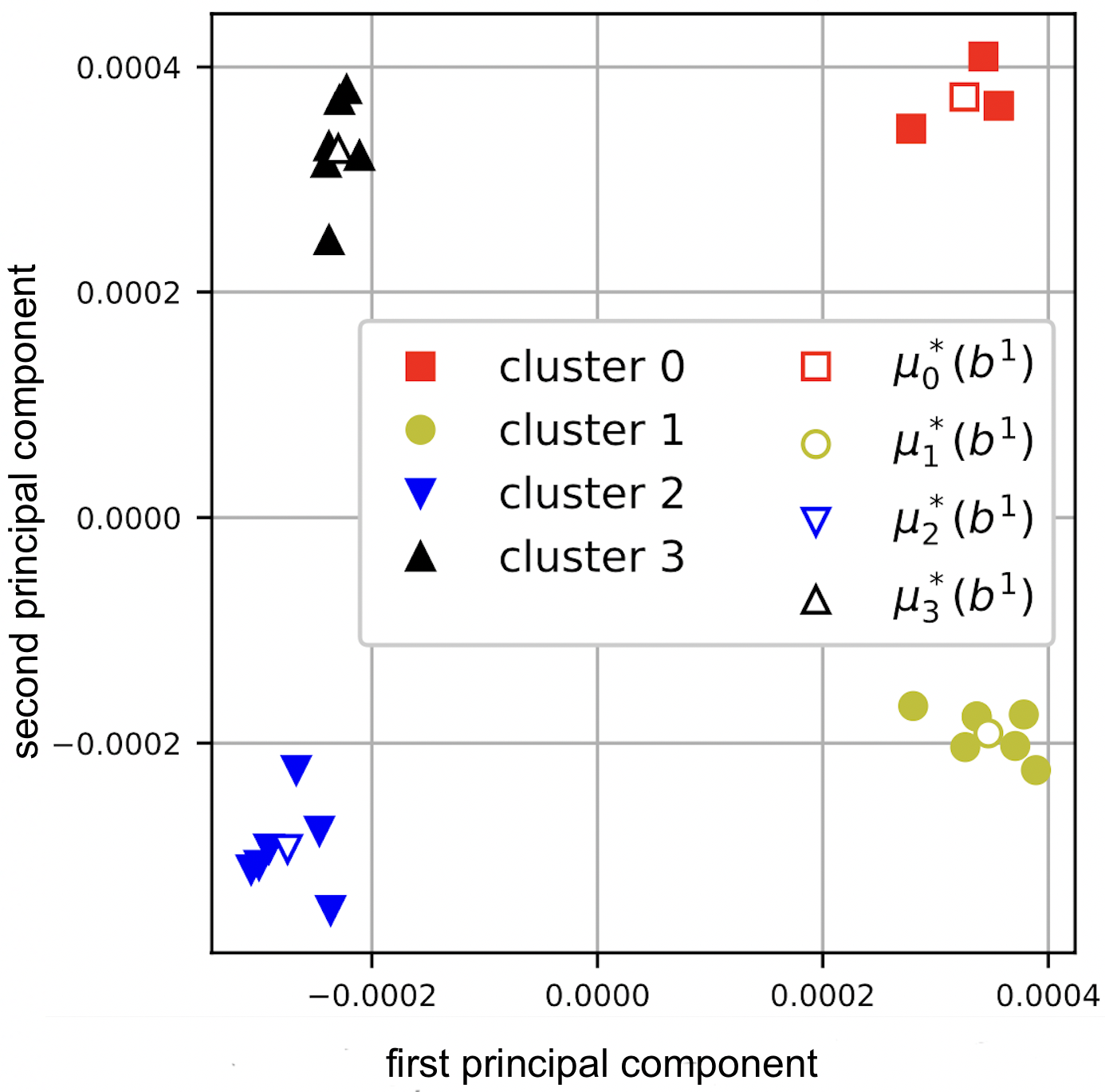}
\caption{PCA visualization of updates $\{\Delta \Tilde{\mathbf{\thetav}}_i^1\}_{i=1}^n$ on 2D space. \textbf{Left:} $\epsilon_i=10$, $b_i^e=32$ for all $i$ and $e$. \textbf{Right:} $\epsilon_i=10$, $b_i^1=b^1=N=6600$,\textbf{ i.e., full batch sizes} (assuming $N_i=N=6600$ for all clients), and  $b_i^{>1}=32$ for all $i$. The empty markers show the centers of the Gaussian components. The model updates are obtained from clients running \DPSGD for $K=1$ epochs locally on CIFAR10 with covariate shift (rotation) across clusters, and under the same values as in \cref{fig:var1var2}.
} 
\label{fig:client_updates_differentbatch}
\end{figure*}

\begin{figure*}[t]
\centering
\includegraphics[width=0.8\columnwidth, height=5cm]{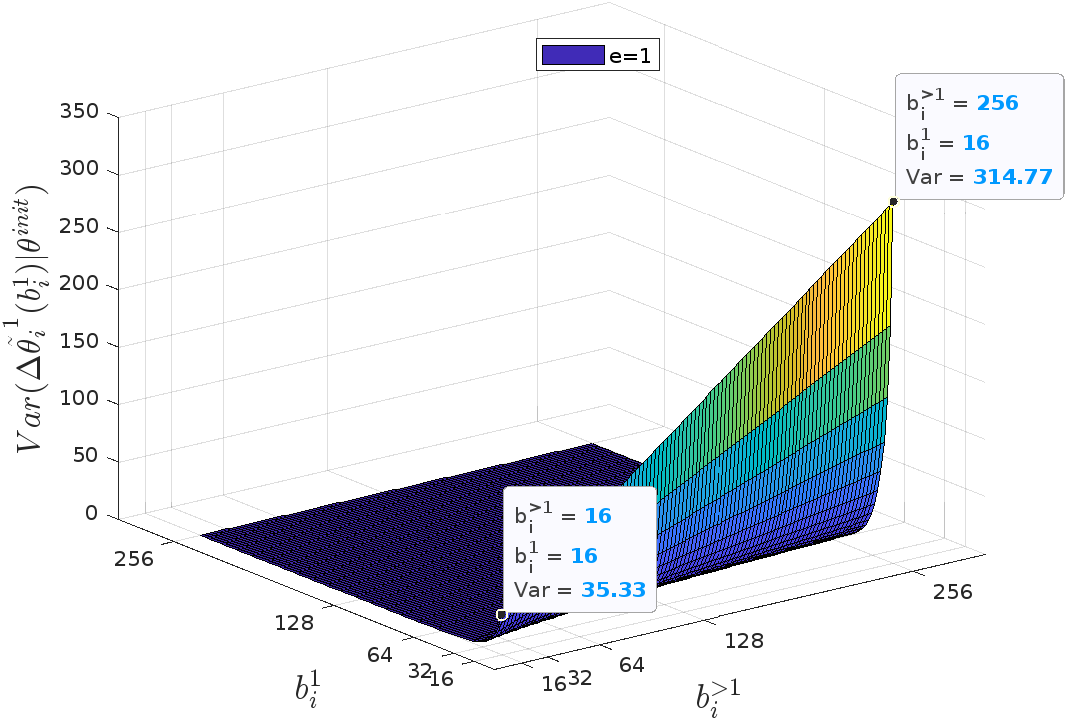}
\includegraphics[width=0.8\columnwidth, height=5cm]{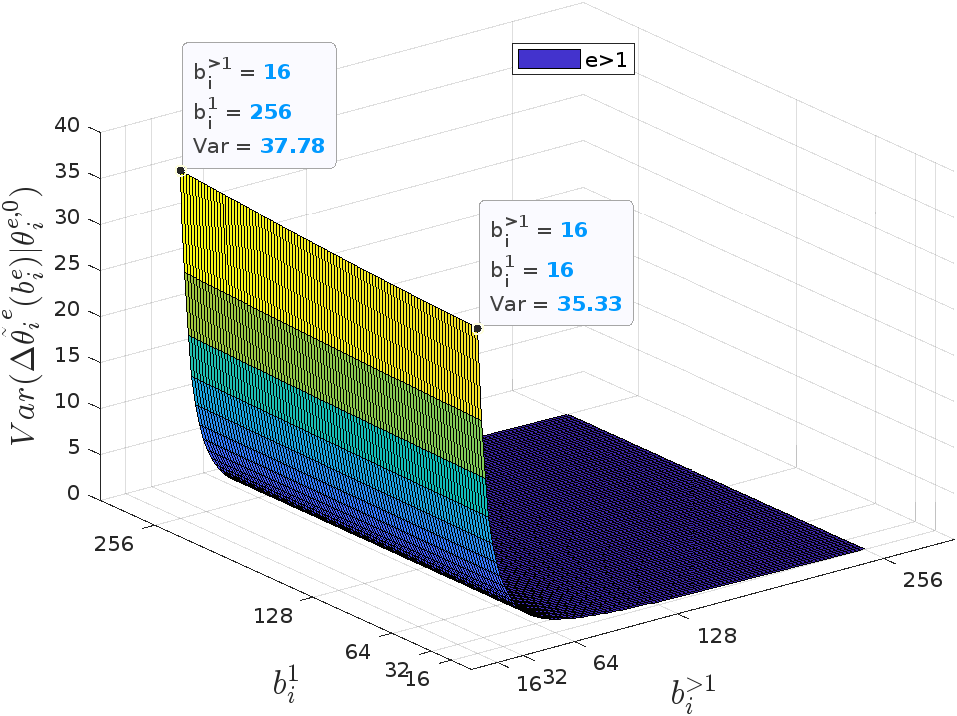}
\caption{Plot of $\texttt{Var}(\Delta \Tilde{\mathbf{\thetav}}_i^1(b_i^1)|\mathbf{\thetav}_{i}^{init})$ (\textbf{left}) and $\texttt{Var}(\Delta \Tilde{\mathbf{\thetav}}_i^e(b_i^e)|\mathbf{\thetav}_{i}^{e,0})$ $(e>1)$ (\textbf{right}) v.s. both $b_i^1$ and $b_i^{>1}$. There are two clear takeaways: 1) for all $e\in \{1, \cdots, E\}$, $\texttt{Var}(\Delta \Tilde{\mathbf{\thetav}}_i^e(b_i^e)|\mathbf{\thetav}_{i}^{e,0})$ decreases with $b_i^e$ steeply (from \cref{lemma:updatesnoise}). 2) The effect of $b_i^{>1}$ on $\texttt{Var}(\Delta \Tilde{\mathbf{\thetav}}_i^1(b_i^1)|\mathbf{\thetav}_{i}^{init})$ (left figure) is considerable. The reason is that $b_i^{>1}$ is used in $E-1$ rounds and affects the noise scale $z_i(\epsilon, \delta, b_i^1, b_i^{>1}, N_i, K, E)$ used by \DPSGD: see \cref{fig:zvsq} in the appendix for the plot of $z_i(\epsilon, \delta, b_i^1, b_i^{>1}, N_i, K, E)$ v.s. $b_i^1$ and $b_i^{>1}$. The results are obtained on CIFAR10 from Renyi-\DP Accountant \citep{ mironov2019renyidifferentialprivacysampled} in a setting with $N_i=6600, \epsilon=5, \delta=10^{-4}, c=3, K=1, E=200, p=11,181,642, \eta_l=5\times 10^{-4}$.} 
\label{fig:var1var2}
\vspace{-1em}
\end{figure*}

\subsection{\algname{R-DPCFL} algorithm}

Our proposed \algname{R-DPCFL} algorithm has three main steps (also see \Cref{fig:security_model}, right and \Cref{alg:R-DPCFL}):
\begin{enumerate}
    \item In the first round, clients train the initial model $\thetav^{\textit{init}}$ locally. They use \textbf{full batch sizes} in this round to make their model updates $\{\Delta \Tilde{\mathbf{\thetav}}_i^1\}_{i=1}^n$ less noisy \footnote{Note that even when clients have a limited memory budget, they can still perform \DPSGD with full batch size and no computational overhead by using gradient accumulation technique (see \Cref{app:grad_acc}).}. Then, the server \textbf{soft clusters them by learning \GMM on their model updates}. The number of clusters ($\mathbf{M}$) is either given or can be found by maximizing the confidence of the learned \GMM (\cref{sec:applicability}).
    \item During the subsequent rounds $e \in \{2, \ldots, E_c\}$, the server uses the learned \GMM to \textbf{soft-cluster clients}: client $i$ uses a small batch size $b_i^{>1}$ and contributes to the training of each cluster ($m$) model proportional to the probability of its assignment to that cluster ($\pi_{i,m}$). \textbf{The number of rounds $E_c$ is set based on ``confidence level" of the learned \GMM (\cref{sec:applicability}).} 

    \item After the first $E_c$ rounds, some progress has been made in the training of the cluster models $\{\thetav_m^{E_c}\}_{m=1}^M$. Now, clients' train loss values on cluster models are meaningful and is the right time to \textbf{hard cluster clients based on their loss values} during the remaining rounds to build more personalized models per cluster:
    $R^e(i) = \argmin_m f_i(\thetav_m^{e})$. 
\end{enumerate}

In Sections \ref{sec:batchsz} and \ref{sec:cvgce}, we provide theoretical justifications for the usage of full batch size in the first round.

\subsection{Reducing \GMM uncertainty via using full batch sizes in the first round and small batch sizes in the subsequent rounds}
\label{sec:batchsz}

It is the \DP noise in $\{\Delta \Tilde{\mathbf{\thetav}}_i^1\}_{i=1}^n$ that makes it hard for the server to cluster clients by learning a \GMM on the model updates. \Cref{lemma:updatesnoise} extends a result in \citep{malekmohammadi2024noiseawarealgorithmheterogeneousdifferentially} and shows that the amount of noise in $\Delta \Tilde{\mathbf{\thetav}}_i^e$ at the end of each round $e$ rapidly drops when $b_i^e$ increases.

\begin{restatable}{lem}{batchsizeeffectonnoise}

Let $\approx$ denote approximation. After $K$ local epochs with step size $\eta_l$, client $i$ generates the noisy \DP model update $\Delta \Tilde{\thetav}_i^e(b_i^e)$ at the end of the round $e$. The amount of noise in the resulting model update can be found as:


\vspace{-2.5em}
\begin{align}\label{eq:sigma_i^2}
    \sigma_i^{e^2}(b_i^e) &:= \texttt{Var}(\Delta \Tilde{\mathbf{\thetav}}_i^e(b_i^e)|\mathbf{\thetav}_{i}^{e,0})\nonumber \\
    & \approx K \cdot N_i \cdot \eta_l^2 \cdot \frac{p c^2 z_i^2(\epsilon, \delta, b_i^1, b_i^{>1}, N_i, K, E)}{b_i^{e^3}}.
\end{align}
\vspace{-2em}
\label{lemma:updatesnoise}
\end{restatable}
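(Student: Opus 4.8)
The plan is to unroll one \DPSGD round into its individual gradient steps and track how the injected Gaussian noise accumulates. Write $\thetav_i^{e,0}$ for the model at which client $i$ starts round $e$. With dataset size $N_i$ and batch size $b_i^e$ there are approximately $N_i/b_i^e$ minibatch updates per local epoch, hence $T := K N_i / b_i^e$ updates in total; denote the iterates $\thetav_i^{e,0}, \thetav_i^{e,1}, \ldots, \thetav_i^{e,T}$, so that the model update returned to the server is $\Delta\Tilde{\thetav}_i^e(b_i^e) = \thetav_i^{e,T} - \thetav_i^{e,0} = -\eta_l \sum_{t=0}^{T-1} \Tilde{g}_i^{e,t}(\thetav_i^{e,t})$. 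By \eqref{eq:noisy_sg} each term decomposes as $\Tilde{g}_i^{e,t}(\thetav_i^{e,t}) = g_i^{e,t}(\thetav_i^{e,t}) + \tfrac{1}{b_i^e}\xi_i^{e,t}$, where $g_i^{e,t}$ is the (clipped) stochastic gradient of \cref{assump:_boundedvariance} and the $\xi_i^{e,t}\sim\mathcal N(0,\sigma_{i,\texttt{DP}}^2\mathbb I_p)$ are mutually independent and independent of the data sampling, with $\sigma_{i,\texttt{DP}} = c\, z_i(\epsilon,\delta,b_i^1,b_i^{>1},N_i,K,E)$.

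Next I would isolate the contribution of the \DP noise. Since the $p$ coordinates of each $\xi_i^{e,t}$ are i.i.d.\ $\mathcal N(0,\sigma_{i,\texttt{DP}}^2)$, our convention that $\texttt{Var}(\cdot)$ is the trace of the covariance gives $\texttt{Var}\!\big(\tfrac{\eta_l}{b_i^e}\xi_i^{e,t}\big) = \eta_l^2\, p\,\sigma_{i,\texttt{DP}}^2 / b_i^{e^2} = \eta_l^2\, p c^2 z_i^2 / b_i^{e^2}$. Because these $T$ noise vectors are independent, the component of $\Delta\Tilde{\thetav}_i^e$ coming purely from \DP noise has total variance $T \cdot \eta_l^2 p c^2 z_i^2 / b_i^{e^2}$, and substituting $T \approx K N_i / b_i^e$ produces exactly the right-hand side of \eqref{eq:sigma_i^2}.

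Finally I would argue that everything else is lower order, which is what the symbol $\approx$ absorbs. The conditional variance $\texttt{Var}(\Delta\Tilde{\thetav}_i^e(b_i^e)\mid\thetav_i^{e,0})$ additionally contains (i) the minibatch-sampling variance, bounded via \cref{assump:_boundedvariance} by $\sum_{t=0}^{T-1}\eta_l^2\sigma_{i,g}^2(b_i^e) = K N_i \eta_l^2 \sigma_{i,g}^2(b_i^e)/b_i^e$, which is negligible relative to the \DP term whenever $p c^2 z_i^2 / b_i^{e^2} \gg \sigma_{i,g}^2(b_i^e)$ --- the relevant regime here, given the large ambient dimension $p$ and the non-trivial noise scale $z_i$ (cf.\ \cref{fig:var1var2}); and (ii) cross terms together with the effect of a noise injection at step $t$ propagating through the later gradient maps $\thetav\mapsto g_i^{e,t'}(\thetav)$ for $t'>t$. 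Item (ii) is the genuine source of the approximation: linearizing $g_i^{e,t'}$ around the noiseless trajectory shows that such a perturbation is transported by Jacobians of the form $\prod_s (\mathbb I - \eta_l \nabla^2 f_i(\cdot))$ and therefore enters the variance only at order $\eta_l^2$ beyond the leading term, i.e.\ it is second order in the (small) step size. Keeping only the leading term yields \eqref{eq:sigma_i^2}. I expect step (ii) to be the main obstacle to a fully rigorous statement: controlling the amplification of injected noise along the iteration would require a smoothness assumption on $f_i$ and careful bookkeeping of these Jacobian products along the path, which is exactly why the result is phrased with $\approx$, extending the first-order analysis of \citep{malekmohammadi2024noiseawarealgorithmheterogeneousdifferentially}.
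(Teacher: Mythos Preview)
Your proposal is correct and follows essentially the same route as the paper: compute the per-step variance of the \DP-noised batch gradient (dominated by $pc^2z_i^2/b_i^{e^2}$ because $p\gg 1$), then multiply by the $KN_i/b_i^e$ local steps in the round. The paper additionally splits into two illustrative cases according to whether the clipping threshold is active, but is otherwise \emph{less} careful than you are about the cross-step propagation issue in your item~(ii), which it simply writes as the equality $\texttt{Var}[\Delta \Tilde{\mathbf{\thetav}}_i^e\mid\mathbf{\thetav}_{i}^{e,0}] = E^e_i \cdot \eta_l^2 \cdot \sigma_{i, \Tilde{g}}^2(b_i^e)$ without comment.
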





The first conclusion from the lemma is that the noise level in $\Delta \Tilde{\mathbf{\thetav}}_i^e$ rapidly declines as $b_i^e$ increases: See \cref{fig:var1var2} and the effect of batch size $b_i^1$ on $\texttt{Var}(\Delta \Tilde{\mathbf{\thetav}}_i^1|\mathbf{\thetav}_{i}^{init})$ (on the left); and the effect of batch size $b_i^{>1}$ on $\texttt{Var}(\Delta \Tilde{\mathbf{\thetav}}_i^e|\mathbf{\thetav}_{i}^{e,0})$ ($e>1$) (on the right). Let us consider $e=1$ especially: If all clients use full batch sizes in the first round (i.e., $b_i^1=N_i$ for every client $i$), it becomes much easier for the server to cluster them at the end of the first round by learning a \GMM on $\{\Delta \Tilde{\mathbf{\thetav}}_i^1\}_{i=1}^n$, as their updates become more separable. An illustration of this is shown in \Cref{fig:client_updates_differentbatch}. As the second key takeaway, \cref{fig:var1var2}, left shows that \textbf{in order to make $\{\Delta \Tilde{\mathbf{\thetav}}_i^1\}_{i=1}^n$ less noisy, we have to make $\{b_i^1\}_{i=1}^n$ as large as possible and also keep $\{b_i^{>1}\}_{i=1}^n$ small\footnote{In fact, there is a close relation between the result of \cref{lemma:updatesnoise} and the law of large numbers. See \cref{app:lolns} for more details}}. In the next section, we will provide a theoretical justification for the observation in \Cref{fig:client_updates_differentbatch}.

\subsubsection{Effect of batch sizes $\{b_i^1\}_{i=1}^n$ on the separation between clusters}\label{subsec:batchsz1}
In order to theoretically understand the reason behind the observation in \Cref{fig:client_updates_differentbatch}, let us assume clients have the same dataset sizes and first batch sizes for simplicity: $\forall i: N_i = N, b_i^1=b^1$. Also, remember that $\mathbf{\thetav}_{i}^{1,0} = \mathbf{\thetav}^{\textit{init}}$. Having uniform privacy parameters $(\epsilon, \delta)$, we have: $\forall i: ~\sigma_i^{1^2} (b^1) := \texttt{Var}[\Delta \Tilde{\mathbf{\thetav}}_i^1(b^1)|\mathbf{\thetav}^{\textit{init}}] = \sigma^{1^2}(b^1)$. Hence, we can consider the model updates $\{\Delta \Tilde{\mathbf{\thetav}}_i^1(b^1)\}_{i=1}^n$ as the samples from a mixture of $M$ Gaussians with mean, covariance matrix, prior probability parameters: $\psi^*(b^1) = \{\mu_m^*(b^1), \Sigma_m^*(b^1), \alpha_m^*\}_{m=1}^M$, where $\forall m: \alpha_m^*>0$ and $\mu_m^*(b^1) \neq \mu_{m'}^*(b^1)$ ($m\neq m'$), due to data heterogeneity:

\vspace{-1em}
\begin{align}
    \mu_m^*(b^1) &:= \mathbb{E}\bigg[\Delta \Tilde{\thetav}_i^1(b_i^1)\bigg|\mathbf{\thetav}^{\textit{init}}, b_i^1=b^1 , s(i)=m\bigg],
    \end{align}
\begin{align}\label{eq:diagonal_covariance}
    \Sigma_m^*(b^1) := ~~\mathbb{E}\bigg[&\big(\Delta \Tilde{\thetav}_i^1(b_i^1) - \mu_m^*(b^1)\big) \big(\Delta \Tilde{\thetav}_i^1(b_i^1) - \mu_m^*(b^1)\big)^\top \nonumber \\ & \bigg|\mathbf{\thetav}^{\textit{init}}, b_i^1=b^1, s(i)=m\bigg] = \frac{\sigma^{1^2}(b^1)}{p} \mathbb{I}_p,
\vspace{-1em} 
\end{align}

where the last equality is from $\texttt{Var}[\Delta \Tilde{\mathbf{\thetav}}_i^1|\mathbf{\thetav}^{\textit{init}}, b_i^1=b^1] = \mathbb{E}[\|\Delta \Tilde{\thetav}_i^1 - \mu_{s(i)}^*(b^1)\|^2] = \sigma^{1^2}(b^1) $ and that the noises existing in each of the $p$ elements of $\Delta \Tilde{\mathbf{\thetav}}_i^1$ are \emph{i.i.d} (hence, $\Sigma_m^*(b^1)$ is a diagonal covariance matrix with equal diagonal elements). Intuitively, we expect more separation between the true Gaussian components $\{\mathcal{N}\big(\mu_m^*(b^1), \Sigma_m^*(b^1)\textbf{}\big)\}_{m=1}^M$, from which clients' updates $\{\Delta \Tilde{\mathbf{\thetav}}_i^1\}_{i=1}^n$ are sampled, to make the model updates more distinguishable for server. Next, we show that the overlap between the Gaussian components $\{\mathcal{N}(\mu_m^*(b^1), \Sigma_m^*(b^1))\}_{m=1}^M$ decreases \emph{fast} with $b^1$:








\begin{restatable}{lem}{batchsizeeffect}
Let $\Delta_{m,m'}(b^1):=\|\mu^*_m(b^1) - \mu^*_{m'}(b^1)\|$ when $\forall i: b_i^1=b^1$. The overlap between components $\mathcal{N}\big(\mu_m^*(b^1), \Sigma_m^*(b^1)\textbf{}\big)$ and $\mathcal{N}\big(\mu_{m'}^*(b^1), \Sigma_{m'}^*(b^1)\textbf{}\big)$ is $O_{m,m'}=2Q(\frac{\sqrt{p}\Delta_{m,m'}(b^1)}{2\sigma^1(b^1)})$, where $\sigma^{1^2}(b^1) := \texttt{Var}[\Delta \Tilde{\mathbf{\thetav}}_i^1|\mathbf{\thetav}^{\textit{init}}, b_i^1=b^1]$ and $Q(\cdot)$ is the $Q$ function. Furthermore, if we increase $b_i^1=b^1$ to $b_i^1=kb^1\leq N$ (for all $i$), we have $O_{m,m'}\leq 2Q(\frac{\sqrt{kp}\Delta_{m,m'}(b^1)}{2\rho \sigma^1(b^1)})$, where $1\leq \rho \in \mathcal{O}(1)$ is a small constant. 
\label{lemma:localdp}
\end{restatable}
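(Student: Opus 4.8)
The plan is to first establish the overlap formula for two equal-covariance, isotropic Gaussians and then track how the argument of the $Q$ function scales when the first-round batch size is multiplied by $k$. For the first part, I would recall that for two Gaussians $\mathcal{N}(\mu_m^*, \Sigma)$ and $\mathcal{N}(\mu_{m'}^*, \Sigma)$ with common covariance $\Sigma = \frac{\sigma^{1^2}(b^1)}{p}\mathbb{I}_p$, the Bayes-optimal decision boundary is the hyperplane bisecting the segment joining the two means, orthogonal to $\mu_m^* - \mu_{m'}^*$. Projecting onto the unit vector $u = (\mu_m^* - \mu_{m'}^*)/\Delta_{m,m'}(b^1)$ reduces the problem to one dimension: each component becomes a univariate Gaussian with variance $u^\top \Sigma u = \sigma^{1^2}(b^1)/p$ and the two means are separated by $\Delta_{m,m'}(b^1)$. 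The misclassification mass contributed by each component is then $Q\!\big(\frac{\Delta_{m,m'}(b^1)/2}{\sqrt{\sigma^{1^2}(b^1)/p}}\big) = Q\!\big(\frac{\sqrt{p}\,\Delta_{m,m'}(b^1)}{2\sigma^1(b^1)}\big)$, and summing the two symmetric tails gives the overlap $O_{m,m'} = 2Q\!\big(\frac{\sqrt{p}\,\Delta_{m,m'}(b^1)}{2\sigma^1(b^1)}\big)$, which is the claimed identity.

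For the second part, I would examine separately how the numerator and denominator of the $Q$-argument respond to replacing $b^1$ by $kb^1$. The denominator is governed by $\sigma^1(b^1)$, and here I invoke \Cref{lemma:updatesnoise}: since $\sigma_i^{1^2}(b_i^1) \approx K N_i \eta_l^2 p c^2 z_i^2 / (b_i^1)^3$, scaling $b^1 \to kb^1$ multiplies the variance by $k^{-3}$ — except that $z_i$ itself depends on $b_i^1$; because the first round contributes only one of $K E$ total composition steps and $b_i^{>1}$ is held fixed, the noise scale $z_i$ changes only mildly, which is exactly what the constant $\rho \in \mathcal{O}(1)$, $\rho \geq 1$, absorbs. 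Concretely, one writes $\sigma^1(kb^1) = \rho' k^{-3/2}\sigma^1(b^1)$ for some $\rho' = \rho'(k) \geq$ (something close to $1$), and bounds $\rho'$ by a $k$-independent constant $\rho$ using monotonicity of $z_i$ in $b_i^1$. For the numerator, one needs $\Delta_{m,m'}(kb^1) \geq \Delta_{m,m'}(b^1)$, or at worst that it does not shrink faster than a constant factor; intuitively the cluster-mean separation comes from the deterministic (clipped, averaged) gradient signal, which if anything becomes cleaner with larger batches, so $\Delta_{m,m'}$ is essentially non-decreasing in $b^1$ and can be folded into $\rho$ as well. Combining, the argument of $Q$ goes from $\frac{\sqrt{p}\,\Delta_{m,m'}(b^1)}{2\sigma^1(b^1)}$ to at least $\frac{\sqrt{p}\,\Delta_{m,m'}(b^1)}{2\rho k^{-3/2}\sigma^1(b^1)} = \frac{k^{3/2}\sqrt{p}\,\Delta_{m,m'}(b^1)}{2\rho\sigma^1(b^1)} \geq \frac{\sqrt{kp}\,\Delta_{m,m'}(b^1)}{2\rho\sigma^1(b^1)}$ (using $k^{3/2} \geq k^{1/2}$ for $k \geq 1$), and since $Q$ is decreasing the overlap bound $O_{m,m'} \leq 2Q\!\big(\frac{\sqrt{kp}\,\Delta_{m,m'}(b^1)}{2\rho\sigma^1(b^1)}\big)$ follows.

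The main obstacle, and the step needing the most care, is controlling the dependence of the noise scale $z_i(\epsilon,\delta,b_i^1,b_i^{>1},N_i,K,E)$ on $b_i^1$ — this is what forces the constant $\rho$ into the statement rather than yielding a clean $k^{3/2}$ improvement. I would argue that because the privacy accountant composes $E$ rounds of which only the first uses batch size $b_i^1$ (with sampling rate $1$, i.e. full batch) and the remaining $E-1$ rounds use the fixed small $b_i^{>1}$, the marginal privacy cost of the first round is bounded and $z_i$ is monotonically bounded as $b_i^1$ ranges over $[b^1, N]$; hence $z_i(kb^1)/z_i(b^1)$ lies in a fixed interval $[\text{something} \le 1,\, \rho]$ independent of $k$, which is precisely the "$1 \leq \rho \in \mathcal{O}(1)$" claim. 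The secondary subtlety is justifying that $\Delta_{m,m'}$ does not degrade with $b^1$; a clean way is to note that $\mu_m^*(b^1)$ is (up to the deterministic local-SGD trajectory) proportional to an average of clipped per-sample gradients over cluster $m$'s population, whose expectation is independent of the batch size, so $\Delta_{m,m'}(b^1)$ converges (and is non-decreasing once batches are large enough) — again absorbable into $\rho$.
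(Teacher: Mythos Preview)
Your treatment of the first part --- the overlap formula via projection onto the line through the two means, reducing to a one-dimensional tail probability --- is correct and essentially the same as the paper's derivation.

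The second part contains a genuine gap in your handling of the numerator $\Delta_{m,m'}$. You assert that $\Delta_{m,m'}(kb^1)$ is ``essentially non-decreasing in $b^1$'' because the expected clipped per-sample gradient does not depend on the batch size. This overlooks that $\mu_m^*(b^1) = \mathbb{E}[\Delta\Tilde{\thetav}_i^1(b^1)\mid \thetav^{\textit{init}}, s(i)=m]$ is not a single batch gradient but the cumulative update after $K\cdot N/b^1$ local \DPSGD steps of size $\eta_l$. Increasing $b^1$ to $kb^1$ cuts the number of steps by a factor $k$, so $\|\mu_m^*(kb^1)\|$ --- and with it the separation $\Delta_{m,m'}(kb^1)$ --- \emph{shrinks} by roughly a factor $k$, not by a constant. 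The paper makes this explicit by writing $\Delta\Tilde{\thetav}_i^1(b^1) = k\,\Delta\Tilde{\thetav}_i^1(kb^1) + \upsilon_i$ and then arguing (with experimental support) that $\Delta_{m,m'}(kb^1) \approx \Delta_{m,m'}(b^1)/k$.

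With this correction the arithmetic becomes tight: combining $\Delta_{m,m'}(kb^1)\approx\Delta_{m,m'}(b^1)/k$ with your (correct) bound $\sigma^1(kb^1)\le\rho\,\sigma^1(b^1)/k^{3/2}$ gives a $Q$-argument of
\[
\frac{\sqrt{p}\,\big(\Delta_{m,m'}(b^1)/k\big)}{2\,\rho\,\sigma^1(b^1)/k^{3/2}} \;=\; \frac{\sqrt{kp}\,\Delta_{m,m'}(b^1)}{2\rho\,\sigma^1(b^1)},
\]
which is exactly the claimed bound, with no slack to spare. The step $k^{3/2}\ge k^{1/2}$ you invoke is therefore not available: the apparent extra factor of $k$ is precisely consumed by the $1/k$ loss in the numerator that your argument missed. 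Your handling of the denominator --- absorbing the mild dependence of the noise scale $z_i(\epsilon,\delta,b_i^1,b_i^{>1},N_i,K,E)$ on $b_i^1$ into the constant $\rho$, on the grounds that only one of $E$ rounds is affected --- is correct and matches the paper.
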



Note that for a batch size $b^1$, the terms $\Delta_{m,m'}(b^1)$ and $\sigma^1(b^1)$ represent the ``data heterogeneity level across clusters $m$ and $m'$" and ``privacy sensitivity of their clients", respectively. 
We define their ``separation score" as $\texttt{SS}(m,m') := \frac{\sqrt{p}\Delta_{m,m'}(b^1)}{2\sigma^1(b^1)} = \frac{\Delta_{m,m'}(b^1)}{2\sigma^1(b^1)/\sqrt{p}}$. The larger separation score $\texttt{SS}(m,m')$, the smaller their pairwise overlap $O_{m,m'}= 2Q(\texttt{SS}(m,m'))$. Based on the form of the Q function, an $\texttt{SS}(m,m')$ above 3 can be considered as a complete separation between the corresponding components.


\subsection{Convergence rate of \EM for learning \GMM}
\label{sec:cvgce}
Let us define the maximum pairwise overlap between the components in $\psi^*(b^1) = \{\mu_m^*(b^1), \Sigma_m^*(b^1), \alpha_m^*\}_{m=1}^M$, as $O^{\texttt{max}}(\psi^*(b^1)) = \max_{m,m'} O_{m,m'}(\psi^*(b^1))$. According to \Cref{lemma:localdp}, when $b^1$ is large enough, $O^{\texttt{max}}(\psi^*(b^1))$ decreases (like in \Cref{fig:client_updates_differentbatch}, right) and we can expect \EM to converge to the true \GMM parameters $\psi^*(b^1)$. Next, we analyze the local convergence rate of \EM around $\psi^*(b^1)$.





\begin{restatable}{theorem}{convrate}\citep{Ma2000AsymptoticCR}
Given model updates $\{\Delta \Tilde{\mathbf{\thetav}}_i^1(b^1)\}_{i=1}^n$, as samples from a true mixture of Gaussians $\psi^*(b^1) =\{\mathcal{N}\big(\mu_m^*(b^1), \Sigma_m^*(b^1)\big), \alpha_m^*\}_{m=1}^M$, if $O^{\texttt{max}}(\psi^*(b^1))$ is small enough, then:

\begin{align}
\vspace{-3em} 
    \lim_{r\to\infty} \frac{\|\psi^{r+1}-\psi^*(b^1)\|}{\|\psi^r-\psi^*(b^1)\|} = o\bigg(\big[O^{\texttt{max}}(\psi^*(b^1))\big]^{0.5-\gamma}\bigg), 
    \vspace{-3em} 
\end{align}

as $n$ increases. $\psi^r$ is the \GMM parameters returned by \EM after $r$ iterations. $\gamma$ is an arbitrary small positive number, and $o(x)$ means it is a higher order infinitesimal as $x \to 0: \lim_{x \to 0} \frac{o(x)}{x}=0$. 
\label{theorem:convrate}
\end{restatable}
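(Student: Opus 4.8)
This theorem restates the asymptotic convergence result of \citep{Ma2000AsymptoticCR}, so the plan is to verify that the mixture $\psi^*(b^1)$ meets its hypotheses and to recall the structure of the argument. First I would write one \EM sweep as a fixed-point iteration $\psi^{r+1} = \mathcal{M}_n(\psi^r)$, where $\mathcal{M}_n$ is the sample \EM operator on the updates $\cbb{\Delta \Tilde{\mathbf{\thetav}}_i^1(b^1)}_{i=1}^n$: the E-step forms the posterior responsibilities $w_m(x)=\frac{\alpha_m\mathcal{N}(x;\mu_m,\Sigma_m)}{\sum_k \alpha_k \mathcal{N}(x;\mu_k,\Sigma_k)}$ and the M-step returns the responsibility-weighted empirical means, covariances, and mixing weights. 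When $O^{\texttt{max}}(\psi^*(b^1))$ is small and $n$ large, $\psi^*(b^1)$ is a strict local maximizer of the population log-likelihood, hence an attracting fixed point of $\mathcal{M}_n$, so the quantity in the displayed limit is the spectral radius of $J_n := \nabla \mathcal{M}_n(\psi^*(b^1))$, i.e. $\lim_{r\to\infty}\norm{\psi^{r+1}-\psi^*(b^1)}/\norm{\psi^r-\psi^*(b^1)} = \rho(J_n)$ (in general $\limsup$, with equality for generic initializations, which suffices here).

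Second, I would pass to the population limit in $n$. The coordinates of $\mathcal{M}_n$ and of $\nabla\mathcal{M}_n$ are empirical averages of bounded, smooth functions of the data over a neighborhood of $\psi^*(b^1)$, so by a uniform law of large numbers $J_n \to J_\infty := \nabla\mathcal{M}_\infty(\psi^*(b^1))$ almost surely, where $\mathcal{M}_\infty$ is the population \EM operator and $\psi^*(b^1)$ is exactly its fixed point; since the spectral radius is continuous, $\rho(J_n)\to\rho(J_\infty)$. This transfer is the content of the qualifier ``as $n$ increases'', and it reduces the theorem to the deterministic estimate $\rho(J_\infty) = o\big([O^{\texttt{max}}(\psi^*(b^1))]^{0.5-\gamma}\big)$.

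The heart of the proof is bounding $\norm{J_\infty}_{\mathrm{op}}$ by the overlap. The Dempster--Laird--Rubin identity writes $J_\infty = I - \mathcal{I}_{\mathrm{comp}}^{-1}\mathcal{I}_{\mathrm{obs}} = \mathcal{I}_{\mathrm{comp}}^{-1}\mathcal{I}_{\mathrm{miss}}$ in terms of the complete-, observed-, and missing-data Fisher information at $\psi^*(b^1)$; thus $J_\infty$ is similar to a PSD matrix and $\rho(J_\infty)$ equals the largest ``share of missing information''. Organizing $J_\infty$ into blocks indexed by the component parameters $(\mu_m^*(b^1),\Sigma_m^*(b^1),\alpha_m^*)$, the coupling between components $m$ and $m'$ is, up to bounded factors, the cross-responsibility integral $\int w_m^*(x)\,w_{m'}^*(x)\,p^*(x)\diff x$ with $w_m^*$ the population responsibility and $p^* = \sum_k \alpha_k^* \mathcal{N}(\cdot;\mu_k^*(b^1),\Sigma_k^*(b^1))$; since $w_m^* w_{m'}^*$ is exponentially small except on the region where $\mathcal{N}(\cdot;\mu_m^*,\Sigma_m^*)$ and $\mathcal{N}(\cdot;\mu_{m'}^*,\Sigma_{m'}^*)$ are comparable, a region carrying $\mathcal{O}(O_{m,m'})$ mass, this integral is $\mathcal{O}(O_{m,m'})$ by the same Gaussian-tail estimate that underlies \Cref{lemma:localdp} (it is at most a Bhattacharyya-type affinity between the two components, which for the equal isotropic covariances $\Sigma_m^*(b^1)=\frac{\sigma^{1^2}(b^1)}{p}\mathbb{I}_p$ equals $O_{m,m'}$ up to a polynomial prefactor). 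The diagonal blocks measure the information lost to soft rather than hard assignment within a component and are again $\mathcal{O}(O^{\texttt{max}})$; the covariance sub-block is the delicate one because of its quadratic dependence on the data, but the isotropic equal-covariance structure keeps all these estimates uniform in $p$. The slowest eigendirection mixes the covariance and mixing-weight coordinates across components and therefore inherits a square root of a product of such cross-overlaps, producing the exponent $0.5$; the arbitrarily small $\gamma$ is precisely what absorbs the polynomial prefactors in the Gaussian-tail bounds (the $1/x$ in $Q(x)\sim e^{-x^2/2}/(x\sqrt{2\pi})$ together with powers of $\Delta_{m,m'}(b^1)$, $\sigma^1(b^1)$, $p$). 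Collecting the block bounds gives $\rho(J_\infty) \le \norm{J_\infty}_{\mathrm{op}} = o\big([O^{\texttt{max}}(\psi^*(b^1))]^{0.5-\gamma}\big)$, which with the two previous paragraphs proves the claim.

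The main obstacle is this last spectral estimate: tracking the precise exponent through the Gaussian integrals, checking that the polynomial prefactors cost only the free parameter $\gamma$, and especially controlling the covariance sub-block together with its coupling to the mean and weight blocks --- this is the technical core of \citep{Ma2000AsymptoticCR}, somewhat simplified here by the special isotropic equal-covariance form of $\psi^*(b^1)$. A secondary point needing care is the interchange of the $r\to\infty$ and $n\to\infty$ limits, which requires $\mathcal{M}_n$ and $\nabla\mathcal{M}_n$ to converge uniformly near $\psi^*(b^1)$ (differentiation under the law of large numbers, valid since the summands are smooth with bounded derivatives on a neighborhood) so that the finite-sample rate $\rho(J_n)$ indeed transfers to the population rate $\rho(J_\infty)$.
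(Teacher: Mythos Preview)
Your proposal is correct and, in spirit, matches the paper's approach: the paper's own proof is a single sentence stating that the result follows directly from Theorem~1 of \cite{Ma2000AsymptoticCR} once one regards $\{\Delta \Tilde{\mathbf{\thetav}}_i^1(b^1)\}_{i=1}^n$ as samples from the mixture $\{\mathcal{N}(\mu_m^*(b^1),\Sigma_m^*(b^1)),\alpha_m^*\}_{m=1}^M$. You go well beyond this by actually sketching the mechanics of the Ma argument --- the fixed-point view of \EM, the DLR missing-information identity $J_\infty = \mathcal{I}_{\mathrm{comp}}^{-1}\mathcal{I}_{\mathrm{miss}}$, the cross-responsibility overlap bounds on the off-diagonal blocks, and the passage from $J_n$ to $J_\infty$ via a uniform LLN --- which the paper does not attempt. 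Your additional observation that the isotropic equal-covariance structure $\Sigma_m^*(b^1)=\frac{\sigma^{1^2}(b^1)}{p}\mathbb{I}_p$ simplifies the covariance sub-block is a nice specialization not mentioned in the paper. In short: the paper treats the theorem as a black-box citation, whereas you unpack that box; both are valid, and yours is strictly more informative.
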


This means that convergence rate of \EM around the true solution $\psi^*(b^1)$ is faster than how $\big[O^{\texttt{max}}(\psi^*(b^1))\big]^{0.5-\gamma}$ decreases with $b^1$ (from \Cref{lemma:localdp}).
\emph{Hence, as an important consequence, the computational complexity of learning the \GMM in the first round also decreases fast as $b^1$ increases}.

\begin{figure*}[t]
\centering
\includegraphics[width=0.4\columnwidth,height=2.5cm]{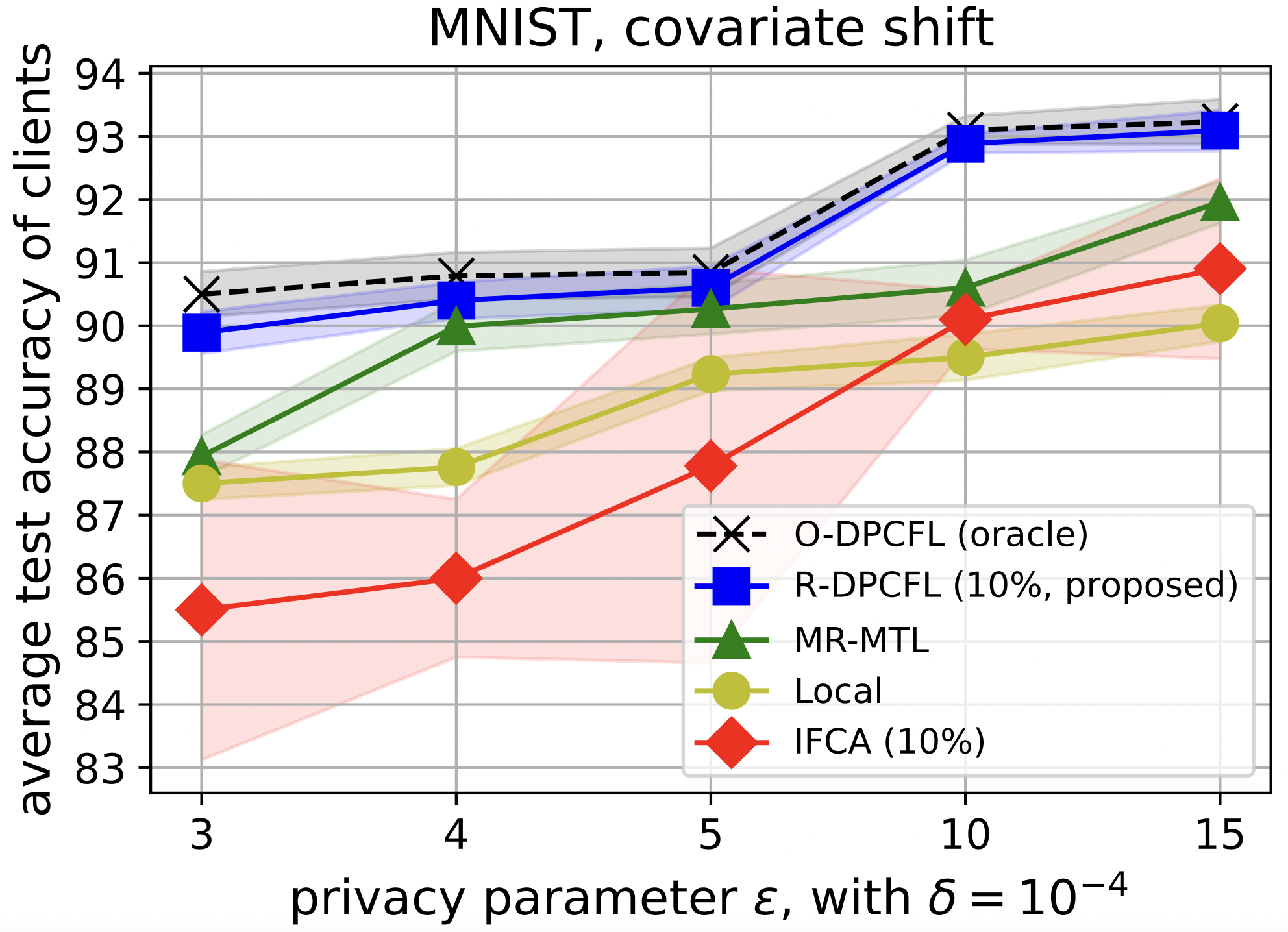}
\includegraphics[width=0.4\columnwidth,height=2.5cm]{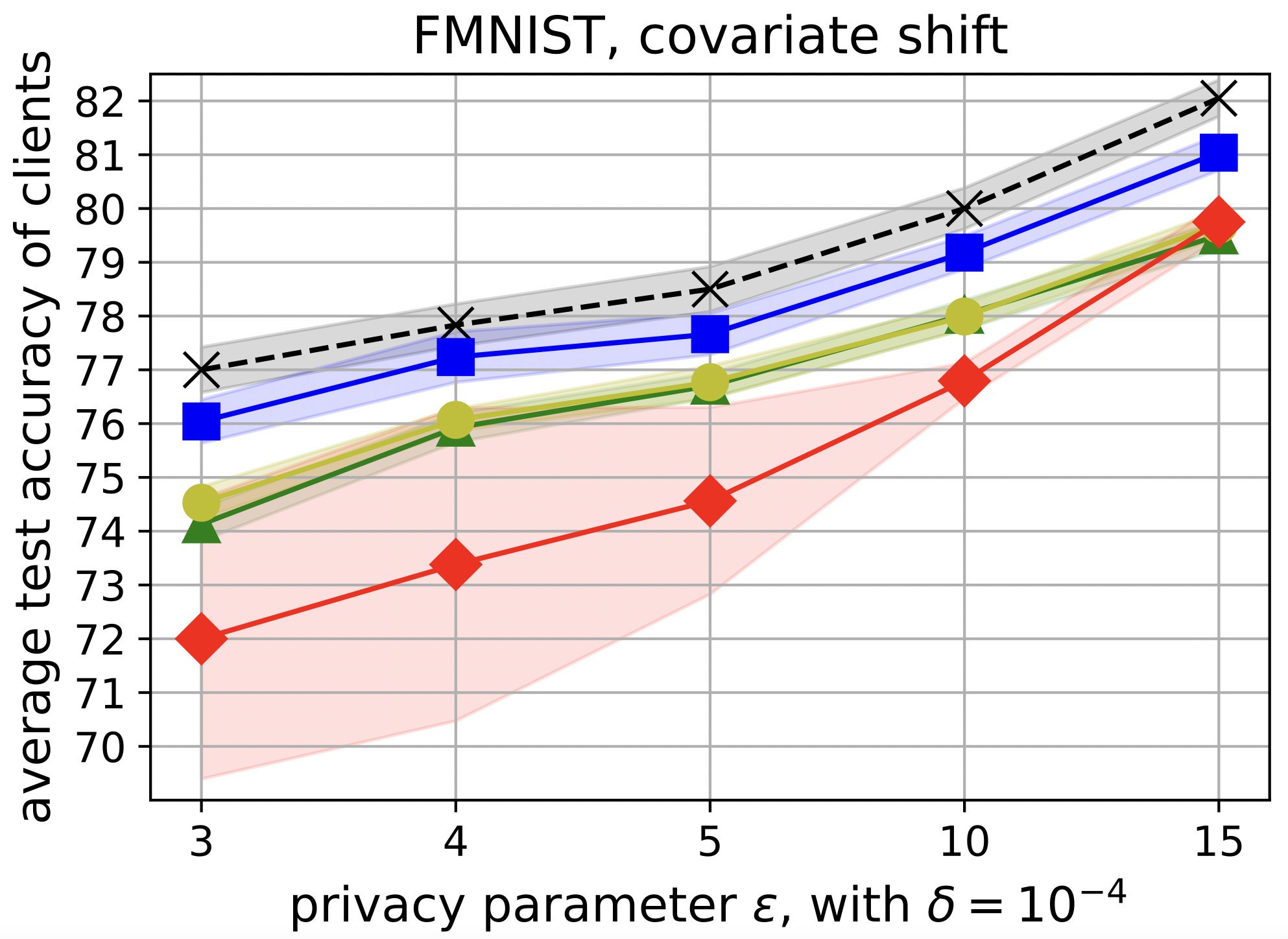}
\includegraphics[width=0.4\columnwidth,height=2.5cm]{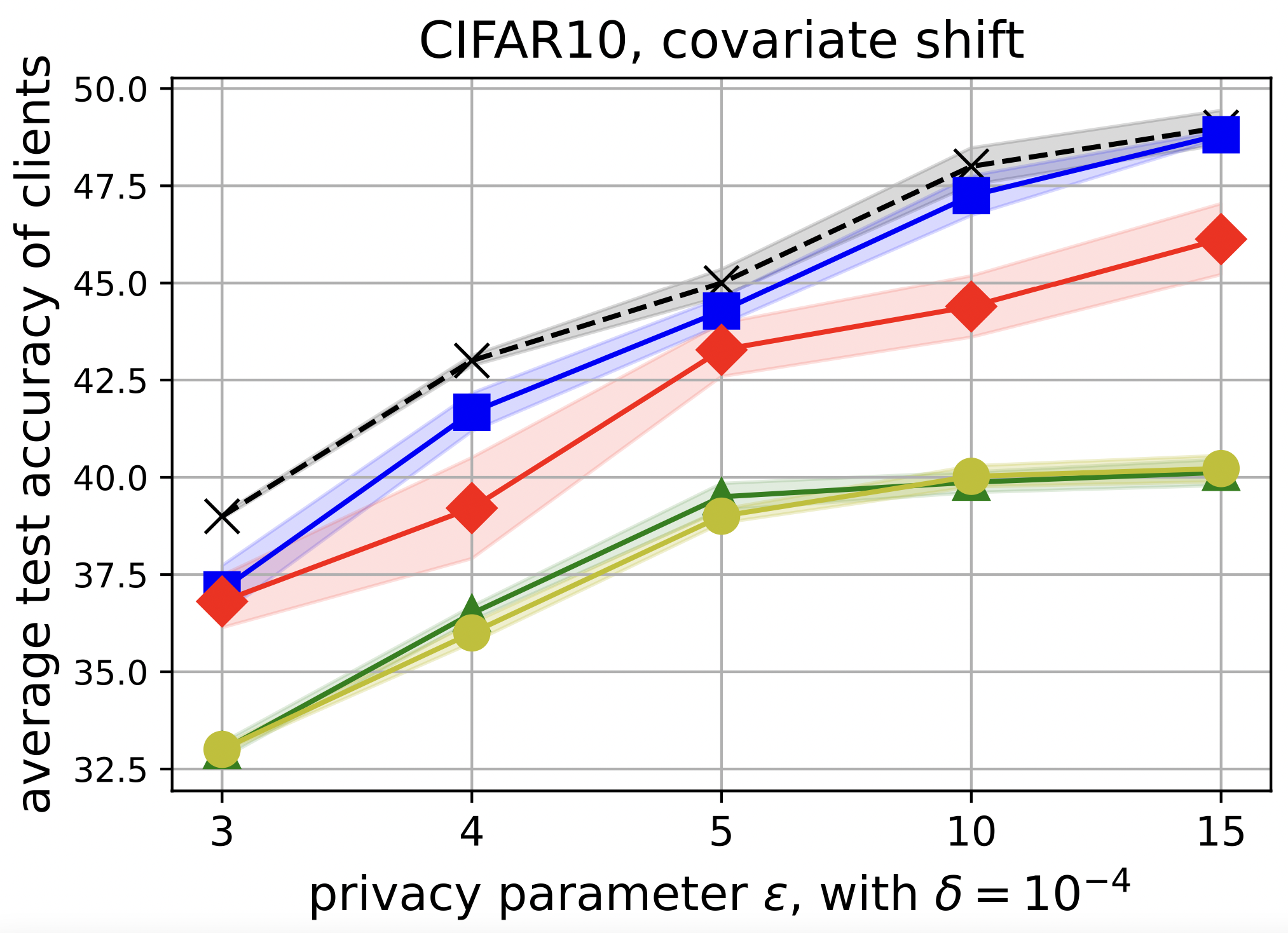}
\includegraphics[width=0.4\columnwidth,height=2.5cm]{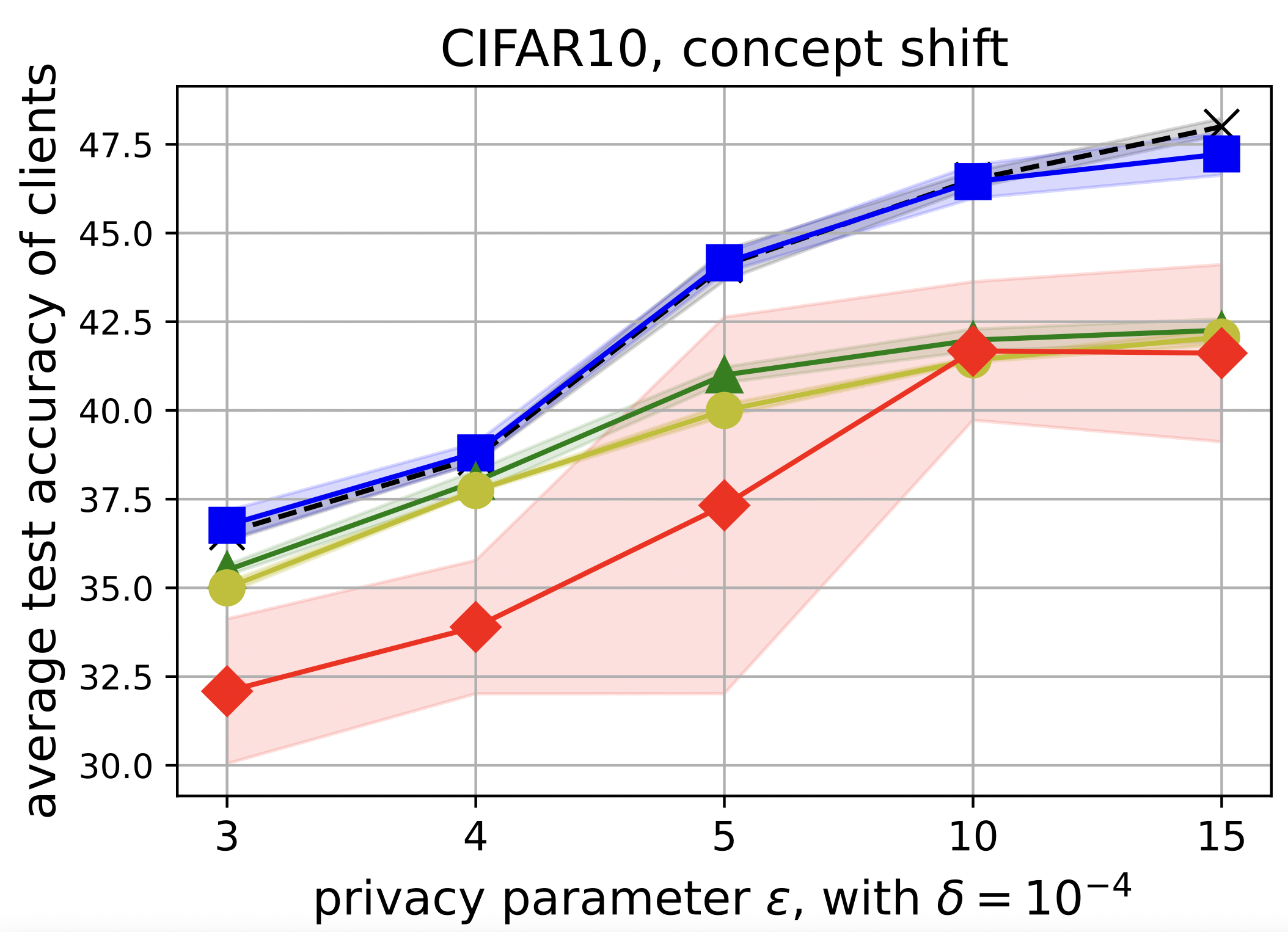}\\
\includegraphics[width=0.4\columnwidth,height=2.5cm]{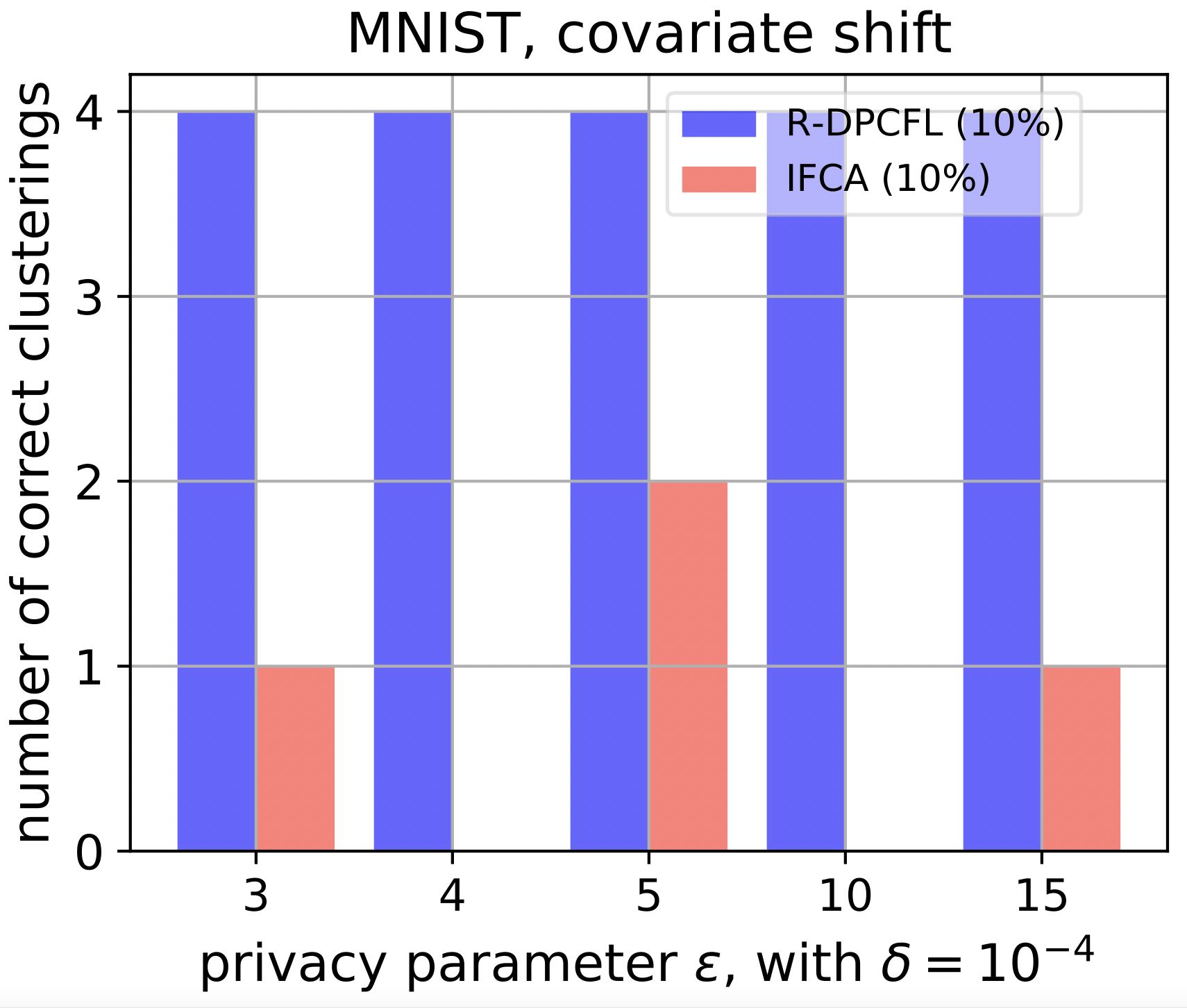}
\includegraphics[width=0.4\columnwidth,height=2.5cm]{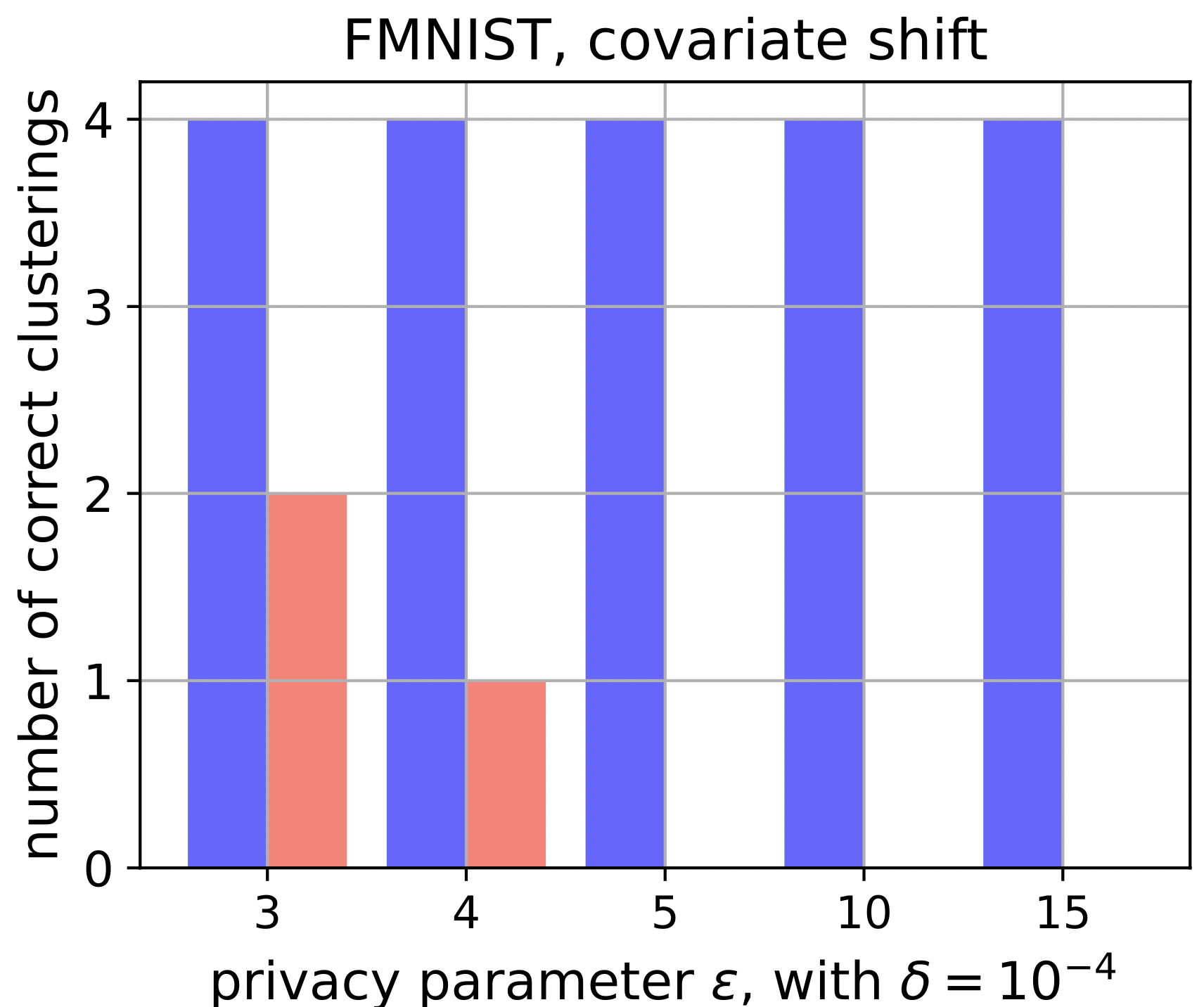}
\includegraphics[width=0.4\columnwidth,height=2.5cm]{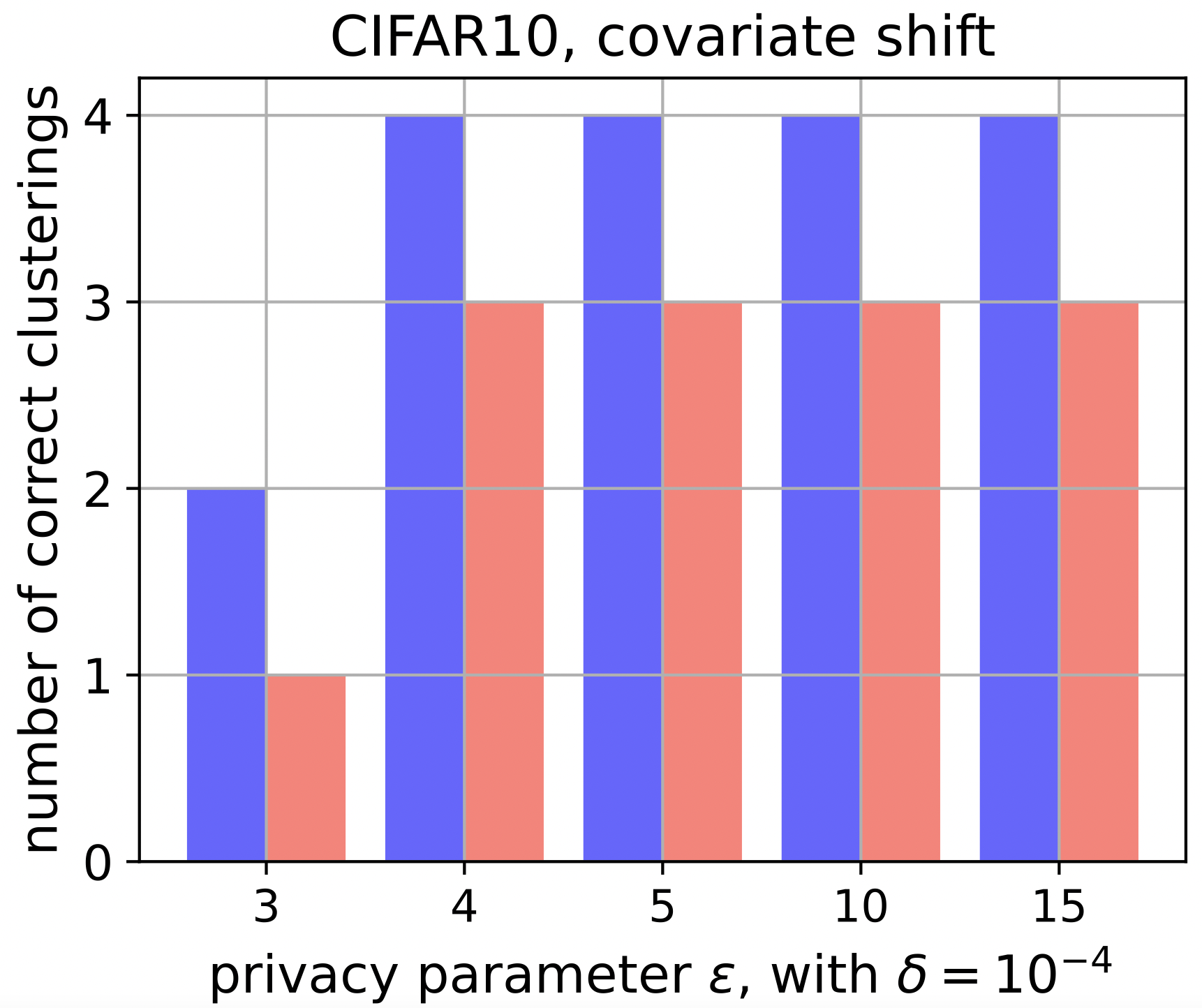}
\includegraphics[width=0.4\columnwidth,height=2.5cm]{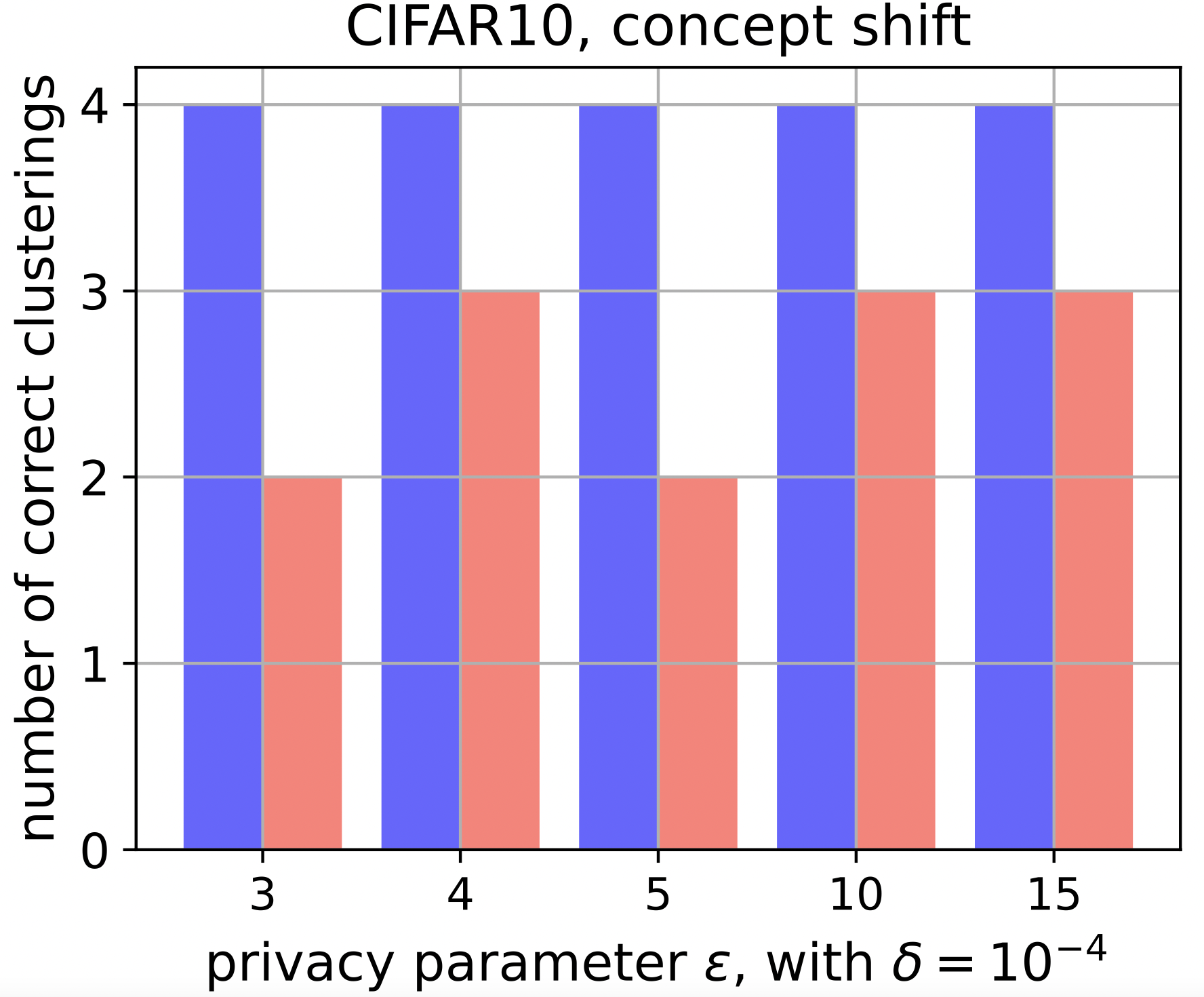}

\caption{\textbf{Top:} Average test accuracy across clients for different total privacy budgets $\epsilon$. Results are from four different runs. $10\%$ means performing local clustering by clients only in $10\%$ of the total number of rounds; i.e., rounds $E_c \leq e \leq E_c + \lfloor \frac{E}{10}\rfloor$ for \algname{R-DPCFL} and rounds $1 \leq e \leq 1+\lfloor \frac{E}{10}\rfloor$ for \algname{IFCA} (see \Cref{app:EMimplementation}). \textbf{\Cref{fig:avg_test_acc_allalgs} in the appendix includes the \algname{Global} baseline too}. \textbf{Bottom:} Number of times (out of 4 runs) that \algname{R-DPCFL} and \algname{IFCA} successfully detect the underlying cluster structure of all existing clients.}
\label{fig:avg_test_acc}
\vspace{-0.5em}
\end{figure*}

\subsection{Applicability of \algname{R-DPCFL}}\label{sec:applicability}

As we observed in \cref{lemma:localdp}, the separation score $\texttt{SS}(m,m')$ (the overlap $O_{m,m'}$) increases (decreases) as $b^1$ increases. Remember that $\texttt{SS}(m,m') = \frac{\Delta_{m,m'}(b^1)}{2\sigma^1(b^1)/\sqrt{p}}$, and note that $\sigma^{1^2}(b^1)/p$ is the value of diagonal elements of covariance matrices of Gaussian components, which the \GMM aims to learn (see \Cref{eq:diagonal_covariance}). Therefore, when the \GMM is learned, we can use its parameters to get an estimate score $\hat{\texttt{SS}}(m,m')$ for every pair of clusters $m$ and $m'$. Then, we can define the ``minimum pairwise separation score" as $\texttt{MSS}(\epsilon, \delta, \{b_i^1\}_{i=1}^n, \{b_i^{>1}\}_{i=1}^n) = \min_{m,m'} \hat{\texttt{SS}}(m,m') \in [0, +\infty)$ as a \textbf{measure of confidence} of the learned \GMM in its identified clusters. The larger the \texttt{MSS} of a learned \GMM, the more ``confident" it is in its clustering decisions. For instance, if we learn a \GMM on \cref{fig:client_updates_differentbatch} left, it will have a much smaller $\texttt{MSS}$ than when we learn a \GMM on \cref{fig:client_updates_differentbatch} right. We can similarly define the estimated ``maximum pairwise overlap" for a learned \GMM as $\texttt{MPO} = 2Q(\texttt{MSS}) \in [0,1)$, as a \textbf{measure of uncertainty} of the learned \GMM. 
We can use \texttt{MSS} and \texttt{MPO} of the learned \GMM to set the switching time $E_c$, batch sizes $\{b_i^{>1}\}_{i=1}^n$ and even the number of underlying clusters $M$ (when it is unknown).  We refer to \cref{app:RDPCFL_hparams} for a detailed explanation.


\section{Evaluation}\label{sec:evaluation}

\begin{figure*}[t]
\centering
\includegraphics[width=0.4\columnwidth,height=2.5cm]{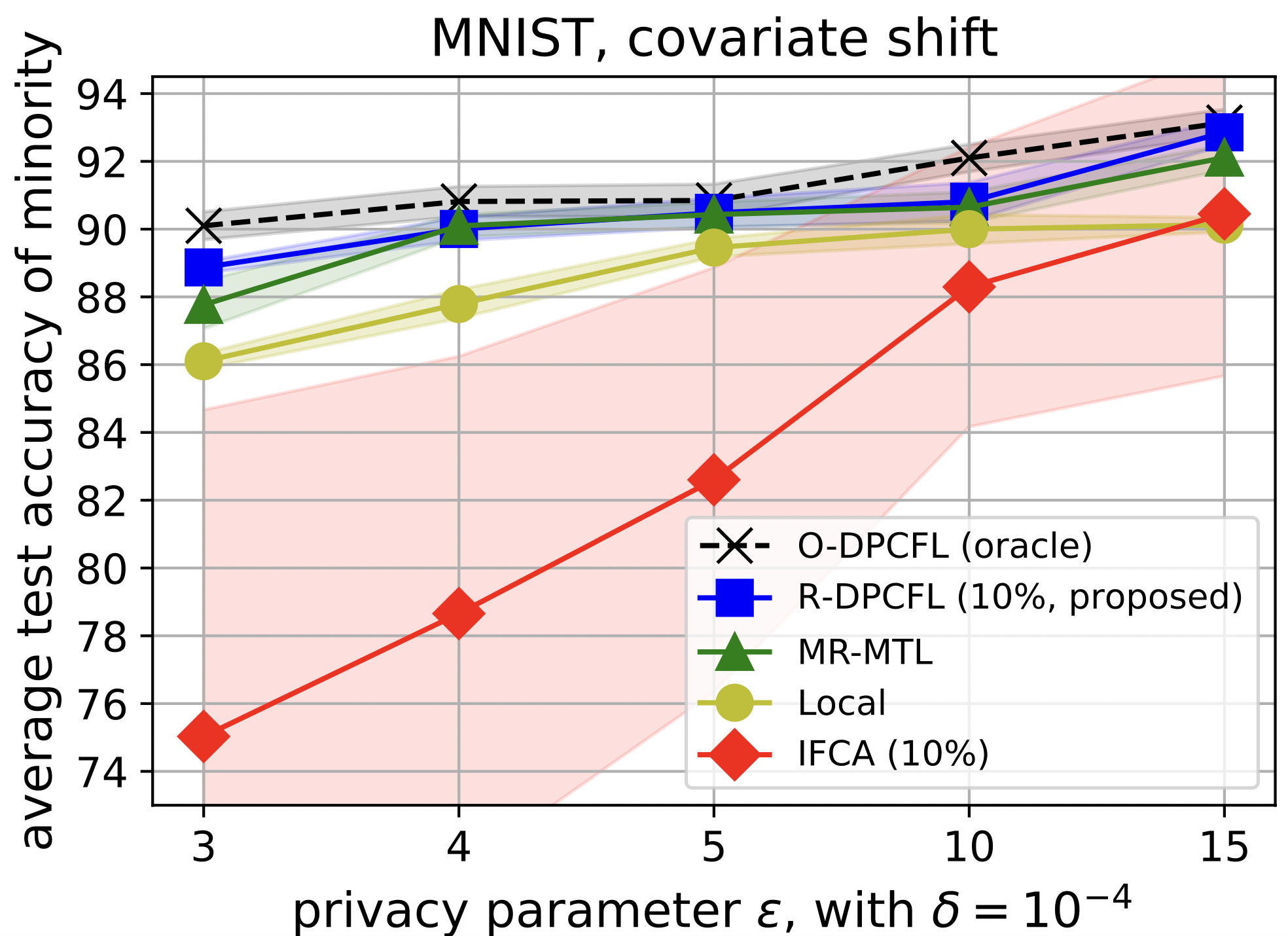}
\includegraphics[width=0.4\columnwidth,height=2.5cm]{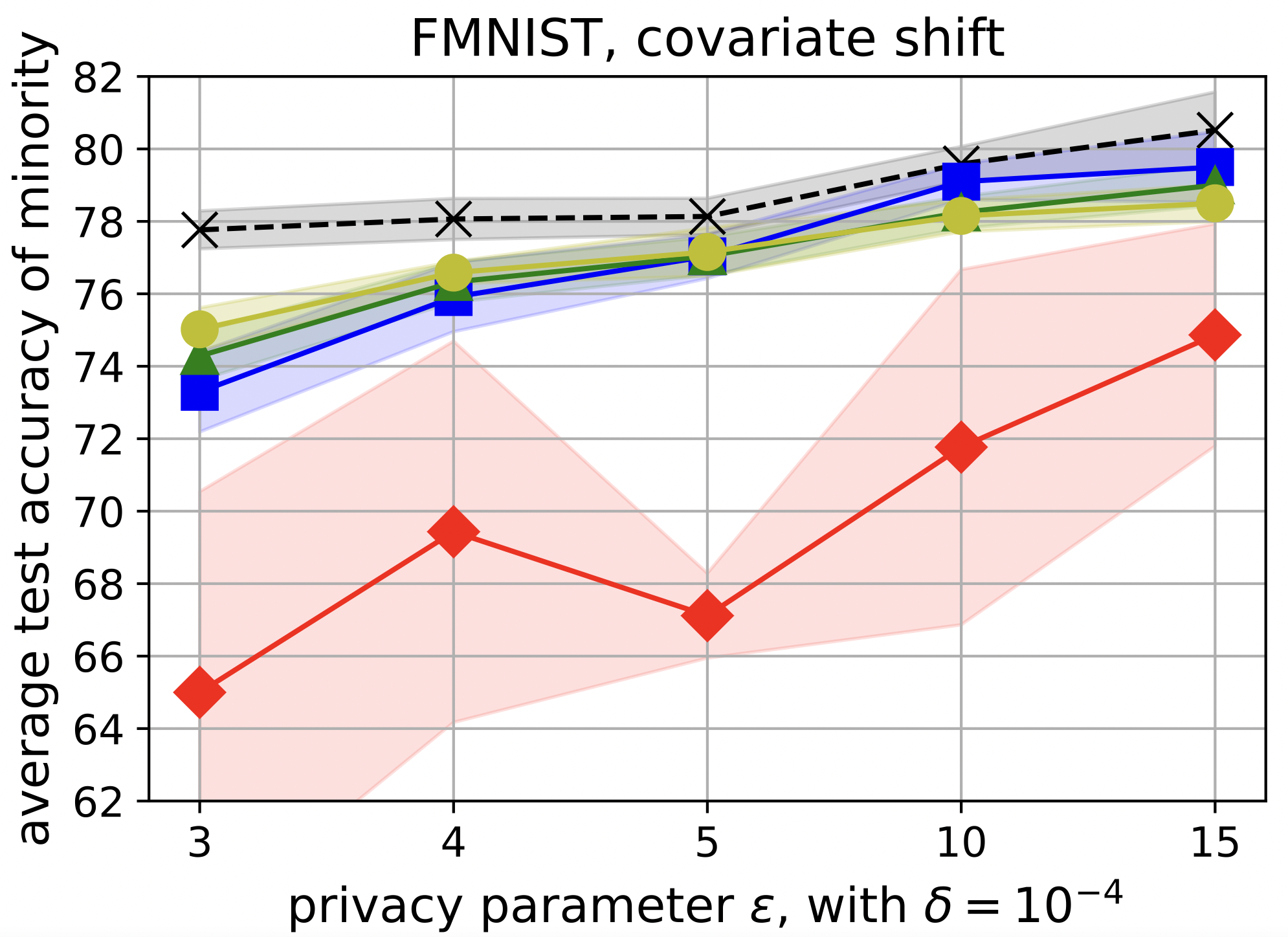}
\includegraphics[width=0.4\columnwidth,height=2.5cm]{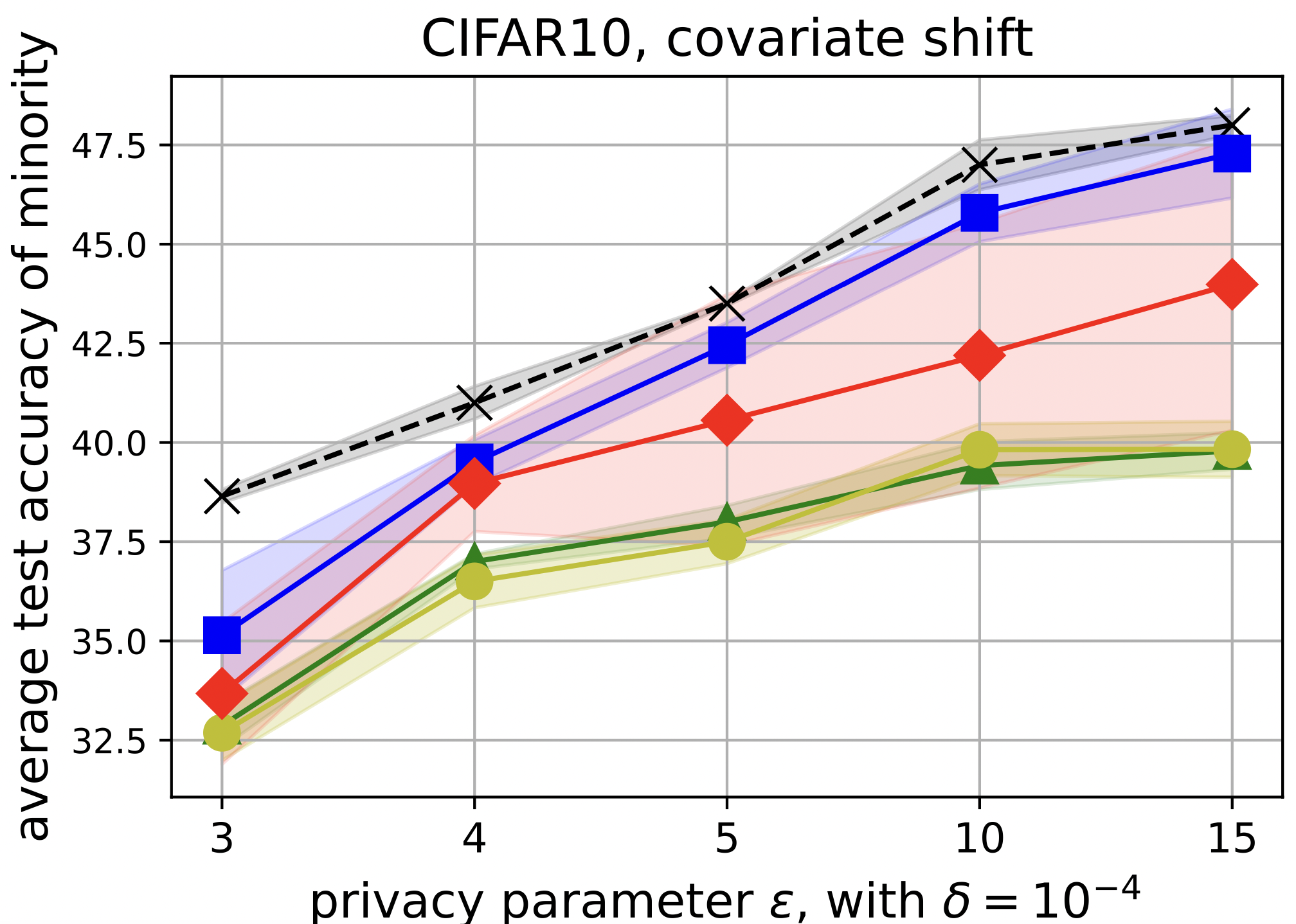}
\includegraphics[width=0.4\columnwidth,height=2.5cm]{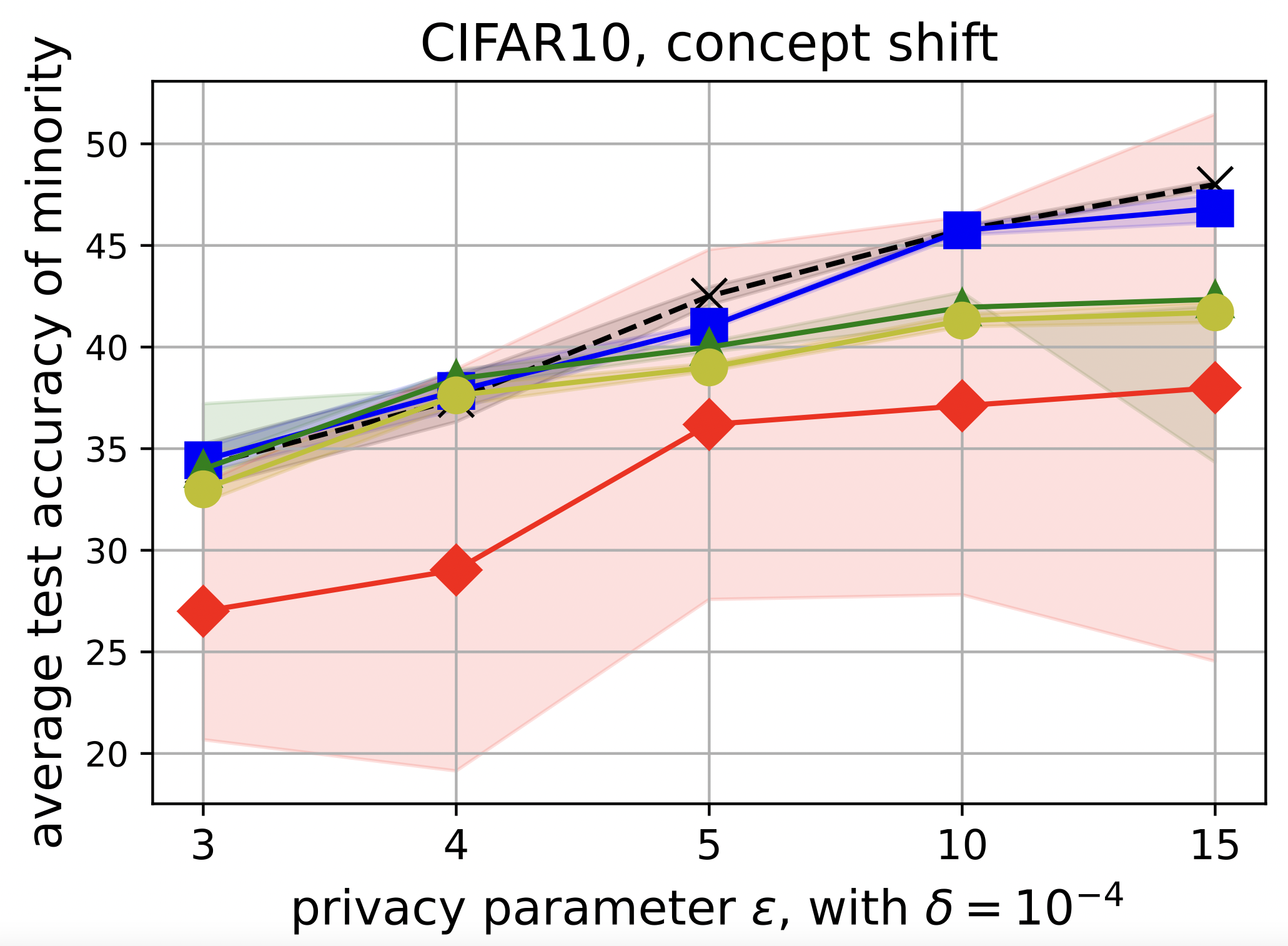}\\
\includegraphics[width=0.4\columnwidth,height=2.5cm]{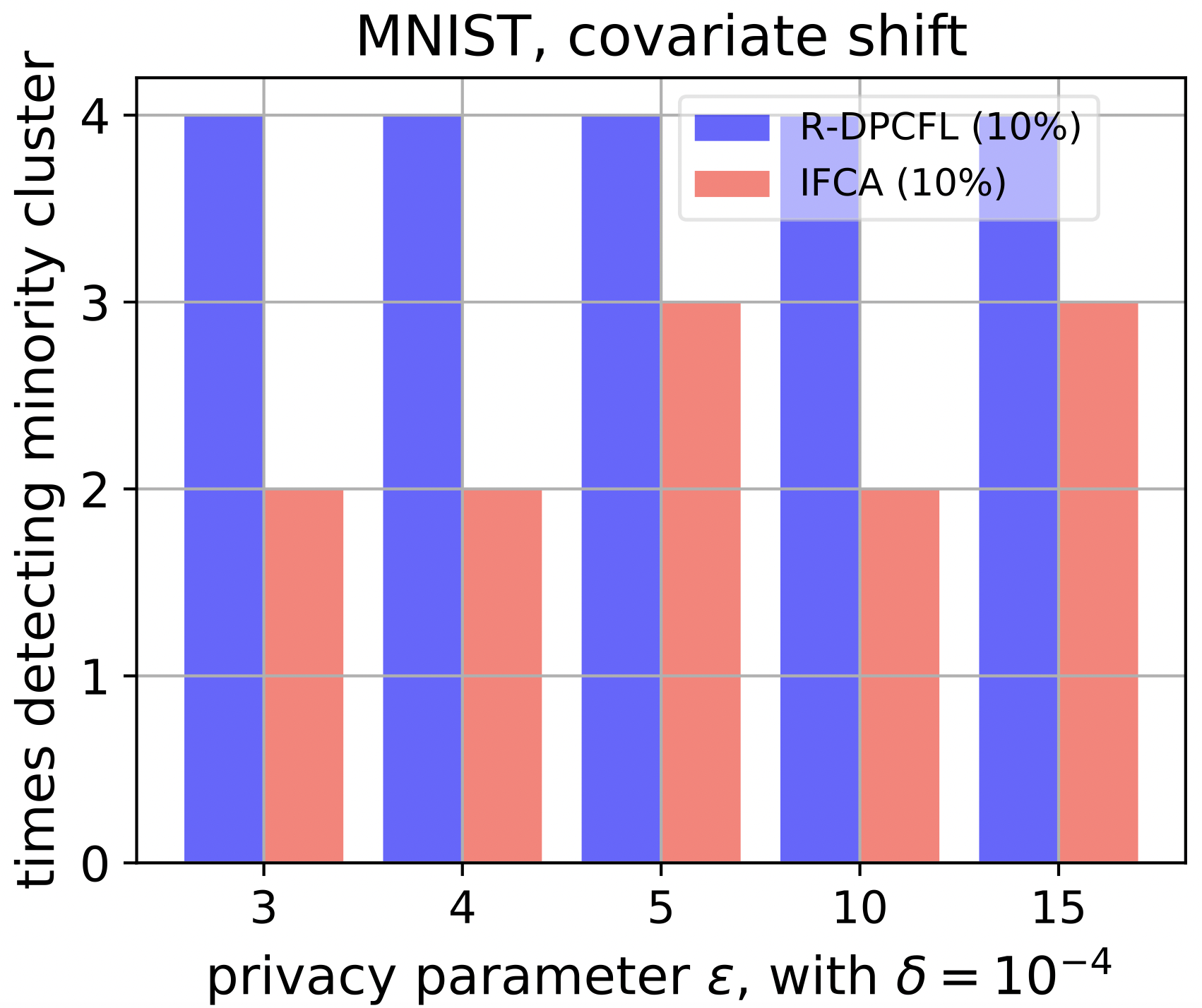}
\includegraphics[width=0.4\columnwidth,height=2.5cm]{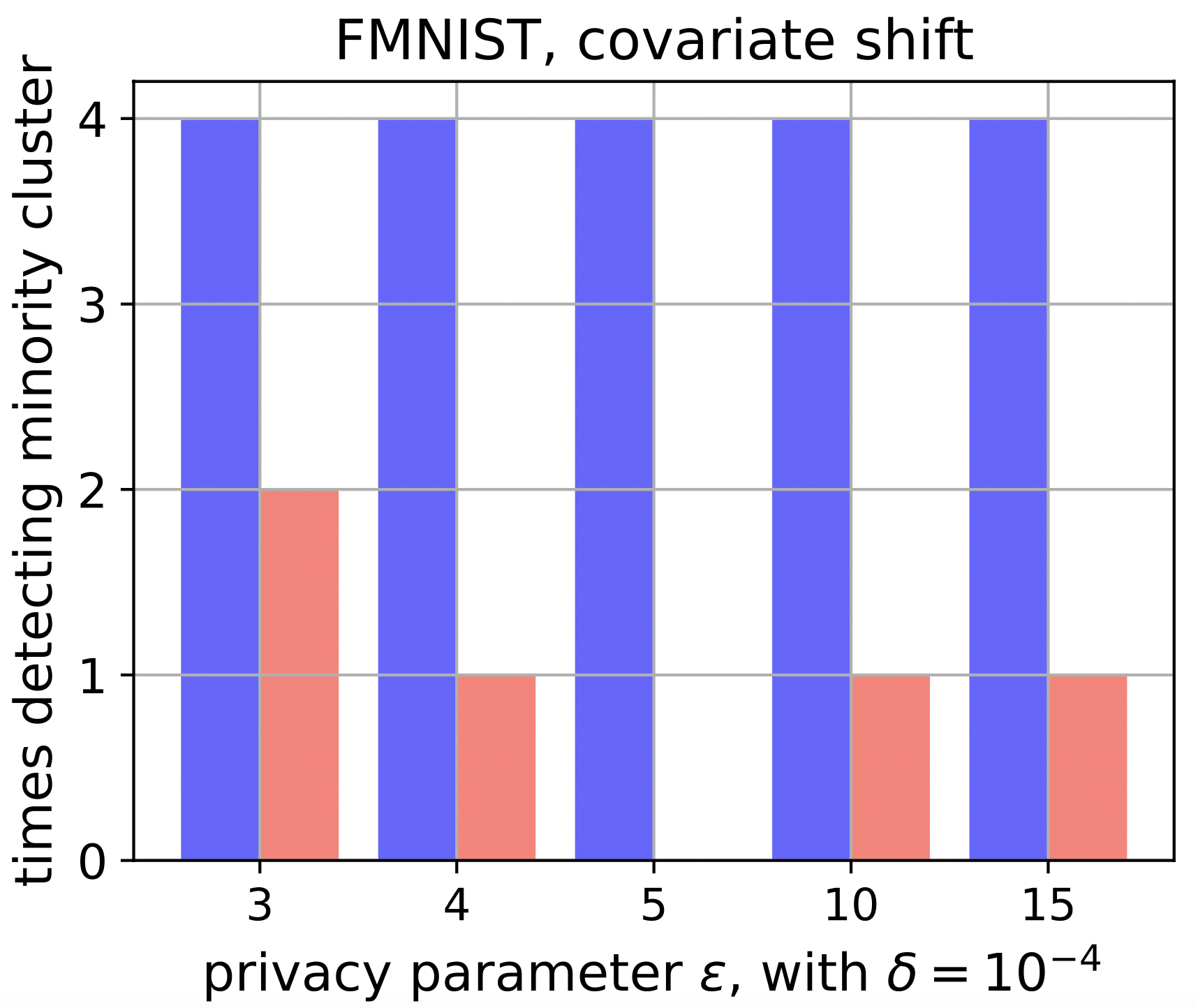}
\includegraphics[width=0.4\columnwidth,height=2.5cm]{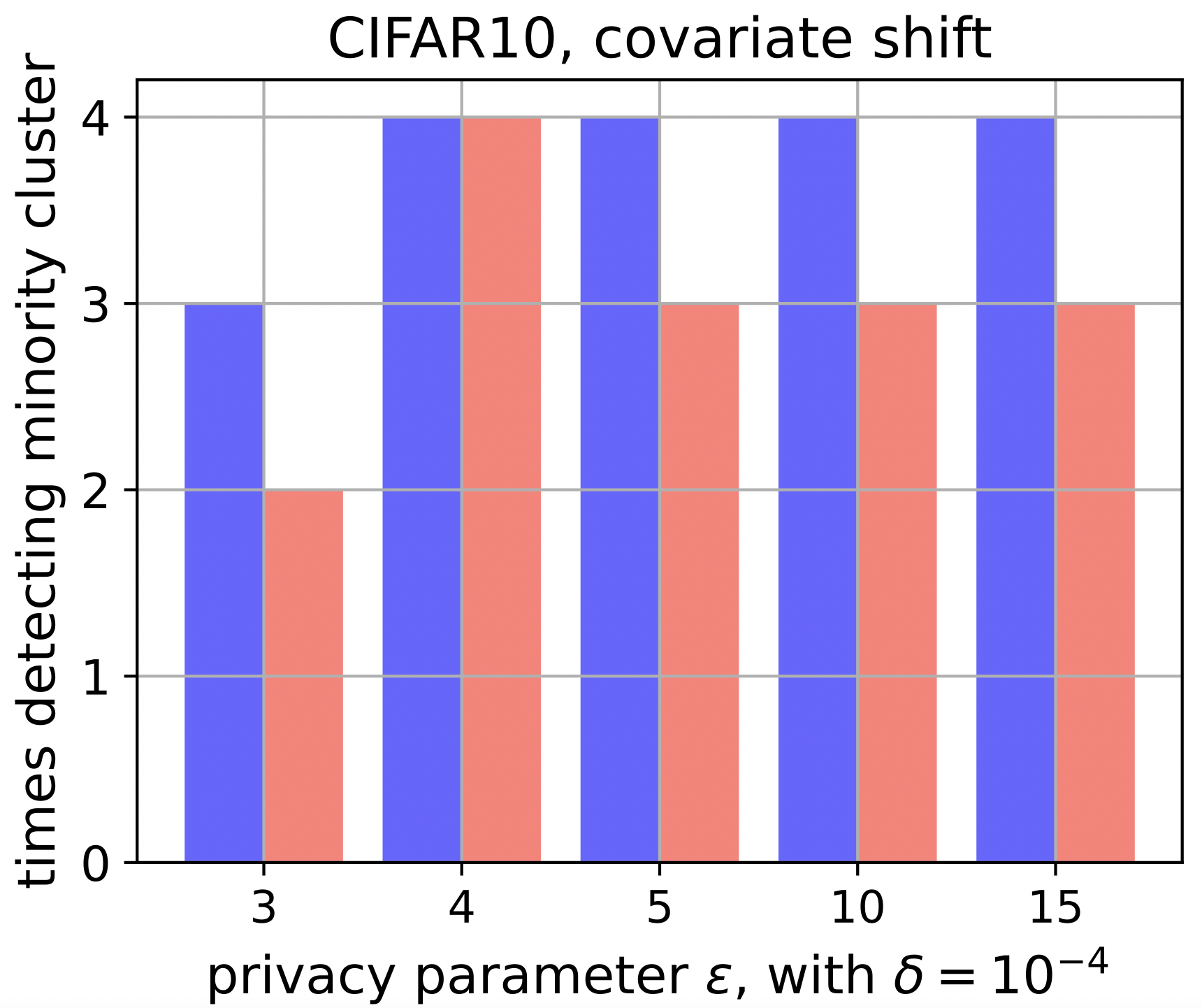}
\includegraphics[width=0.4\columnwidth,height=2.5cm]{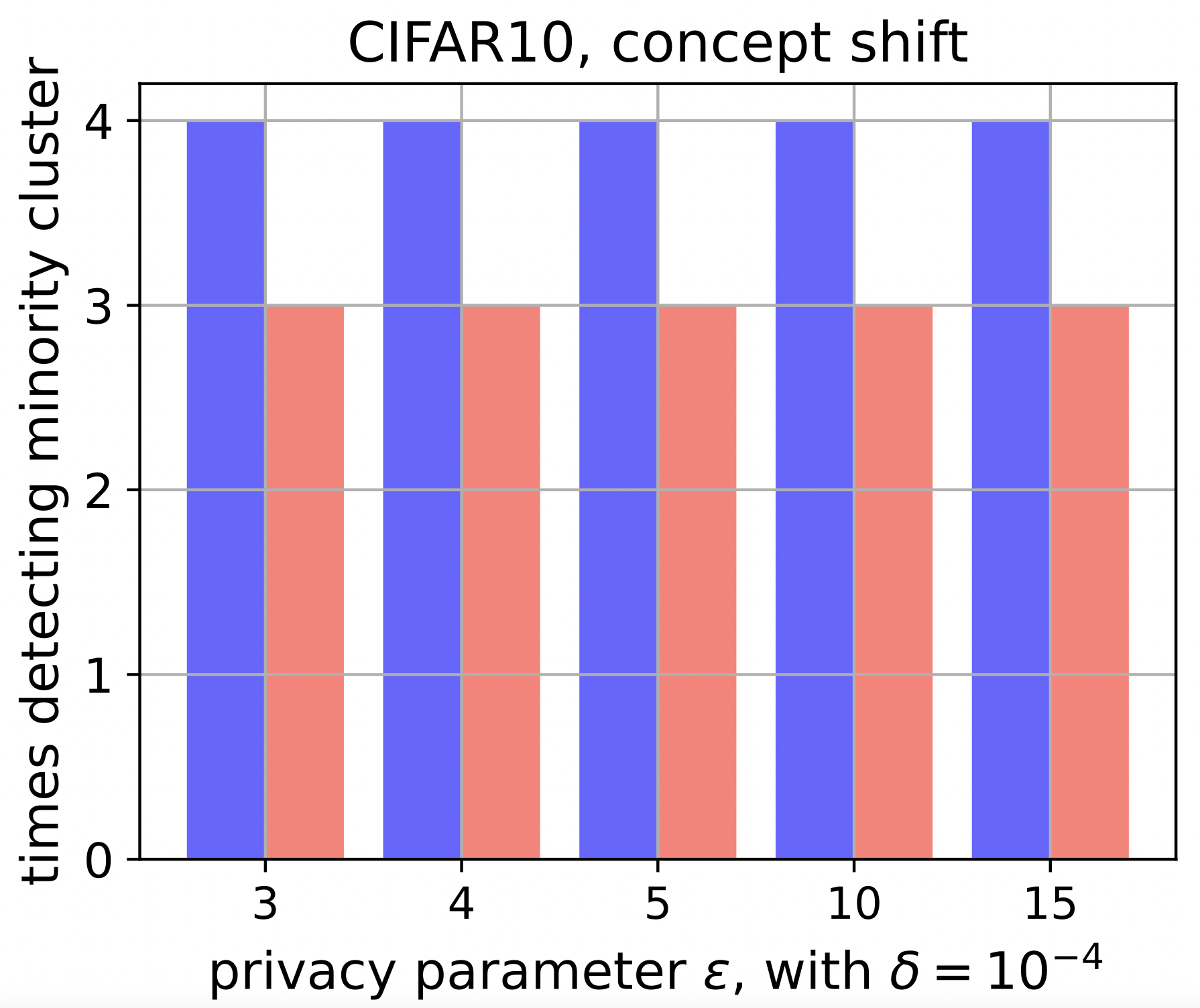}
\caption{\textbf{Top:} Average test accuracy across clients belonging to the minority cluster for different total privacy budgets $\epsilon$, and four different runs. \textbf{Bottom:} Number of times (out of 4 runs) that \algname{R-DPCFL} and \algname{IFCA} successfully detect the minority cluster.}
\vspace{-1em}
\label{fig:avg_test_acc_minority}
\vspace{-0.5em}
\end{figure*}

\textbf{Datasets, models and baseline algorithms:}
We evaluate our proposed method on three benchamark datasets, including: MNIST \citep{mnist}, FMNIST \citep{fmnist} and CIFAR10 \citep{cifar10}, with heterogeneous data distributions from covariate shift (rotation; $P_i(x)$ varies across clusters) \citep{Kairouz2021AOPFL, werner2023provably} and concept shift (label flip; $P_i(y|x)$ varies across clusters) \citep{werner2023provably}, which are the commonly used data splits for clustered \FL (see \Cref{app:exp_setup}). We consider four clusters of clients indexed by $m\in\{0,1,2,3\}$ with $\{3,6,6,6\}$ clients, where the smallest cluster is considered as the minority cluster. We compare our method with most recent related \DPFL algorithms under an equal total sample-level privacy budget $\epsilon$: 1. \algname{Global} \citep{DPSCAFFOLD2022}: clients run \DPSGD locally and send their model updates to the server for aggregation and learning one global model 2. \algname{Local} \citep{liu2022csfl}: clients do not participate \FL and learn a local model by running \DPSGD on their local data 3. A \DP extension of \algname{IFCA} \citep{ghosh2020,liu2022csfl}: local loss/accuracy-based clustering performed by clients on existing cluster models 4. \algname{MR-MTL} \citep{liu2022csfl}: uses model personalization to learn one model for each client 5. \algname{O-DPCFL}: an oracle algorithm which has the knowledge of the true clusters from the first round. For \algname{R-DPCFL} and \algname{IFCA}, we use exponential mechanism \citep{EM}, which satisfies zero concentrated \DP (z-\CDP) \citep{zcdp}, to privatize clients' local cluster selections. See also \Cref{app:full_results} for further experimental results.


\subsection{Results}

\citet{liu2022csfl} observed that under sample-level differential privacy and ``mild" data heterogeneity, federation is more beneficial than local training, because, despite the data heterogeneity across the clients, the model aggregation (averaging) on the server results in reduction of the \DP noise in clients' model updates. However, when there is a high structured data heterogeneity across clusters of clients, the level of heterogeneity is remarkable. Hence, learning one global model through \FL is not beneficial, as one single model can barely adapt to the high level of data heterogeneity across the clusters. Therefore, in \DP clustered \FL systems, local training and model personalization can be better options than global training, as they diminish the adverse effect of the high data heterogeneity. Furthermore, \emph{if one can detect the underlying clusters}, one can perform \FL in each of them separately and will simultaneously benefit from 1. eliminating the effect of data heterogeneity across clusters; 2. reduction of the \DP noise by running \FL aggregation on the server within each cluster. Hence, \emph{if the clustering task is done accurately}, we can expect a further improvement over local training and model personalization. This is exactly what \algname{R-DPCFL} is designed for.

\textbf{RQ1: How does \algname{R-DPCFL} perform in practice?}
\Cref{fig:avg_test_acc} shows the average test accuracy across clients for four datasets. As can be observed, \algname{R-DPCFL} outperforms the baseline algorithms, and this can be attributed to the robust clustering method of \algname{R-DPCFL}, which additionally benefits from the unused information in clients' model updates in the first round and leads to correct clustering of clients (\cref{fig:avg_test_acc}, bottom row). While \algname{R-DPCFL} performs close to the oracle algorithm, \algname{IFCA} has a lower performance due to its errors in detecting the underlying true clusters. For instance, \algname{IFCA} has a clearly low clustering accuracy on MNIST and FMNIST, which leads it to perform even worse than \algname{Local} and \algname{MR-MTL}. In contrast, it has a better clustering performance on CIFAR10 (covariate and concept shifts) and outperforms the two baselines. On the other hand, the reason behind the low performance of \algname{MR-MTL} is that it performs personalization on a global model, which in turn has a low quality due to being obtained from federation across ``all" clients (hence adversely affected by the high data heterogeneity). Similarly, \algname{Local}, which performs close to \algname{MR-MTL}, cannot outperform \algname{R-DPCFL}, as it does not benefit from the \DP noise reduction by \FL aggregation within each cluster.

\textbf{RQ2: How does the minority cluster benefit from \algname{R-DPCFL}?}
\cref{fig:avg_test_acc_minority} compares different algorithms based on the average test accuracy of the clients belonging to the minority cluster. 
\algname{R-DPCFL} leads to a better overall performance for the minority clients, by virtue of its correct and robust cluster detection. Correct detection of the minority cluster prevents it from getting mixed with other majority clusters and leads to a utility improvement for its clients. In contrast, \algname{IFCA} has a lower success rate in detecting the minority cluster (\cref{fig:avg_test_acc_minority}, bottom row) and provides a lower overall performance for them. Similarly, \algname{Local} and \algname{MR-MTL} lead to a low performance for the minority, as they are conditioned on a global model that is learned from federation across all clients and provides a low performance for the minority. In \DP clustered \FL, detecting and improving the performance of minority clusters is important, as failure in detecting them correctly leads to low performance for the clusters with smaller sizes.

\begin{figure}[t]
\centering
\includegraphics[width=0.85\columnwidth, height=4cm]{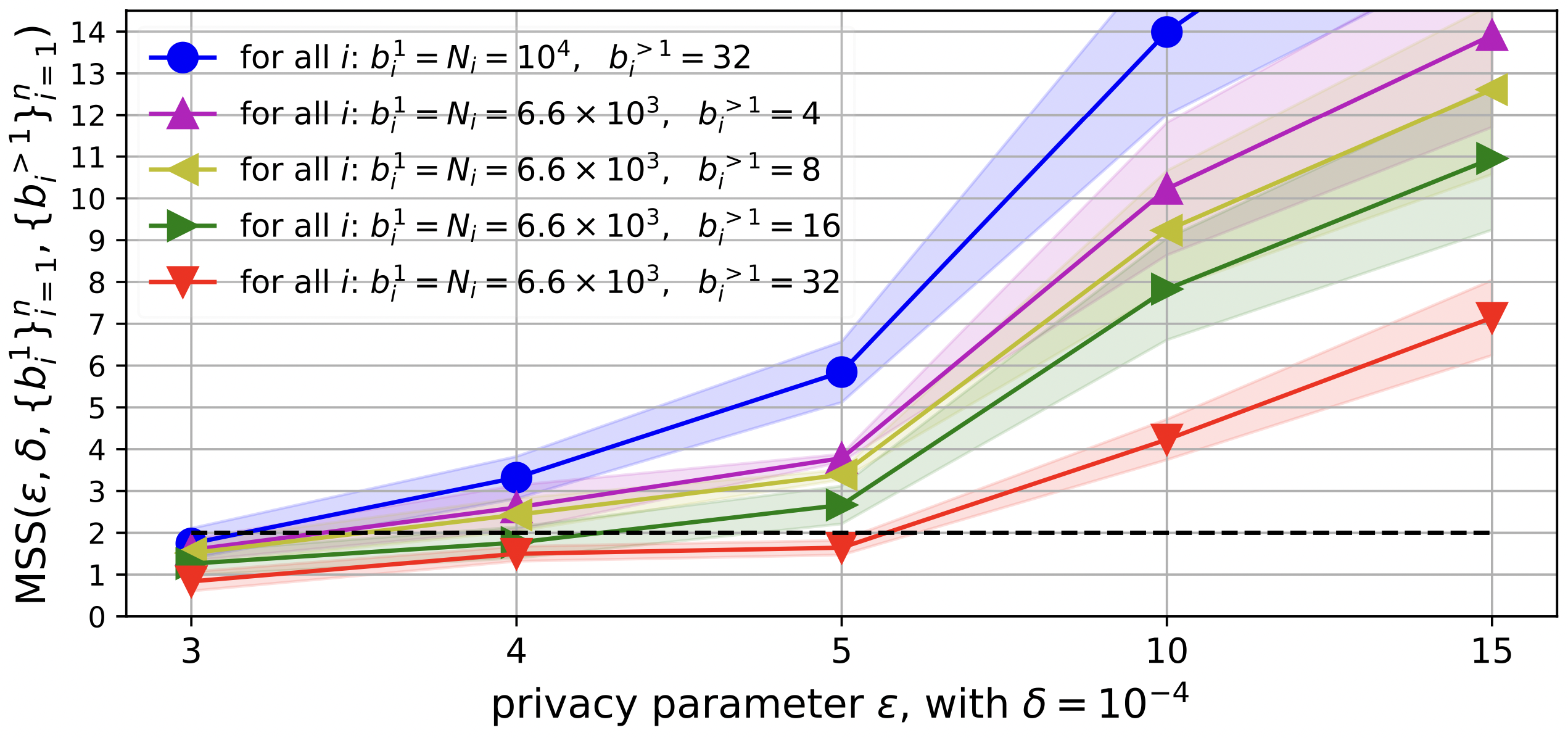}

\caption{\texttt{MSS} score v.s. $\epsilon$ for two different local dataset sizes. A small local dataset size can be compensated for by using smaller batch sizes $\{b_i^{>1}\}_{i=1}^n$ to get a larger \texttt{MSS} score.} 
\label{fig:MSS_batch}
\vspace{-1.2em}
\end{figure}

\textbf{RQ3: What if clients have small local datasets?}
While we envision the proposed approach being more applicable to cross-silo \FL, where datasets are large, it is still worth exploring how beneficial it can be under scarce local data. In the previous sections, we analyzed the benefits of using a full batch size ($b_i^1=N_i$) in the first round and found that it leads to a \GMM with a higher \texttt{MSS} confidence score. The score of the learned \GMM can strongly predict whether the underlying true clusters will be detected: an \texttt{MSS} above 2 almost always yields to correct detection of the underlying clusters (see \Cref{fig:effect_of_MSS1} for experimental results). On the other hand the \texttt{MSS} score depends on four sets of parameters: $\epsilon, \delta, \{b_i^1\}_{i=1}^n$, and $\{b_i^{>1}\}_{i=1}^n$. For fixed $(\epsilon, \delta)$, larger $\{b_i^1\}_{i=1}^n$ and smaller $\{b_i^{>1}\}_{i=1}^n$ increase \texttt{MSS} (\cref{lemma:updatesnoise}). When we ue full batch sizes in the first round, we have $b_i^1=N_i$ (for all $i$). Hence, smaller local datasets result in lower confidence in the learned \GMM. Nevertheless, this can be compensated for by using even smaller $\{b_i^{>1}\}_{i=1}^n$. \Cref{fig:MSS_batch} compares two different dataset sizes under varying $\epsilon$. As observed, for smaller local dataset sizes, reducing $\{b_i^{>1}\}_{i=1}^n$ can help obtain less noisy model updates $\{\Delta \Tilde{\mathbf{\thetav}}i^1\}_{i=1}^n$, improve the \texttt{MSS} score of the learned \GMM and consequently, enable a successful client clustering.

%


\section{Conclusion}
We proposed a \DP clustered \FL algorithm, which addresses sample-level privacy in \FL systems with structured data heterogeneity. By clustering clients based on both their model updates and training loss/accuracy values, and mitigating noise impacts with large initial batch sizes, our approach enhances clustering accuracy and mitigates \DP's disparate impact on utility, all with minimal computational overhead. Moreover, the robustness to noise and easy parameter selection of the proposed approach shows its applicability to \DP clustered \FL settings. While envisioned for \DPFL systems with large local datasets, the method is capable of compensating for moderate dataset sizes by using smaller batch sizes after the first round. In the future, we aim to extend this approach to be suitable for scarce data scenarios.

\bibliography{example_paper}
\bibliographystyle{icml2025}

\appendix
\onecolumn
\newpage
\begin{center}
\Large
\bf
Appendix for \emph{Differentially Private Clustered Federated Learning}
\end{center}

\section{Notations}
\Cref{tab:notations} summarizes the notations used in the paper.

\begin{table}[hbt!]
    \caption{Used notations}
    \begin{tabularx}{\columnwidth}{p{0.15\columnwidth}X}
    \toprule 
      $n$ & number of clients, which  are indexed by $i$\\
      $x_{ij}, y_{ij}$ & $j$-th data point of client $i$ and its label \\ 
      $\mathcal{D}_i, N_i$ & local train set of client $i$ and its size \\
      $\mathcal{D}_{i,aug}$ & augmented local train set of client $i$ \\
      $\mathcal{B}_i^{e,t}$& the train data batch used by client $i$ in round $e$ and at the $t$-th gradient update \\
      $b_i^e$ & batch size of client $i$ in round $e$: $|\mathcal{B}_i^{e,t}|=b_i^e$\\
      $b_i^1$ & batch size of client $i$ in the first round $e=1$\\
      $b_i^{>1}$ & set of batch sizes of client $i$ in the rounds $e>1$\\
      $\epsilon, \delta$ & desired \DP privacy parameters \\
      $E$ & total number of global communication rounds in the \DPFL system, indexed by $e$\\
      $\thetav_m^e$ & model parameter for cluster $m$, at the beginning of global round $e$ \\
      $K$ & number of local train epochs performed by clients during each global round $e$\\
      $\eta_l$ & the common learning rate used for \DPSGD\\
      $h$ & predictor function, e.g., CNN model, with parameter $\thetav$ \\
      $\ell$ & cross entropy loss\\
      $s(i)$ & the true cluster of client $i$\\
      $R^e(i)$ & the cluster assigned to client $i$ in round $e$\\
      $\thetav_i^{e,0}$& the model parameter passed to client $i$ at the beginning of round $e$ to start its local training\\
      $\Delta \Tilde{\thetav}_i^e$& the noisy model update of client $i$ at the end of round $e$, starting from $\thetav_i^{e,0}$\\
      $\sigma_{i}^{e^2}$ & conditional variance of the noisy model update $\Delta \Tilde{\thetav}_i^e$ of client $i$: $\texttt{Var}(\Delta \Tilde{\thetav}_i^e|\thetav_i^{e,0})$ \\
      $\mu_m^*(b^1)$ & the center of the $m$-th cluster (when all clients use batch size $b^1$ in the first round)\\
      $\Sigma_m^*(b^1)$ & the covariance matrix of the $m$-th cluster (when all clients use batch size $b^1$ in the first round)\\
      $\alpha_m^*$ & the prior probability of the $m$-th cluster\\

      \bottomrule
     \end{tabularx}
     \label{tab:notations}
\end{table}

\begin{figure}[hbt!]
    \centering
    \hspace{3em}\includegraphics[width=0.4\columnwidth,height=4cm]{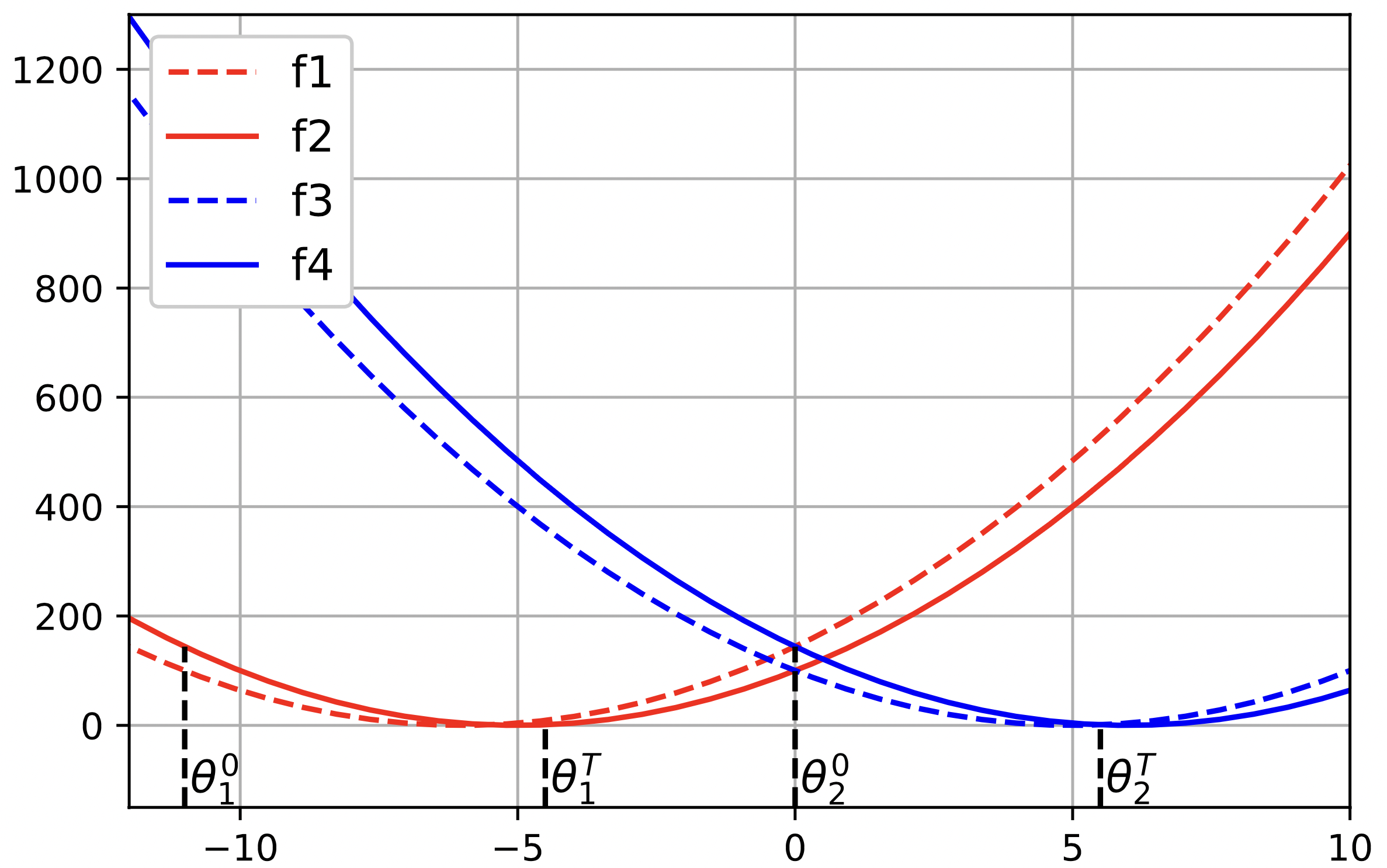}
    \caption{Loss-based clustering algorithms miscluster in the initial rounds, due to model initialization. Also, even with the assumption of perfect clustering of clients in the first rounds, clustering algorithms based on gradients (model updates) lead to clustering errors in the last rounds, due to the gradients approaching to zero.}
    \label{fig:grad_loss}
\end{figure}

\section{Vulnerability of existing clustered \FL algorithms}\label{app:grad_loss}

As discussed in \citep{werner2023provably}, clustered \FL algorithms which cluster clients based on their loss values \citep{mansour2020approaches, ghosh2020, fedsoft2022}, i.e., assign client $i$ to cluster $R^e(i) = \argmin_m f_i(\mathbf{\thetav}_m^e)$ at the beginning of round $e$, are prone to clustering errors in the first few rounds, mainly due to random initialization of cluster models $\{\thetav_m^e\}_{m=1}^M$. On the other hand, clustering clients based on their model updates (gradients) \citep{werner2023provably, briggs2020federated, Sattler2019ClusteredFL} makes sense only when the updates are obtained on the same model initialization. Additionally, even if we assume these algorithms can initially cluster clients perfectly in each round $e$, the clients' model updates (gradients) will approach zero as the clusters' models converge to their optimum parameters. Hence, clients from different clusters may appear to belong to the same cluster, which results in clustering mistakes.

We now provide an example to elaborate that why clustering clients based on their losses (model updates) is prone to errors in the first (last) rounds. For example, consider \Cref{fig:grad_loss}, where there are $M=2$ clusters (red and blue) and $n=4$ clients. The clients in the red cluster have loss functions $f_1(\theta)=4(\theta + 6)^2$ and $f_2(\theta)=4(\theta + 5)^2$ with optimum cluster parameter $\theta_1^\infty=-5.5$. Also, the the clients in the blue cluster have loss functions $f_3(\theta)=4(\theta - 5)^2$ and $f_4(\theta)=4(\theta - 6)^2$ with optimum cluster parameter $\theta_2^\infty=5.5$. Clustering algorithms, which cluster clients based on their loss values on clusters' models, are vulnerable to model initialization. For example, in \cref{fig:grad_loss}, if we initialize the clusters' parameters with $\theta_1^0=-11$ and $\theta_2^0=0$ (shown in the figure), all four clients will initially select cluster 2, since they have smaller losses on its parameter. At $\theta_2^0=0$, the average of clients' gradients (model updates) is zero, so all clients will remain stuck at $\theta_2^0$ and will always select cluster 2.

 On the other hand, clustering clients based on their model updates (gradients) \citep{werner2023provably, briggs2020federated, Sattler2019ClusteredFL} have clearly issues. One of these issues appears after some rounds of training. For instance, even if we assume these algorithms can initially cluster clients ``perfectly" in each round $e$, the clients' model updates (gradients) will approach zero as the clusters' models converge to their optimum parameters. Hence, clients from different clusters may appear to belong to the same cluster, which results in clustering mistakes. For example, as shown in \cref{fig:security_model}, right, let us assume after $T$ rounds of ``correct" clustering of clients, the clusters' parameters get to $\theta_1^T=-4.5$ and $\theta_2^T=5.5$ (shown in the figure). At this parameters, clients 1 and 2 (which have been ``correctly" assigned to cluster 1 so far) will have gradients $f_1'(\theta_1^T)=12$ and $f_2'(\theta_1^T)=4$. Similarly, clients 3 and 4 (which have been ``correctly" assigned to cluster 2 so far) will have $f_3'(\theta_2^T)=4$ and $f_4'(\theta_2^T)=-4$. We see that $f_2'$ is closer to $f_3'$ and $f_4'$ than to $f_1'$, and in the next round it will wrongly be assigned to wrong cluster 2. This happens while the clients are clearly distinguishable based on their losses, as some progress in training has been made after $T$ rounds: $f_1(\theta_1^T)=9$, while $f_1(\theta_2^T)=23^2$, which clearly means that client 1 correctly belongs to cluster 1. Therefore, after making some progress in training the clusters' models, it makes more sense to use a loss-based clustering strategy than using a strategy based on clients' gradients (model updates).

\section{Background}
\subsection{Renyi Differential Privacy (RDP)}
We have used a relaxation of Differential Privacy, named Renyi \DP (RDP) for tight privacy accounting of different algorithms \citep{mironovRDP}. It is defined as follows:

\begin{definition}[Renyi Differential Privacy (RDP) \citep{mironovRDP}]
\label{def:rdp}
A randomized mechanism $\mathcal{M}:\mathcal{A}\to \mathcal{R}$ with domain $\mathcal{D}$ and range $\mathcal{R}$ satisfies $(\alpha, \epsilon)$-\RDP with order $\alpha>1$ if for any two adjacent inputs $\mathcal{D}$, $\mathcal{D}'\in \mathcal{A}$, which differ only by a single record,
\begin{align*}
    D_{\alpha}\big(\mathcal{M}(\mathcal{D})||\mathcal{M}(\mathcal{D}')\big) \leq \epsilon,
\end{align*}
\end{definition}
where $D_{\alpha}(P||Q)$ is the Renyi divergence between distributions $P$ and $Q$:

\begin{align}
    D_{\alpha}(P||Q) := \frac{1}{\alpha - 1} \log \mathbb E_{x\sim p} \bigg[ \bigg(\frac{P(x)}{Q(x)}\bigg)^{\alpha - 1}\bigg] ~~~~~ (\alpha >1).
\end{align}
For $\alpha = 1$, we have $D_{1}(P||Q) := \mathbb E_{x\sim p} \bigg[\log \big(\frac{P(x)}{Q(x)}\big)\bigg]$, which is the KL divergence between $P$ and $Q$. \RDP can be used for composition of private mechanisms \citep{mironovRDP}:

\begin{restatable}{proposition}{prop1}
Let $\mathcal{M}_1: \mathcal{A} \to \mathcal{R}_1$ be $(\alpha, \epsilon_1)$-\RDP and $\mathcal{M}_2: \mathcal{R}_1 \times \mathcal{A} \to \mathcal{R}_2$ be $(\alpha, \epsilon_2)$-\RDP. Then, the mechanism $\mathcal{M}_3 = (X, Y)$ defined as $X \gets \mathcal{M}_1(\mathcal{A})$ and $Y \gets \mathcal{M}_2(\mathcal{A}, X)$ satisfies $(\alpha, \epsilon_1 + \epsilon_2)$-\RDP. 
\label{prop:1mironov}
\end{restatable}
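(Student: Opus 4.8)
The plan is to unfold the definition of R\'enyi divergence for the joint mechanism $\mathcal{M}_3$ and to exploit that the law of $\mathcal{M}_3(\mathcal{D})=(X,Y)$ factorizes as a marginal (the law of $X$) times a conditional (the law of $Y$ given $X$). Fix two adjacent inputs $\mathcal{D},\mathcal{D}'\in\mathcal{A}$ and let $P,Q$ denote the laws of $\mathcal{M}_3(\mathcal{D})$ and $\mathcal{M}_3(\mathcal{D}')$. Choosing a common dominating measure, write $P(x,y)=P_1(x)\,P_2(y\mid x)$ and $Q(x,y)=Q_1(x)\,Q_2(y\mid x)$, where $P_1,Q_1$ are the laws of $\mathcal{M}_1(\mathcal{D}),\mathcal{M}_1(\mathcal{D}')$, and $P_2(\cdot\mid x),Q_2(\cdot\mid x)$ are the laws of $\mathcal{M}_2(x,\mathcal{D}),\mathcal{M}_2(x,\mathcal{D}')$. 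I would treat $\alpha>1$ first.

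The key computation is then
\begin{align*}
    e^{(\alpha-1)D_\alpha(P\|Q)}
    &= \int\!\!\int P_1(x)\,P_2(y\mid x)\left(\frac{P_1(x)}{Q_1(x)}\right)^{\alpha-1}\left(\frac{P_2(y\mid x)}{Q_2(y\mid x)}\right)^{\alpha-1}\,dy\,dx\\
    &= \int P_1(x)\left(\frac{P_1(x)}{Q_1(x)}\right)^{\alpha-1}\left[\int P_2(y\mid x)\left(\frac{P_2(y\mid x)}{Q_2(y\mid x)}\right)^{\alpha-1}\,dy\right]dx,
\end{align*}
where swapping the order of integration is justified by Tonelli's theorem since the integrand is nonnegative. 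The inner bracket equals $e^{(\alpha-1)D_\alpha(\mathcal{M}_2(x,\mathcal{D})\,\|\,\mathcal{M}_2(x,\mathcal{D}'))}$, and because $\mathcal{D},\mathcal{D}'$ are adjacent this is bounded by $e^{(\alpha-1)\epsilon_2}$ for \emph{every} fixed $x$ --- this is the crucial step, and it is exactly what $(\alpha,\epsilon_2)$-\RDP of $\mathcal{M}_2$ asserts, since that definition quantifies over adjacent datasets for each value of the auxiliary argument. Pulling the constant $e^{(\alpha-1)\epsilon_2}$ out, the remaining integral is $e^{(\alpha-1)D_\alpha(\mathcal{M}_1(\mathcal{D})\|\mathcal{M}_1(\mathcal{D}'))}\le e^{(\alpha-1)\epsilon_1}$ by $(\alpha,\epsilon_1)$-\RDP of $\mathcal{M}_1$. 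Multiplying the two bounds, taking logarithms, and dividing by $\alpha-1>0$ gives $D_\alpha(P\|Q)\le\epsilon_1+\epsilon_2$; since the adjacent pair was arbitrary, the claim follows.

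For the boundary case $\alpha=1$, where $D_1$ is the KL divergence, I would invoke the chain rule for relative entropy, $D_{\mathrm{KL}}(P\|Q)=D_{\mathrm{KL}}(P_1\|Q_1)+\mathbb{E}_{x\sim P_1}\big[D_{\mathrm{KL}}(P_2(\cdot\mid x)\|Q_2(\cdot\mid x))\big]$, and bound the conditional term by $\epsilon_2$ pointwise in $x$ and the marginal term by $\epsilon_1$; alternatively this follows by letting $\alpha\downarrow 1$ in the previous display. The main obstacle is not the algebra, which is essentially a one-line application of Tonelli plus the factorization of the joint density, but the bookkeeping: ensuring the joint laws admit the product-of-conditional decomposition, fixing a common dominating measure so the Radon--Nikodym derivatives above are well defined, and --- most importantly --- correctly using the hypothesis on $\mathcal{M}_2$ in its adaptive form, namely that its $(\alpha,\epsilon_2)$-\RDP guarantee holds for the worst-case adjacent pair \emph{uniformly over} the auxiliary input supplied by $\mathcal{M}_1$. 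Everything else is routine.
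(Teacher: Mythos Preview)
Your argument is correct and is essentially the standard proof of this composition result. Note, however, that the paper does not supply its own proof of this proposition: it is stated as a quoted result from \cite{mironovRDP} and used as a black box (in particular in the proof of \Cref{thm:localdp}), with no accompanying derivation in the appendix. So there is no ``paper's proof'' to compare against beyond the original source. Your derivation---factoring the joint law as marginal times conditional, applying Tonelli, bounding the inner integral pointwise in $x$ by the $(\alpha,\epsilon_2)$-\RDP guarantee of $\mathcal{M}_2$, and then bounding the outer integral by the $(\alpha,\epsilon_1)$-\RDP guarantee of $\mathcal{M}_1$---is precisely Mironov's original argument, and your remarks on the $\alpha=1$ boundary case and on the adaptive (uniform-in-$x$) use of the hypothesis on $\mathcal{M}_2$ are the right points of care.
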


Therefore, if an algorithm has $E$ steps and each step satisfies $(\alpha, \epsilon)$-\RDP, the algorithm satisfies $(\alpha, E\epsilon)$-\RDP. 
\RDP can also be used for composition of \emph{heterogeneous} private mechanisms, e.g., for accounting privacy of \algname{R-DPCFL}, which uses different batch sizes in the first and the subsequent rounds. The following lemma is about conversion of $(\alpha, \epsilon)$-\RDP to standard $(\epsilon, \delta)$-\DP (\cref{def:epsilondeltadp}).

\begin{restatable}{lem}{rdptodp}
If a mechanism $\mathcal{M}$ satisifes $(\alpha, \epsilon(\alpha))$-\RDP, then for any $\delta>0$, it satisfies $(\epsilon(\delta), \delta)$-\DP, where 
\begin{align}
    \epsilon(\delta) = \inf_{\alpha>1} \epsilon(\alpha) + \frac{1}{\alpha - 1} \log \big(\frac{1}{\alpha \delta}\big) + \log \big(1-\frac{1}{\alpha}\big).
\end{align}
\label{lemma:rdptodp}
\end{restatable}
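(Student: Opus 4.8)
\textbf{Proof plan for Lemma~\ref{lemma:rdptodp} (RDP-to-DP conversion).}

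The plan is to reduce the statement to the standard hypothesis-testing / likelihood-ratio characterization of $(\epsilon,\delta)$-\DP and then control the relevant tail probability via a Chernoff-type bound in which the free parameter is exactly the Rényi order $\alpha$. First I would fix two adjacent inputs $\mathcal{D},\mathcal{D}'$, write $P=\mathcal{M}(\mathcal{D})$ and $Q=\mathcal{M}(\mathcal{D}')$, and recall that to prove $(\epsilon(\delta),\delta)$-\DP it suffices to show, for every measurable $\mathcal{S}$, that $P(\mathcal{S})\le e^{\epsilon(\delta)}Q(\mathcal{S})+\delta$. The worst-case $\mathcal{S}$ is a super-level set of the privacy loss random variable $L=\log\frac{dP}{dQ}$, so the whole problem collapses to bounding $\Pr_{x\sim P}[L(x)>\epsilon(\delta)]$ and showing the leftover mass is at most $\delta$ after peeling off the $e^{\epsilon(\delta)}Q(\mathcal{S})$ term. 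Concretely, for any threshold $t$ one splits $P(\mathcal{S})=\Pr_{x\sim P}[x\in\mathcal{S},\,L(x)\le t]+\Pr_{x\sim P}[x\in\mathcal{S},\,L(x)>t]$; the first piece is $\le e^{t}Q(\mathcal{S})$ by definition of $L$, so setting $t=\epsilon(\delta)$ it remains to bound the second piece by $\delta$.

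Next I would estimate $\Pr_{x\sim P}[L(x)>t]$ using the $(\alpha,\epsilon(\alpha))$-\RDP hypothesis. The key identity is that $\mathbb{E}_{x\sim P}\!\left[e^{(\alpha-1)L(x)}\right]=\mathbb{E}_{x\sim Q}\!\left[e^{\alpha L(x)}\right]=e^{(\alpha-1)D_\alpha(P\|Q)}\le e^{(\alpha-1)\epsilon(\alpha)}$ for $\alpha>1$. Applying Markov's inequality to the nonnegative variable $e^{(\alpha-1)L}$ under $P$ gives
\begin{align}
\Pr_{x\sim P}[L(x)>t]\le e^{-(\alpha-1)t}\,e^{(\alpha-1)\epsilon(\alpha)}=e^{(\alpha-1)(\epsilon(\alpha)-t)}.
\end{align}
To make this bound equal to $\delta$, I would solve $e^{(\alpha-1)(\epsilon(\alpha)-t)}=\delta$ for $t$, obtaining $t=\epsilon(\alpha)+\frac{1}{\alpha-1}\log(1/\delta)$. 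This already yields an $(\epsilon',\delta)$-\DP guarantee with $\epsilon'=\epsilon(\alpha)+\frac{1}{\alpha-1}\log(1/\delta)$; taking the infimum over admissible $\alpha>1$ then gives a valid $\epsilon(\delta)$. The extra additive terms $\frac{1}{\alpha-1}\log\frac{1}{\alpha}+\log(1-\frac{1}{\alpha})$ in the statement come from a sharper accounting: instead of bounding the $\{L\le t\}$ contribution crudely by $e^{t}Q(\mathcal{S})$, one integrates the tail $\Pr_{x\sim P}[L(x)>u]$ over $u\in(t,\infty)$ against $e^{u}$ and optimizes the split point, or equivalently applies the Chernoff bound on both the $P$-tail and the complementary $Q$-tail and combines them; this tightening is exactly the refinement in \citet{mironovRDP} (and in the later optimal conversions) and is where the $\log(1-1/\alpha)$ correction appears.

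I expect the main obstacle to be precisely that last tightening step: the naive Markov argument above is off by the logarithmic-in-$\alpha$ factors, so to recover the stated formula one has to be careful about how the probability mass is partitioned between the ``$e^{\epsilon}Q(\mathcal{S})$'' term and the ``$\delta$'' term — essentially choosing the threshold inside the integral $\int \Pr_{x\sim P}[L>u]\,e^{-u}\,du$ optimally rather than using a single cutoff. A secondary technical point is handling measure-theoretic edge cases (when $P$ is not absolutely continuous with respect to $Q$, the privacy loss can be $+\infty$ on a null-under-$Q$ set), but since \RDP with finite $\epsilon(\alpha)$ for some $\alpha>1$ already forces $P\ll Q$, this is easily dispatched. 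Everything else — the reduction to super-level sets, the moment identity for Rényi divergence, and Markov's inequality — is routine, so I would present the clean Markov-based derivation in full and cite \citet{mironovRDP} for the optimal constants if the sharpened form is needed downstream.
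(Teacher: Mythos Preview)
The paper does not actually contain a proof of this lemma: it is stated in the background section as a known result (the tightened RDP-to-\DP conversion, which refines Mironov's original bound) and is simply invoked later in the proof of Theorem~\ref{thm:localdp}. So there is nothing in the paper to compare your proposal against.

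That said, your plan is the standard and correct route: reduce to the privacy-loss random variable $L=\log\frac{dP}{dQ}$, split $P(\mathcal{S})$ at a threshold $t$, bound the tail $\Pr_{P}[L>t]$ by Markov applied to $e^{(\alpha-1)L}$ using the Rényi moment identity, and optimize over $\alpha>1$. You correctly identify that the naive argument gives $\epsilon(\alpha)+\frac{1}{\alpha-1}\log(1/\delta)$ and that the extra $\frac{1}{\alpha-1}\log(1/\alpha)+\log(1-1/\alpha)$ terms come from a sharper split of the mass between the $e^{\epsilon}Q(\mathcal{S})$ and $\delta$ buckets (this is exactly the refinement in the literature that produces the stated formula). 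Your remark about absolute continuity is also correct. Since the paper itself just cites the result, your sketch already goes further than what the paper provides.
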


The Gaussian mechanism satisfies $(\alpha, \epsilon)$-\RDP, based on the following Proposition from \citep{mironovRDP}:

\begin{restatable}{proposition}{prop7}
If $f: \mathcal{A} \to \mathcal{R}$ has sensitivity $c$, then its randomization with a Gaussian mechanism with noise variance $\sigma_{DP}^2$ satisfies $(\alpha, \frac{\alpha c^2}{2\sigma_{DP}^2})$-\RDP.
\label{prop:7mironov}
\end{restatable}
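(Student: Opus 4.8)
The plan is to reduce the claim to a closed-form computation of the R\'enyi divergence between two isotropic Gaussians that differ only in their means. Fix two adjacent inputs $\mathcal{D}, \mathcal{D}' \in \mathcal{A}$ and set $\mu := f(\mathcal{D})$ and $\mu' := f(\mathcal{D}')$, so that $\mathcal{M}(\mathcal{D}) \sim \mathcal{N}(\mu, \sigma_{DP}^2 \mathbb{I}_p)$ and $\mathcal{M}(\mathcal{D}') \sim \mathcal{N}(\mu', \sigma_{DP}^2 \mathbb{I}_p)$. The sensitivity hypothesis gives $\|\mu - \mu'\| \le c$. It then suffices to establish the identity $D_\alpha\big(\mathcal{N}(\mu, \sigma_{DP}^2 \mathbb{I}_p) \,\|\, \mathcal{N}(\mu', \sigma_{DP}^2 \mathbb{I}_p)\big) = \frac{\alpha \|\mu - \mu'\|^2}{2\sigma_{DP}^2}$, since substituting $\|\mu - \mu'\| \le c$ and taking the supremum over adjacent pairs yields $(\alpha, \frac{\alpha c^2}{2\sigma_{DP}^2})$-\RDP by \Cref{def:rdp}.

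First I would use rotational invariance of the isotropic covariance: after an orthogonal change of coordinates aligning the first axis with $\mu - \mu'$, both densities factor across coordinates and coincide on every coordinate except the first, so the multivariate divergence collapses to the one-dimensional divergence between $\mathcal{N}(t, \sigma_{DP}^2)$ and $\mathcal{N}(0, \sigma_{DP}^2)$ with gap $t := \|\mu - \mu'\|$. Starting from the definition of the R\'enyi divergence $D_\alpha(P\|Q) = \frac{1}{\alpha - 1}\log \int p(x)^\alpha q(x)^{1-\alpha}\,dx$, I would substitute the two scalar Gaussian densities and collect the exponent numerator $\alpha(x - t)^2 + (1-\alpha)x^2$, which after completing the square equals $(x - \alpha t)^2 + \alpha(1-\alpha)t^2$.

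The residual Gaussian integral over $(x - \alpha t)^2$ cancels exactly against the normalization constant and evaluates to $1$, leaving $\int p(x)^\alpha q(x)^{1-\alpha}\,dx = \exp\!\big(\tfrac{\alpha(\alpha-1)t^2}{2\sigma_{DP}^2}\big)$. Applying $\frac{1}{\alpha-1}\log(\cdot)$ cancels the factor $(\alpha - 1)$ and produces the closed form $\frac{\alpha t^2}{2\sigma_{DP}^2} = \frac{\alpha\|\mu-\mu'\|^2}{2\sigma_{DP}^2}$, which establishes the identity and hence the proposition.

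The hard part is essentially nonexistent here, as the argument is routine Gaussian algebra; the only points demanding care are (i) justifying the reduction from the $p$-dimensional integral to the one-dimensional one, which follows either from rotational invariance or by factorizing the product integral and noting that only the coordinate along $\mu - \mu'$ contributes a nontrivial factor, and (ii) confirming that the divergence is symmetric under swapping $\mathcal{D}$ and $\mathcal{D}'$ — which holds because both Gaussians share the common covariance $\sigma_{DP}^2 \mathbb{I}_p$ — so that the uniform bound over all adjacent pairs is valid irrespective of which input is designated $\mathcal{D}$.
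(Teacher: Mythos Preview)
Your proof is correct and is precisely the standard Gaussian R\'enyi-divergence computation; the reduction to one dimension via rotational invariance and the completing-the-square step are both sound. The paper itself does not prove this proposition at all --- it is quoted verbatim as a background result from \cite{mironovRDP} --- so there is no ``paper's approach'' to compare against, but your argument matches the original derivation in that reference.
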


\color{black}
Some accounting routines have been implemented in open source libraries for accounting privacy of \RDP mechanisms. We use TensorFlow Privacy implementation \citep{mcmahan2019generalapproachaddingdifferential} in this work.

\subsection{Zero Concentrated Differential Privacy (z-\CDP)}
Another relaxed definition of differential privacy is zero concentrated differential privacy (z-\CDP) \citep{zcdp}. Being $\rho$-zCDP is equivalent to being $(\alpha, \rho \alpha)$-RDP simultaneously for all $\alpha > 1$. Therefore, standard RDP accountants, e.g. the aforementioned TensorFlow Privacy RDP accountant \citep{mcmahan2019generalapproachaddingdifferential}, can be use for accounting mechanism satisfying zCDP as well.

\subsection{Exponential Mechanism for Private Selection}\label{app:EM}
Exponential Mechanism is a standard for private selection from a set of candidates. The selection is based on a score, which is assigned to every candidate \citep{EM}. Let us assume there is a private dataset $\mathcal{D}$ and a score function $s: \mathcal{D} \times [M] \to \mathbb R$, which evaluates a set of $M$ candidates on the dataset $\mathcal{D}$. The goal is to select the candidate with the highest score, i.e., $\argmax_{m \in [M]} s(\mathcal{D}, m)$. Exponential mechanism performs this selection privately as follows. It sets the probability of choosing any candidate $m \in [M]$ as:

\begin{align}
    \texttt{Pr}[m] = \frac{\text{exp}(\frac{\epsilon_{\text{select}}}{2\Delta}\cdot s(\mathcal{D}, m))}{\sum_{m' \in [M]} \text{exp}(\frac{\epsilon_{\text{select}}}{2\Delta}\cdot s(\mathcal{D}, m'))},
\end{align}
where $\delta$ is the sensitivity of the scoring function $s$ to the replacement of a data sample in $\mathcal{D}$. It can be shown that the private selection performed by exponential mechanism satisfies $\frac{1}{8}\epsilon_{\text{select}}^2$-zCDP with respect to $\mathcal{D}$ \citep{zcdp}, which from the last paragraph, we know satisfies $(\alpha, \frac{\alpha}{8} \epsilon_{\text{select}}^2)$-\RDP for $\alpha>1$. We implement exponential mechanism by noisy selection with Gumbel noise: we add independent noises from Gumbel distribution with scale $\frac{2\Delta}{\epsilon_{\text{select}}}$ to candidate scores $s(\mathcal{D}, m)$, for $m \in [M]$, and select the candiate with the maximum noisy score. The larger the sensitivity $\Delta$ of score $s$ to replacement of a single sample in $\mathcal{D}$, the required larger noise scale. For further details about how we implement exponential mechanism for \algname{IFCA} and \algname{R-DPCFL}, see \cref{app:EMimplementation}.

\subsection{Privacy Budgeting}
In order to have a fair comparison between our algorithm and the baselines, we align them all to have the same ``total" privacy budget $\epsilon$ and satisfy $(\epsilon, \delta)$-\DP for a fixed $\delta$. In order to account the privacy of an algorithm, we compose the RDP guarantees of all private operations in the algorithm and then convert the resulting RDP guarantee to approximate $(\epsilon, \delta)$-DP using \cref{lemma:rdptodp}. 
The \DPSGD performed by different algorithms for local training benefits from privacy amplification by subsampling \citep{mironov2019renyidifferentialprivacysampled}. Algorithms that have privacy overheads, e.g., \algname{IFCA} and \algname{R-DPCFL} which need to privatize their local clustering as well, will have less privacy budget left for training. In other words, for the same total privacy budget $\epsilon$, \algname{IFCA} and \algname{R-DPCFL} will use a larger amount of noise when running \DPSGD, compared to \algname{MR-MTL} that has zero privacy overhead.

\section{Experimental setup}\label{app:exp_setup}

\subsection{Datasets}\label{appendix:datasets}
\paragraph{Data split:}
\label{sec:mnist_exp_setup}
 We use three datasets MNIST, FMNIST and CIFAR10, and consider a distributed setting with $21$ clients. In order to create majority and minority clusters, we consider 4 clusters with different number of clients $\{3,6,6,6\}$ (21 clients in total). The first cluster with the minimum number of clients is the ``minority" cluster, and the last three are the ``majority" ones. The data distribution $P(x,y)$ varies across clusters. We use two methods for making such data heterogeneity: 1. \textbf{covariate shift} 2. \textbf{concept shift}. In covariate shift, we assume that features marginal distribution $P(x)$ differs from one cluster to another cluster. In order to create this variation, we first allocate samples to all clients in an \emph{uniform} way. Then we rotate the data points (images) belonging to the clients in cluster $k$ by $k*90$ degrees. For concept shift, we assume that conditional distribution $P(y|x)$ differs from one cluster to another cluster, and we first allocate data samples to clients in a uniform way, and flip the labels of the points allocated to clients: we flip $y_{ij}$ (label of the $j$-th data point of client $i$, which belongs to cluster $k$) to $(y_{ij} + k)~ \textit{mod} ~10$, The local datasets are balanced--all users have the same amount of training samples.  The local data is split into train and test sets with ratios $80$\%, and $20$\%, respectively.  In the reported experimental results, all users participate in each communication round.

\begin{table}[th]
\footnotesize	
\centering
\caption{CNN model for classification on MNIST/FMNIST datasets \label{table:mnist_fmnist_model}}
\begin{tabular}{lcccc} \toprule
          Layer &  Output Shape &  $\#$ of Trainable Parameters & Activation & Hyper-parameters  \\\midrule
           Input & $(1, 28, 28)$ & $0$ &  &  \\
           Conv2d & $(16, 28, 28)$ & $416$ & ReLU & kernel size =$5$; strides=$(1, 1)$ \\
           MaxPool2d & $(16, 14, 14)$ & $0$ &  & pool size=$(2, 2)$ \\
           Conv2d & $(32, 14, 14)$ & $12,\!832$ & ReLU & kernel size =$5$; strides=$(1, 1)$ \\
           MaxPool2d & $(32, 7, 7)$ & $0$ &  & pool size=$(2, 2)$ \\
           Flatten & $1568$ & $0$ & & \\
            Dense &  $10$ & $15,\!690$ & ReLU & \\
            \midrule
          Total & & $28,\!938$  & & \\ \bottomrule
\end{tabular}
\end{table}

\subsection{Models and optimization}
\label{sec:cifar_exp_setup}
We use a simple 2-layer CNN model with ReLU activation, the detail of which can be found in \Cref{table:mnist_fmnist_model} for MNIST and FMNIST. Also, we use the residual neural network (ResNet-18) defined in \citep{resnet}, which is a large model. To update the local models allocated to each client during each round, we apply \DPSGD \citep{Abadi2016} with a noise scale $z$ which depends on some parameters, as in \cref{eq:sigma_i^2}.

\begin{table*}[ht]
\centering
\caption{Details of the used datasets in the main body of the paper. ResNet-18 is the residual neural networks defined in \cite{resnet}. CNN: Convolutional Neural Network defined in \Cref{table:mnist_fmnist_model}. }
\label{tab:datasets}
\small
\setlength\tabcolsep{2pt}
\begin{tabular}{ccccccc}
\toprule
\bf{Datasets} & \bf{Train set size} & \bf{Test set size} & \bf{Data Partition method} & \bf{\# of clients} & \bf{Model} & \bf{\# of parameters} 
\\ 
\midrule
MNIST & 48000 & 12000 & covariate shift & $\{3,6,6,6\}$ & CNN & 28,938\\

FMNIST & 48000 & 12000 & covariate shift & $\{3,6,6,6\}$ & CNN & 28,938
\\

CIFAR10 & 40000 & 10000 & covariate and concept shift & $\{3,6,6,6\}$ & ResNet-18  & 11,181,642
\\

\bottomrule
\end{tabular}
\label{table:split_uniform}
\end{table*}

In order to simulate a \FL setting, where clients (silos) have large local datasets and there is a structured data heterogeneity across clusters, we split the full dataset between the clients belonging to each cluster. This way, each client gets $8,000$ train and $1,666$ test samples for MNIST and FMNIST. Also, each client gets $10,000$ and $1,666$ train and test samples for CIFAR10 dataset (both covariate shift and concept shift).

\subsection{Baseline selection}\label{app:baseline_selection} When extending existing model personalization and clustered \FL algorithms to \DPFL settings, we are mostly interested in those with little to no additional local dataset queries to prevent extra noise for \DPSGD under a fixed total privacy budget $\epsilon$. For instance, the family of mean-regularized multi-task learning methods (\algname{MR-MTL}) \citep{mrmtl1, mrmtl2, mrmtl3, mrmtl4} provide model personalization \emph{without an additional privacy overhead}. Despite this, it is noteworthy that \algname{MR-MTL} relies on optimal hyperparameter tuning which leads to a potential privacy overhead \citep{liu2022csfl, liu2018privateselectionprivatecandidates, papernot2022hyperparametertuningrenyidifferential}. While resembling \algname{MR-MTL}, \algname{Ditto} \citep{ditto} has extra local computations, which makes it a less attractive personalization algorithm. Hence, we adopt \algname{MR-MTL} \citep{liu2022csfl} as a baseline personalization algorithm. Similarly, multi-task learning algorithms of \cite{Smith2017FederatedML} and \cite{ Marfoq2021FederatedML} as well as gradient-based clustered \FL algorithm of \cite{Sattler2019ClusteredFL} benefit from additional training and training restarts, which lead to high privacy overhead for them, making them less attractive. In contrast, the aforementioned loss-based clustered \FL algorithms \citep{mansour2020approaches, ghosh2020, fedsoft2022} can be managed to have a low privacy overhead (see \Cref{app:EMimplementation}), and we use it as a clustered \DPFL baseline.

\subsection{\algname{MR-MTL} formulation}\label{app:mrmtl}
The objective function of Mean-Regularzied Multi-Task Learning (\algname{MR-MTL}) can be expressed as:

\begin{align}
    \min_{\thetav_i, i \in \{1, \cdots, n\}} \sum_{i=1}^n g_i(\thetav_i) ~~~~~ \text{with} ~~~~~~  g_i(\theta_i) = f_i(\thetav_i) + \frac{\lambda}{2}\|\thetav_i - \bar \thetav\|_2^2,
\end{align}

where $\bar \thetav = \frac{1}{n} \sum_{i=1}^n \thetav_i$ is the average model parameter across clients and $f_i(\thetav_i)$ is the loss function of personalized model parameter $\theta_i$ of client $i$ on its local dataset $\mathcal{D}_i$. With $\lambda=0$, \algname{MR-MTL} reduces to local training. A larger regularization term $\lambda$ encourages local models to be closer to each other. However, \algname{MR-MTL} may not recover \algname{FedAvg} \citep{mcmahan2017communication} as $\lambda \rightarrow \infty$. See section E.2 and algorithm A1 in \citep{liu2022csfl} for more details about \algname{MR-MTL}.

\subsection{Tuning hyperparameters of baseline algorithms}

\Cref{app:baseline_selection} explains our criteria for baseline selection. We compare our \algname{R-DPCFL} algorithm, which benefits from robust clustering, with four baseline algorithms, including: 1) \algname{DPFedAvg} \citep{DPSCAFFOLD2022}, which learns one global model for all clients, and is called \algname{Global} in the paper 2) \algname{Local}, in which clients do not participate \FL and run \DPSGD locally to train a model solely on their local dataset 3) \algname{MR-MTL} personalized \FL algorithm \citep{liu2022csfl}, which learns a global model and one personalized model for each client 4) A \DP extension of the clustered \FL algorithm \algname{IFCA} \citep{ghosh2020} to \DPFL systems enhanced with exponential mechanism (see \cref{app:EMimplementation}) 5) An oracle algorithm, which has the knowledge of the true underlying clients' clusters, which we call \algname{O-DPCFL}.

For all algorithms and all datasets, we set total number of rounds $E$ to $200$ and per-round number of local epochs $K$ to $1$. Following \citep{Abadi2016}, we set the batch size of each client such that the number of batches per epoch is in the same order as the total number of epochs: $N_i/b_i^e = E\cdot K = 200$. For MNIST and FMNIST, this leads to batch sizes $b_i^e=32$ for all clients $i$ and every round $e$ for the baseline algorithms. For CIFAR10 (covariate shift and concept shift), this leads to batch size $b_i^e=64$ for all clients $i$ and every round $e$ for the baseline algorithms. While \algname{R-DPCFL} uses full batch sizes in the first round (i.e., $b_i^1=N_i$ for all $i$), it needs to use small batch sizes in the next rounds. We have further explained about this in \Cref{app:RDPCFL_hparams}.

Having determined the batch size for all algorithms, clipping threshold $c$ and learning rate $\eta_l$ are determined via a grid search on clients' validation sets. For each algorithm and each dataset, we find the best learning rate from a grid: the one which results in the highest average accuracy at the end of \FL training on a ``validation set" with size $1666$ samples for each client. We use the grid $\eta_l \in $\texttt{\{5e-4, 1e-3, 2e-3, 5e-3, 1e-2, 2e-2, 5e-2, 1e-1\}} for all datasets and all algorithms. Similarly, we use the grid $c \in$ \texttt{\{1, 2, 3, 4, 5\}} for setting the clipping threshold for all datasets and all algorithms based on the clients' validation sets.

\subsection{Implementation of private local clustering for \algname{IFCA} and \algname{R-DPCFL}}\label{app:EMimplementation}
In every round of \algname{IFCA} and during the rounds $e>E_c$ of \algname{R-DPCFL}, the server sends $M$ cluster models to all clients, and they evaluate them on their local datasets. Then, each client $i$ selects the model with the lowest loss on its local dataset $\mathcal{D}_i$, trains it for $K$ local epochs and sends the result back to the server. This model selection performed by each client can lead to privacy leakage w.r.t its local dataset, if it is not privatized. In order to protect data privacy, clients need to privatize their local clustering by using exponential mechanism and accounting its privacy using z-\CDP, explained in \cref{app:EM}. Assuming a total privacy budget $\epsilon$ for a client $i$, it has to split the budget between private clustering and \DPSGD. Naive split of privacy budget can lead to very noisy \DPSGD steps or very noisy local cluster selection by clients. Following \cite{liu2022csfl}, we use two strategies to mitigate the privacy overhead of local clustering performed by \algname{IFCA} and \algname{R-DPCFL}:

\begin{itemize}
    \item \textbf{Clients use model accuracy, instead of loss, as the score function for model selection:} clients use model accuracy as score function $s(\mathcal{D}_i, m)$ evaluating cluster model $m$ on client $i$'s dataset. The reason is that while, loss function has practically an unbounded sensitivity to individual samples in the clients' datasets, model accuracy is a low-sensitivity function, espcially in cross-silo \FL settings with large local datasets. More specifically, let us assume client $i$ with local dataset $\mathcal{D}_i$ (which has size $N_i$) uses models' accuracy on $\mathcal{D}_i$ for model selection. It can be shown that under all add/remove/replace notions of dataset neighborhood, sensitivity of model accuracy (as score function) is bounded as follows \citep{liu2022csfl}:

    \begin{align}
        \Delta_{\textit{acc}}  = \max_{m \in [M]} \max_{\mathcal{D}_i, \mathcal{D}'_i} |s(\mathcal{D}_i, m) - s(\mathcal{D}_i, m)|\leq \frac{1}{N_i - 1}\cdot
    \end{align}
Since local dataset sizes are usually large, especially in cross-silo \FL, the sensitivity of model accuracy is much smaller than that of model loss. Therefore, following \cite{liu2022csfl}, we set the per-round privacy budget of private model selection to a very smalle value $\epsilon_{\text{select}} = 0.03 \cdot\epsilon$ (3\% of the total privacy budget). Yet, the cost of private selection by clients can grow quickly if clients naively run local clustering for ``many" rounds. Therefore, we use the following strategy as well. It is noteworthy that in our experiments, we observed that \algname{IFCA} baseline algorithm performs better when clients use model train accuracy (instead of train loss) for cluster selection.

    \item \textbf{Reduce the number of rounds with local clustering on clients' side:} Clients run local clustering for less rounds. Following \cite{liu2022csfl}, we let clients run local clustering for the $10\%$ of the total number of rounds $E$. For example, \algname{IFCA} runs local clustering during the first $\lfloor \frac{E}{10}\rfloor$ rounds, and fixes clients' cluster assignments afterwards. Similarly, \algname{R-DPCFL} lets clients run local clustering during rounds $E_c \leq e \leq E_c + \lfloor \frac{E}{10}\rfloor$, and fixes clients' cluster assignments afterwards. 
\end{itemize}

The privacy overhead of private model selection can still grow and leave a low privacy budget for training with \DPSGD. Choosing a small selection budget $\epsilon_{\text{select}} $ leaves most of the total privacy budget $\epsilon$ for training with \DPSGD, but leads to noisy and inaccurate cluster selection by clients. Similarly, a large $\epsilon_{\text{select}} $ leads to more noisy gradient steps by \DPSGD.

\subsection{\DP privacy parameters}
For each dataset, 5 different values of $\epsilon$ (the total privacy budget) from set $\{3,4,5,10,15\}$ are used.  We fix $\delta$ for all experiments to $10^{-4}$, which satisfies $\delta<N_i^{-1}$ for every client $i$. We use the Renyi DP (\RDP) privacy accountant (TensorFlow privacy implementation \citep{mcmahan2019generalapproachaddingdifferential}) during the training time. This accountant is able to handle the difference in the batch size of \algname{R-DPCFL} between the first round $e=1$ and the next rounds $e>1$ by accounting the composition of the corresponding \emph{heterogeneous} private mechanisms.

\subsection{Gaussian Mixture Model}
We use the Gaussian Mixture Model of Scikitlearn, which can be found here: \url{https://scikit-learn.org/dev/modules/generated/sklearn.mixture.GaussianMixture.html}. The \GMM model has three hyper-parameters: 

1) parameter initialization, which we set to ``\texttt{k-means++}". This is because this type of initialization leads to both low time to initialize and low number of \EM iterations for the \GMM to converge \citep{Arthur2007kmeansTA, Biernacki2003ChoosingSV}.

2) Type of the covariance matrix, which we set to ``\texttt{spherical}", i.e., each component has a diagonal covariance matrix with a single value as its diagonal elements. This is in accordance with \cref{eq:diagonal_cov} and that we know the covariance matrices should be diagonal. 

3) Finally, the number of components (clusters) is either known or it is unknown. In the latter case, we have explained in \cref{sec:M} how we can find the true number of clusters by using the confidence level (\texttt{MSS}) of the \GMM model.

\section{Setting hyper-parameters of \algname{R-DPCFL}}\label{app:RDPCFL_hparams}

As explained in the paper, \algname{R-DPCFL} has three hyperparameters, which we explain how to set in the following: 

\begin{figure*}
\centering
\includegraphics[width=0.47\columnwidth, height=5cm]{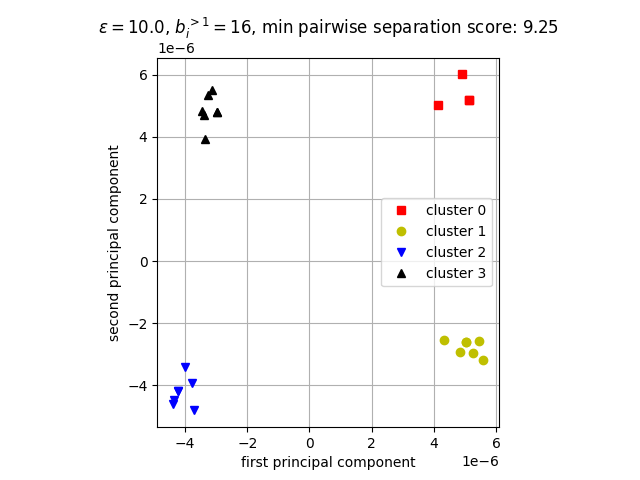}
\includegraphics[width=0.47\columnwidth, height=5cm]{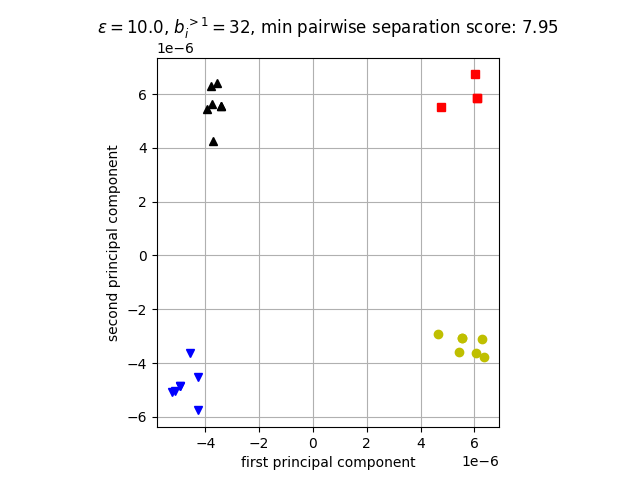}

\includegraphics[width=0.47\columnwidth, height=5cm]{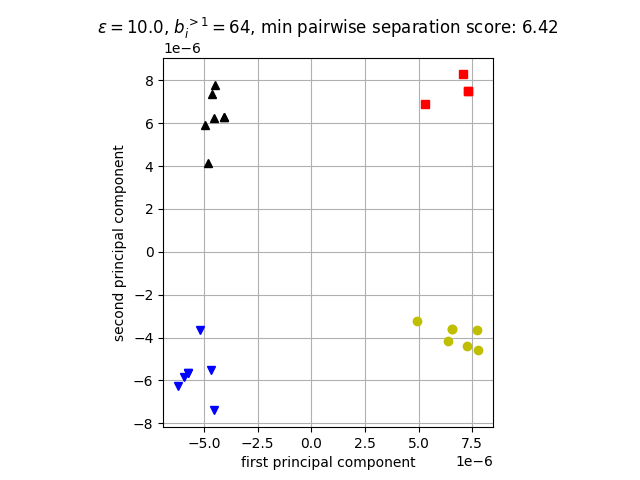}
\includegraphics[width=0.47\columnwidth, height=5cm]{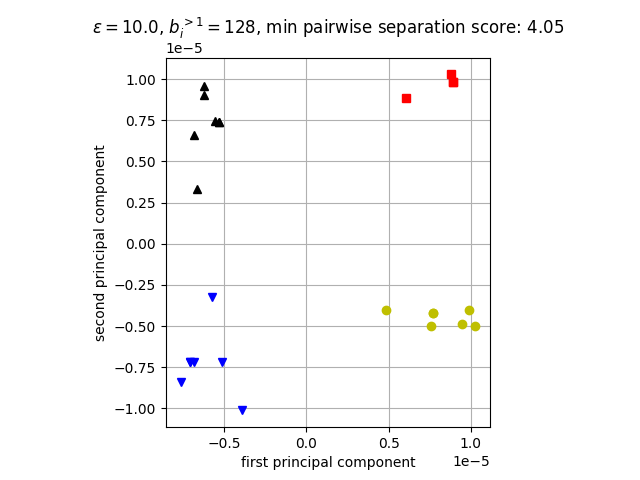}

\includegraphics[width=0.47\columnwidth, height=5cm]{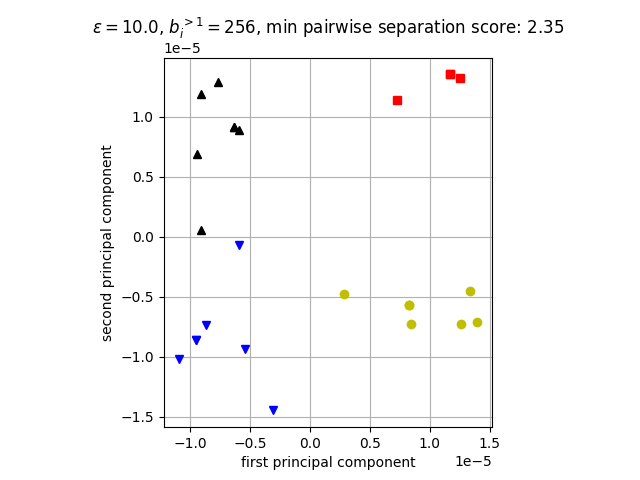}
\includegraphics[width=0.47\columnwidth, height=5cm]{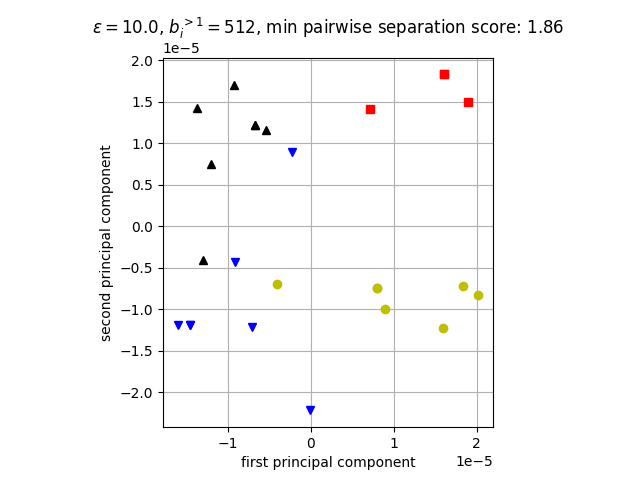}

\caption{The effect of increasing the batch size after the first round, i.e., $b_i^{>1},$ on the model updates $\{\Delta \Tilde{\mathbf{\thetav}}_i^1\}_{i=1}^n$ at the end of the first round. All clients have used full batch sizes in the first round, i.e., $\forall i: b_i^1=N_i$. The level of noise in $\{\Delta \Tilde{\mathbf{\thetav}}_i^1\}_{i=1}^n$ affects the quality and confidence of the client clustering that the server performs at the end of the first round. As can be observed, for a fixed $\epsilon=10$, the model updates scatter further in space as $b_i^{>1}$ increases and different clusters get less separated. This leads to a decrement in the confidence level or the (\texttt{MSS}) score of the resulting \GMM, as $b_i^{>1}$ increases (mentioned on the top of each plot). All the results are obtained on CIFAR10 with covariate shift (rotation) across clusters.} 
\label{fig:b2andnoise}
\end{figure*}

\subsection{Batch size $b_i^{>1}$}\label{sec:b2}
Batch size $b_i^{>1}$, which is the batch size used by \algname{R-DPCFL} during the rounds $e>1$, has to be set to a small value, as observed in \cref{fig:var1var2} right. \algname{R-DPCFL} is not sensitive to this parameter, as long as a small value is chosen for it. For the results in the paper, we use $b_i^{>1}=32$ for all experiments with \algname{R-DPCFL}. We further explain about the effect of this parameter as follows:

As we observed in \cref{lemma:updatesnoise} and \cref{fig:var1var2} left, $\texttt{Var}(\Delta \Tilde{\mathbf{\thetav}}_i^1(b_i^1)|\mathbf{\thetav}^{init})$ is an increasing function of $b_i^{>1}$. More generally, the effect of increasing $b_i^{>1}$ is on three things: 1) increasing noise variance $\texttt{Var}(\Delta \Tilde{\mathbf{\thetav}}_i^1(b_i^1)|\mathbf{\thetav}^{init})$ (as shown in \cref{fig:var1var2}, left) 2) decreasing noise variance $\texttt{Var}(\Delta \Tilde{\mathbf{\thetav}}_i^1(b_i^e)|\mathbf{\thetav}_i^{e,0})$ (as shown in \cref{fig:var1var2}, right) 3) decreasing number of gradient steps during each round $e$ for $e>1$. While the first one is only limited to the first round $e=1$, the last two affect the remaining $E-1$ rounds and have conflicting effects on the final accuracy. However, an important point about the problem of \DP clustered \FL is that finding the true structure of clusters in the first round is a prerequisite for making progress in the next rounds. Therefore, increment in noise variance $\texttt{Var}(\Delta \Tilde{\mathbf{\thetav}}_i^1(b_i^1)|\mathbf{\thetav}^{init})$ (the first effect) is the most important one. We have demonstrated this effect in \cref{fig:b2andnoise}, which shows that how increasing $b_i^{>1}$ adversely affects the clustering done at the end of the first round. Note how \texttt{MSS} score of the learned \GMM increases as $b_i^{>1}$ increases. Therefore, in order to have a reliable client clustering at the end of the first round, we need to keep the value of $b_i^{>1}$ to small values: the smaller total privacy budget $\epsilon$, the smaller value should be used for $b_i^{>1}$. Following this observation, we have fixed $b_i^{>1}$ to 32 in all our experiments with \algname{R-DPCFL}.

\subsection{The strategy switching time $E_c$}

The strategy switching time  $E_c$ can also be set by using the uncertainty metric $\texttt{MPO} \in [0,1)$. Intuitively, if the learned \GMM is not certain about its clustering decisions, \algname{R-DPCFL} should not rely on its decisions for a large $E_c$, and vice versa. Hence, we can set $E_c$ as a decreasing function of \texttt{MPO}. For instance, $E_c = (1-\texttt{MPO})\frac{E}{2}$, which is used in this work, means that if a \GMM is completely confident about its clusterings, e.g., what happens in \cref{fig:client_updates_differentbatch} right, the server changes the clustering strategy to loss-based after the first half of rounds. As the uncertainty increases, this change happens earlier (e.g., when $\epsilon$ is small), and  \algname{R-DPCFL} slowly gets close to the existing loss-based clustering methods like \algname{IFCA} \citep{ghosh2020}.

\subsection{The number of clusters $M$}\label{sec:M}

Knowing the number of clusters is broadly accepted and applied in the clustered FL literature \citep{ghosh2020,fedsoft2022,briggs2020federated}. \textbf{This is the assumption of our baseline algorithms too}. Yet, techniques to determine the number of clusters can enable our approach to be more widely adopted. In this section, we show that how we can find the true number of clusters ($M$) when it is not given. Our method relies on the \texttt{MSS} score (confidence level) defined in \cref{sec:applicability}: $\texttt{MSS} = \min_{m,m'} \hat{\texttt{SS}}(m,m') \in [0, +\infty)$ (please see the detailed explanations in \cref{sec:applicability}). Consider the \cref{fig:client_updates_differentbatch} right as an example. There is a good separation between the $M=4$ existing clusters, thanks to clients using full batch sizes in the first round. Fitting a \GMM with 4 components to the model updates results in the highest \texttt{MSS} for the learned \GMM model: remember that \texttt{MSS} was the maximum pairwise separation score between the different components of the learned \GMM. In contrast, if we fit a \GMM with $3$ components (less than the true number of components) to the same model updates in the figure, then two clusters will be merged into one component (for examples clusters 0 and 1) leading to a high radius for one of the three components of the resulting \GMM. This leads to a low \texttt{MSS} (confidence level) for the resulting \GMM. Similarly, if we fit a \GMM with 5 components, one of the four clusters (for example cluster 1) will be split between two of the 5 components (call them $m$ and $m'$), which leads to a low inter-component distance ($\Delta_{m,m'}$) for the pair of components. This also leads to a low \texttt{MSS} for the resulting \GMM. However, \textbf{fitting a \GMM with $M=4$ components leads to a well separation between all the true components and maximizes the resulting \texttt{MSS}}. Based on this very intuitive observation, we propose the following method for setting $m$ at the end of the first round: We select the number of clusters/components, which leads to the maximum \texttt{MSS} for the resulting \GMM. More specifically:

\begin{align}
    M = \argmax_{m \in S} \texttt{MSS}\Big(\GMM(\Delta \Tilde{\mathbf{\thetav}}_1^1, \ldots, \Delta \Tilde{\mathbf{\thetav}}_n^1; m)\Big), (\text{line 10 of \cref{alg:R-DPCFL}})
\end{align}
where $S$ is a set of candidate values for $M$: at the end of the first round and on the server, we learn one \GMM for each candidate value in $S$ on the same received model updates $\{\Delta \Tilde{\thetav}_i^1\}_{i=1}^n$. Finally, we choose the value resulting in the \GMM with the highest \texttt{MSS} (confidence). Therefore, this method is run on the server and does not incur any additional privacy overheads. It is also noteworthy that we know from \cref{lemma:localdp} that learning the \GMM does not incur much computational cost when large enough (and small enough) batch sizes are used in the first round (subsequent rounds).

\begin{figure*}[t]
\centering
\includegraphics[width=0.47\columnwidth, height=5.2cm]{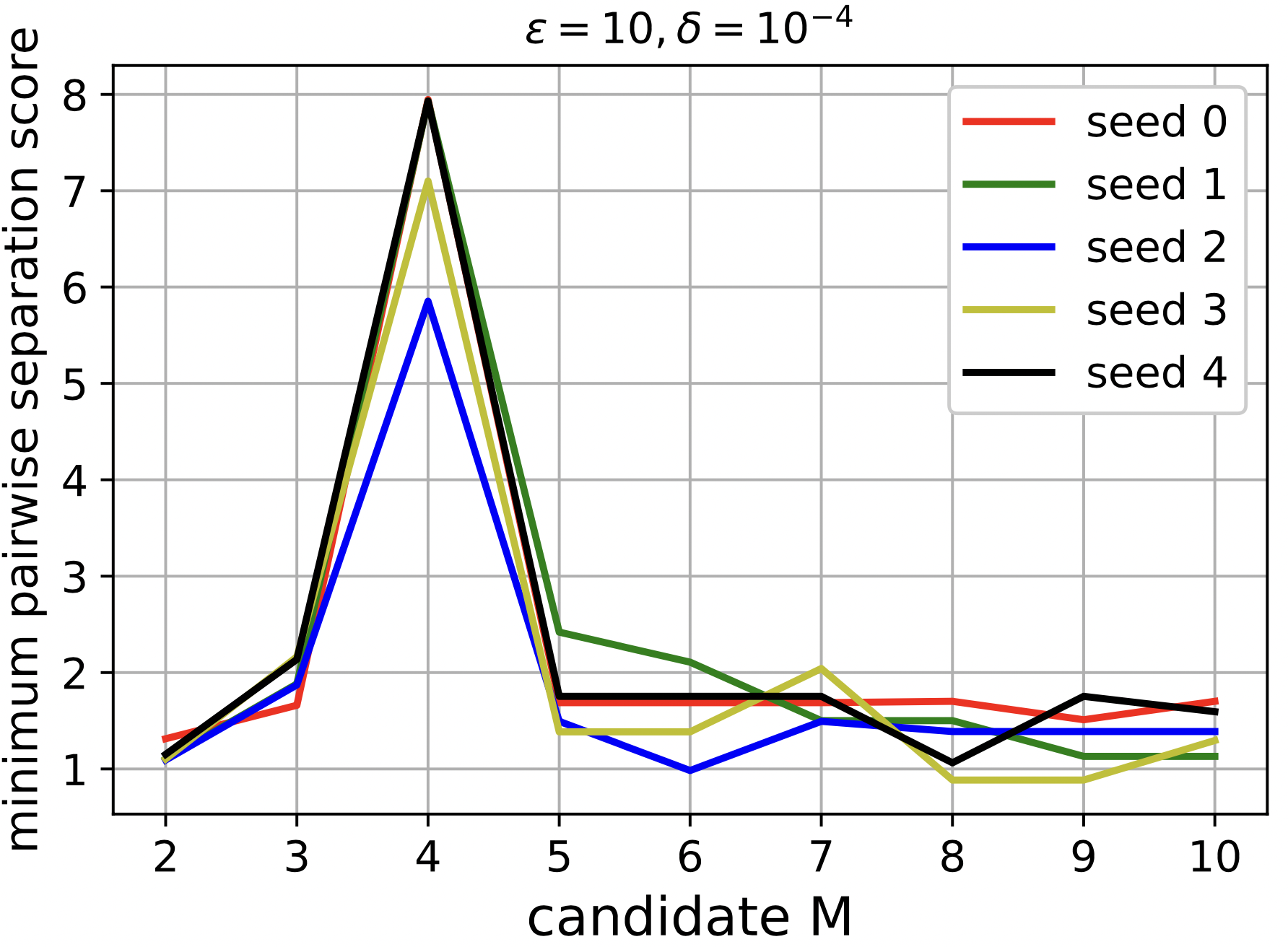}
\includegraphics[width=0.47\columnwidth, height=5.2cm]{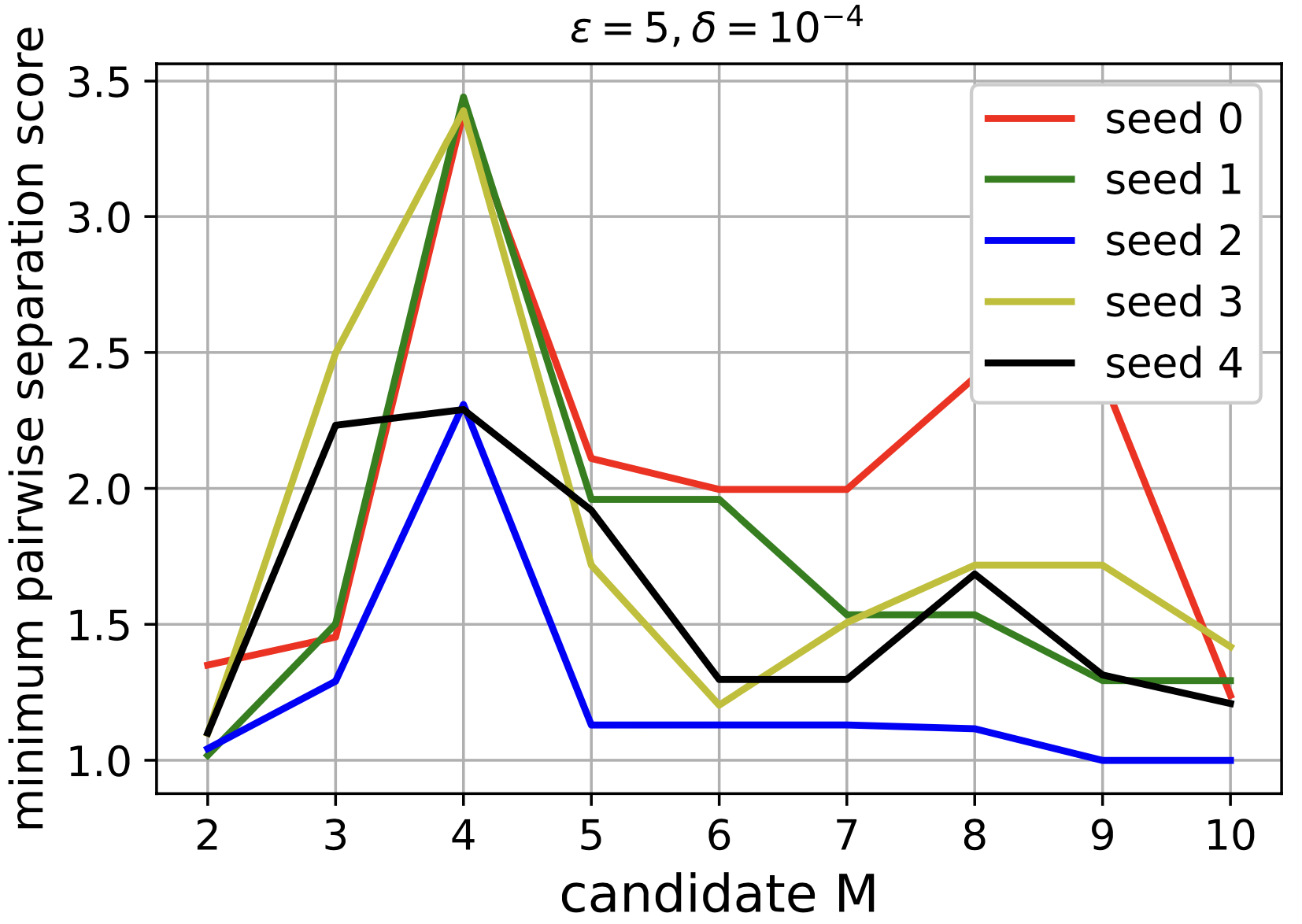}

\includegraphics[width=0.47\columnwidth, height=5.2cm]{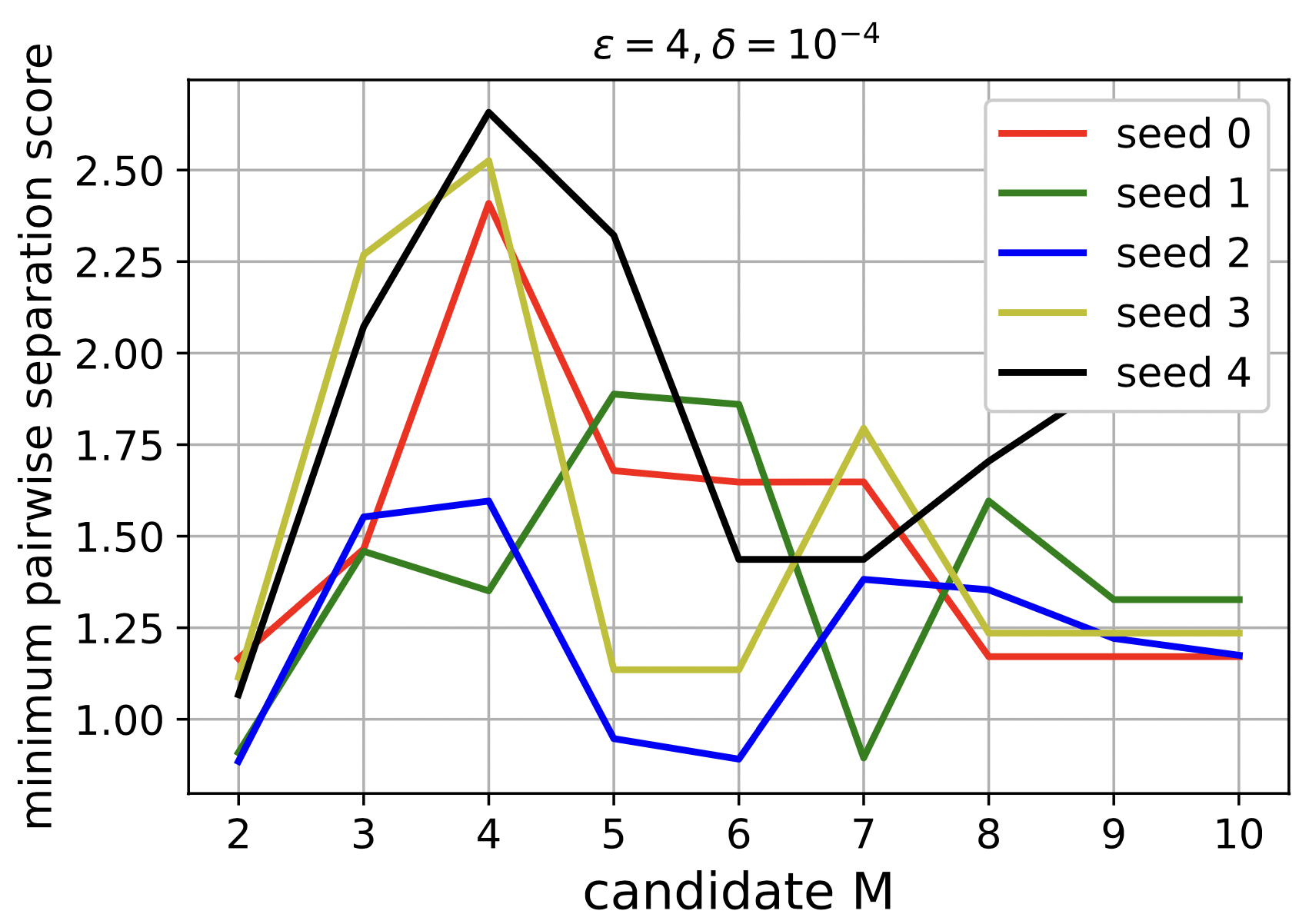}
\includegraphics[width=0.47\columnwidth, height=5.2cm]{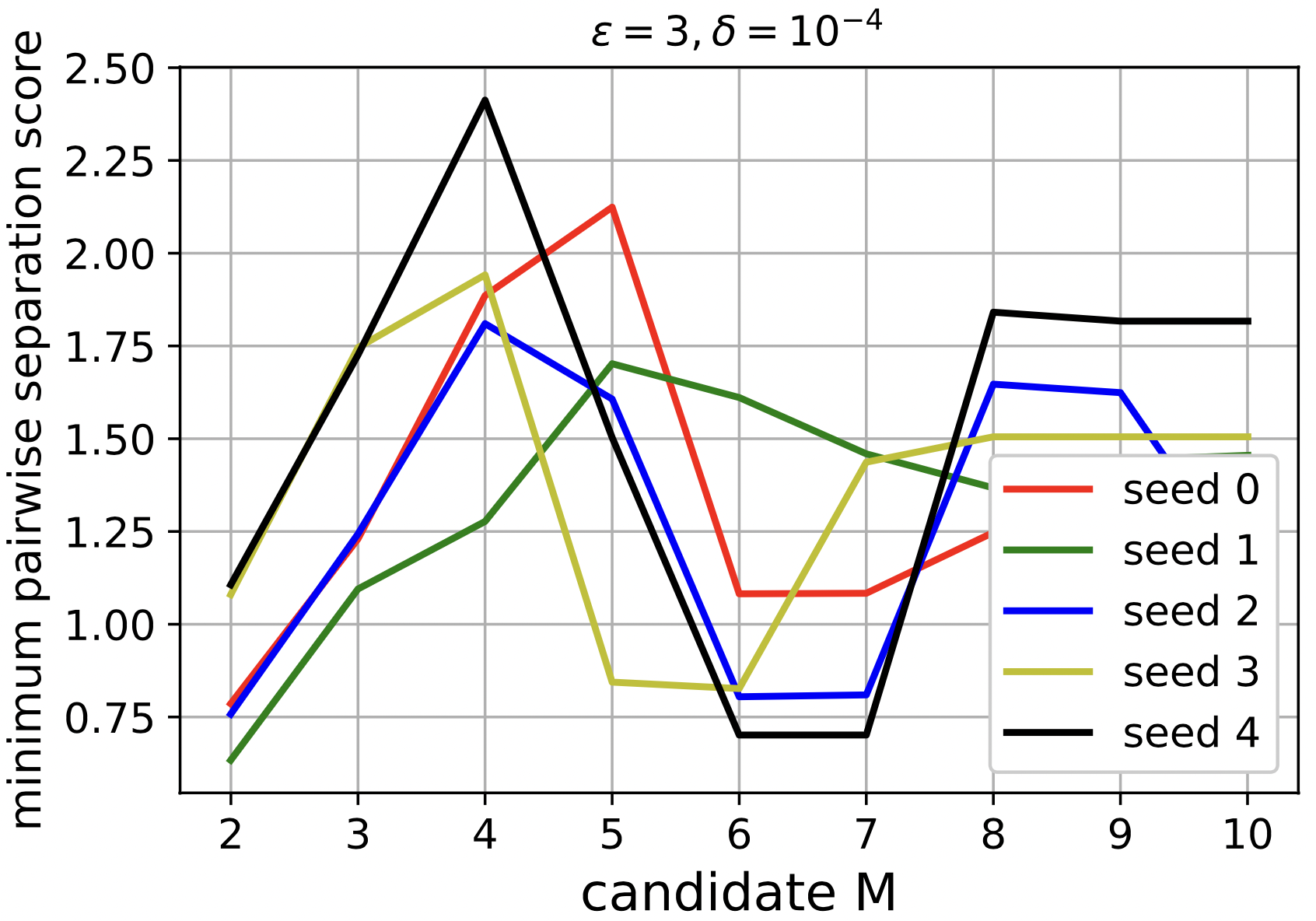}

\caption{The minimum pairwise separation score (\texttt{MSS}) or confidence of the \texttt{GMM} learned on $\{\Delta \Tilde{\mathbf{\thetav}}_i^1\}_{i=1}^n$ peaks at the true cluster number, which is equal to 4 in all the plots above. Each figure is for a different value of $\epsilon$ (mentioned on top of each figure), and are obtained on CIFAR10 with covariate shift (rotation) across clusters, and 5 different random data splits (5 seeds). All the results are obtained with full batch sizes in the first round and $b_i^{>1}=32$ for all $i$. We can use this observation as a method to find the true number of clusters ($M$) when it is not given. For larger $\epsilon$, this method work perfectly and even when $\epsilon$ is too small, e.g., $\epsilon=3$, this method works well and predicts the true number of clusters correctly most of the times: 3 out of the 5 curves in the bottom right plot have a peak at $M=4$ (the true cluster number). and the other 2 curves predict $5$ as the true number, which is the closest and the best alternative for the true value $M=4$.}
\label{fig:findingM}
\end{figure*}

We have evaluated this method on multiple data splits and different privacy budgets ($\epsilon$) on CIFAR10, MNIST and FMNIST. The method could predict the number of underlying clusters with 100\% accuracy for the MNIST and  FMNIST datasets for all values of $\epsilon$. Results for CIFAR10 are shown in \cref{fig:findingM}. As can be observed, the method has made only one mistake for $\epsilon=4$ (seed 1) and two mistakes for $\epsilon=3$ (seeds 0 and 1), out of 20 total experiments. Even in those three cases, it has predicted $M$ as 5, which is closest to the true value ($M=4$) and does not lead to much performance drop (because having $M=5$ splits an existing cluster into two and it is better than predicting for example $M=3$, which results in "mixing" two clusters with heterogeneous data). Even in this cases, we can improve the prediction accuracy further by using smaller values of $b_i^{>1}$ (simultaneously with full batch sizes $b_i^1=N_i$), e.g., $b_i^{>1}=16$ or $b_i^{>1}=8$, instead of $b_i^{>1}=32$ in the figure above. This improvement  happens as reducing $b_i^{>1}$ constantly enhances the separation between the underlying components (See \Cref{fig:b2andnoise}), which leads to higher accuracy in prediction of the true $M$.

Finally, note that none of the existing baseline algorithms has such an easy and applicable strategy for finding $M$. This shows another useful feature of the proposed \algname{R-DPCFL}, which makes it more applicable to \DP clustered \FL settings.

\section{More experimental results}\label{app:full_results}

The results shown in \Cref{fig:avg_test_acc_allalgs} and \cref{fig:avg_test_acc_minority_allalgs} include the results for the \algname{Global} baseline and are the more complete versions of the figures in the paper (\Cref{fig:avg_test_acc} and \cref{fig:avg_test_acc_minority}).

\begin{figure*}[h]
\centering
\includegraphics[width=0.24\columnwidth,height=3.5cm]{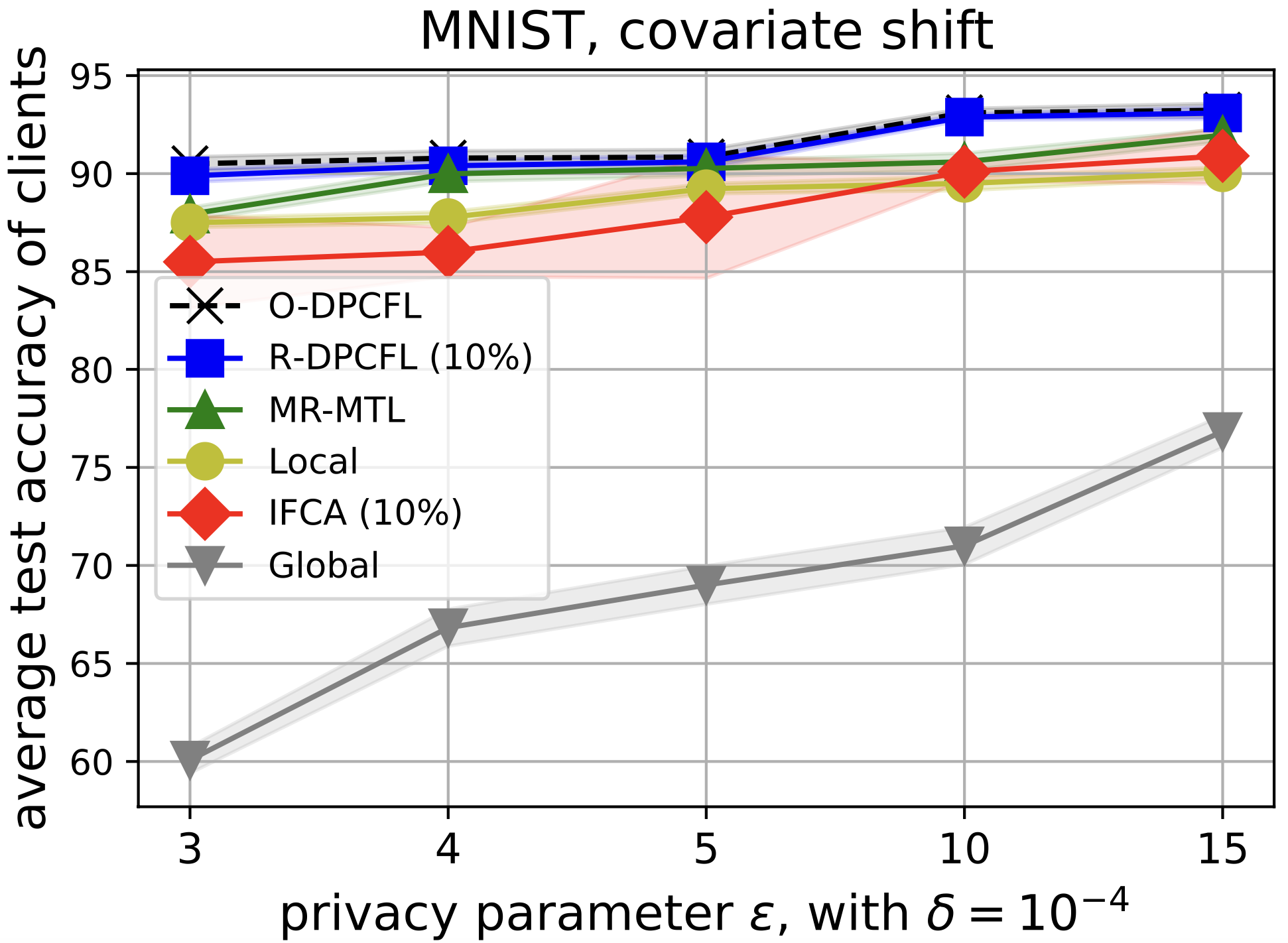}
\includegraphics[width=0.24\columnwidth,height=3.5cm]{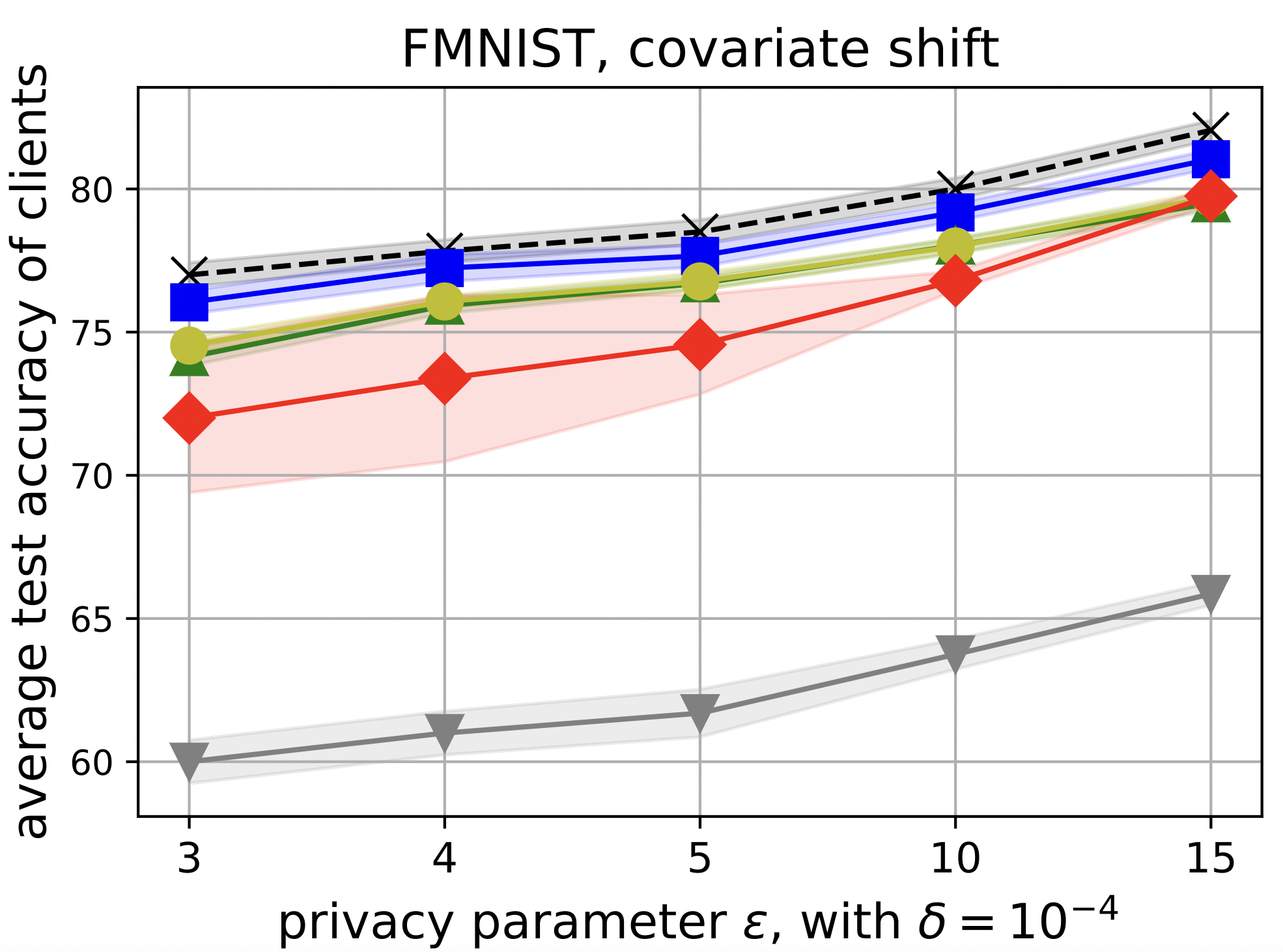}
\includegraphics[width=0.24\columnwidth,height=3.5cm]{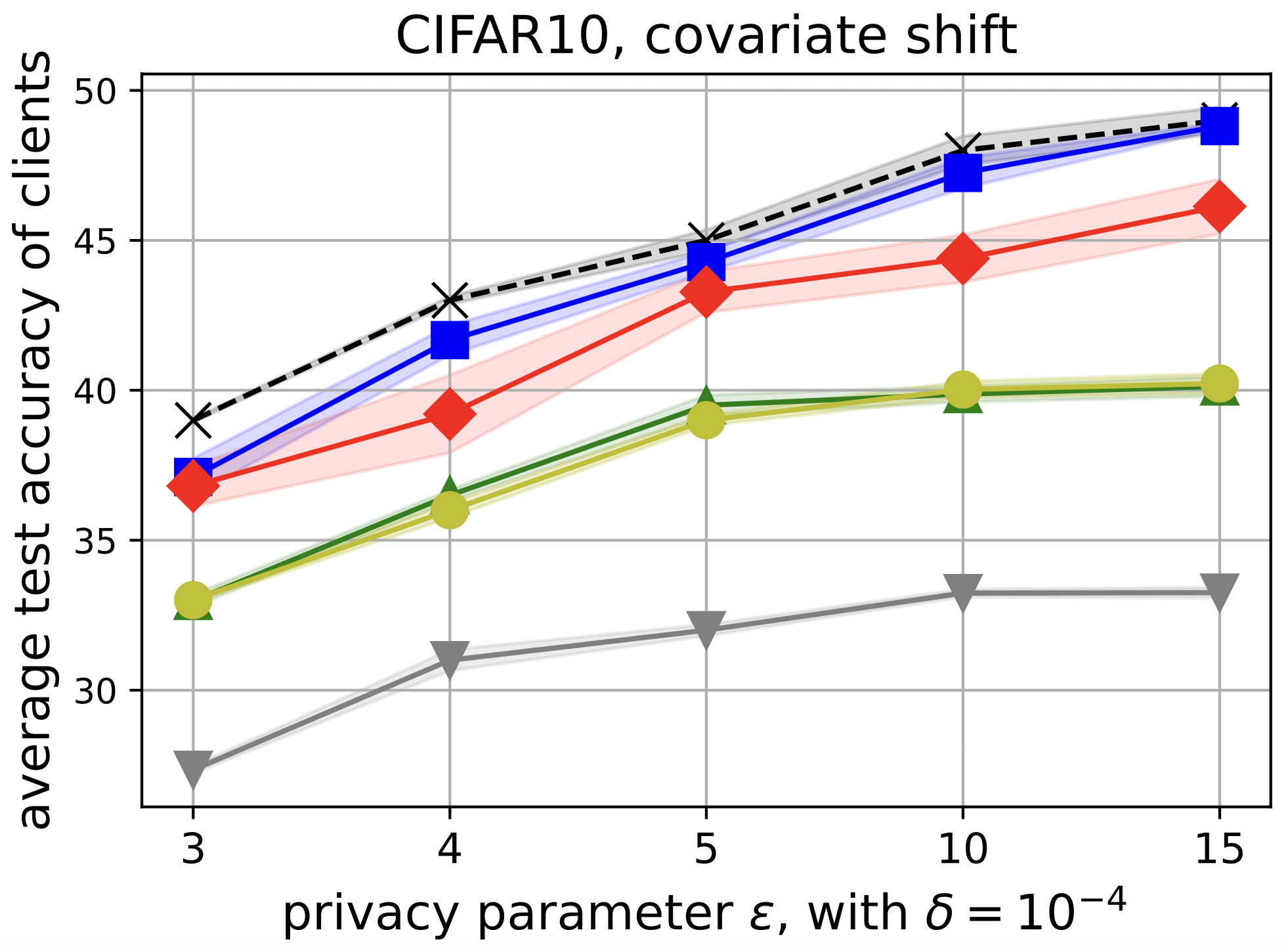}
\includegraphics[width=0.24\columnwidth,height=3.5cm]{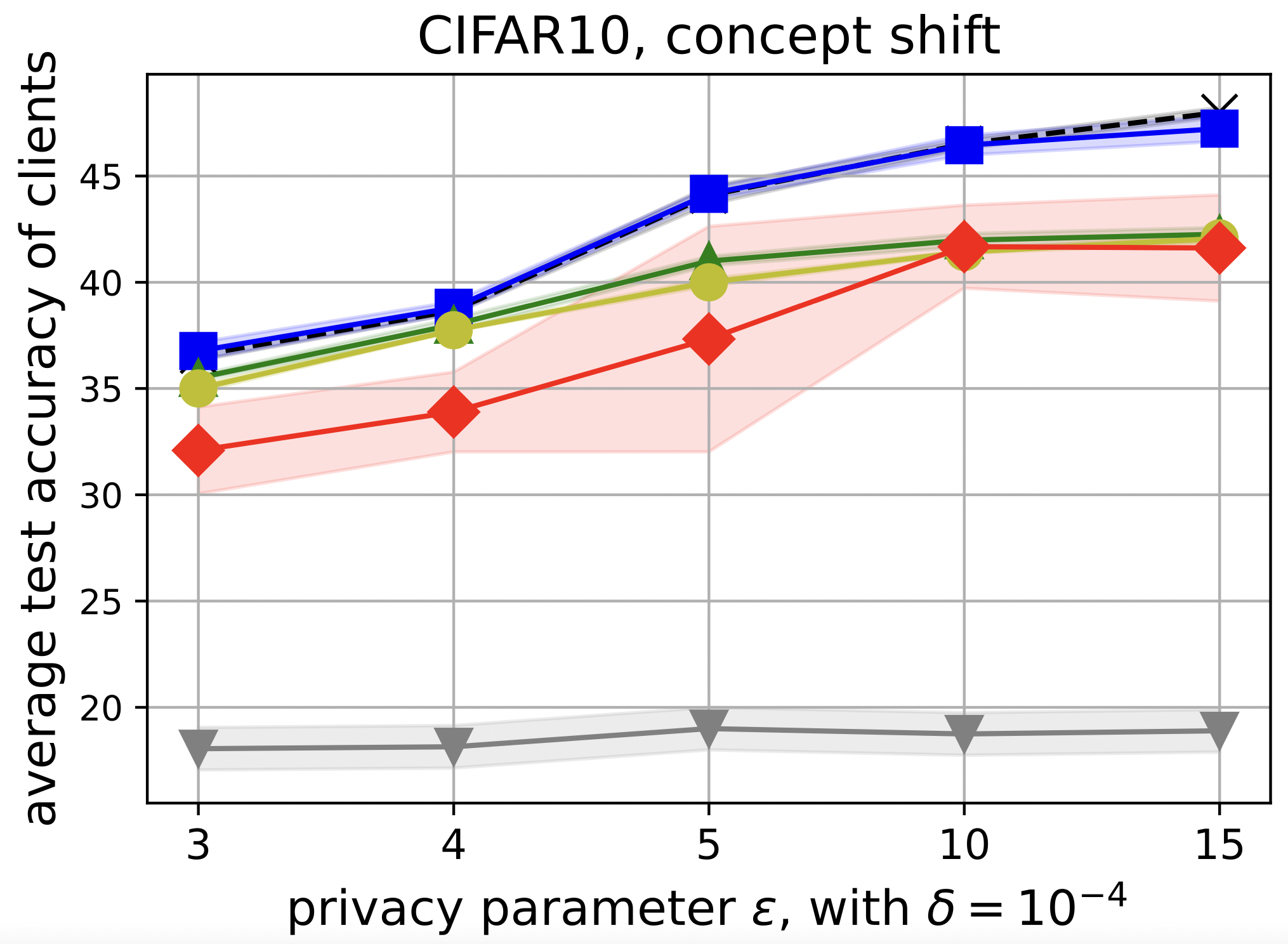}
\caption{Average test accuracy across clients for different total privacy budgets $\epsilon$ (results are obtained from 4 different random seeds). $10\%$ means performing loss-based clustering by clients only in $10\%$ of the total rounds $(E)$. }
\label{fig:avg_test_acc_allalgs}
\end{figure*}

\begin{figure*}[h]
\centering
\includegraphics[width=0.24\columnwidth,height=3.5cm]{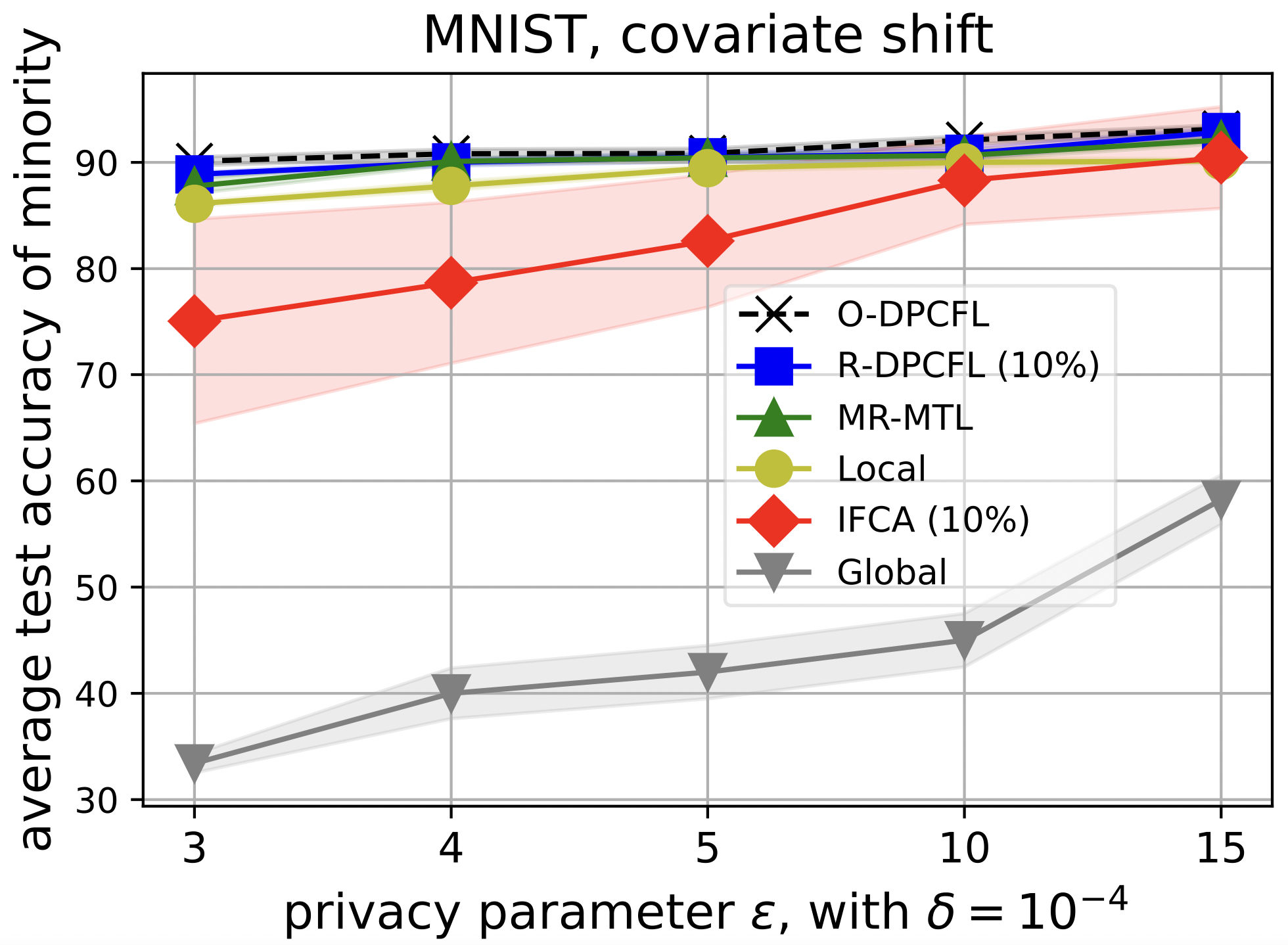}
\includegraphics[width=0.24\columnwidth,height=3.5cm]{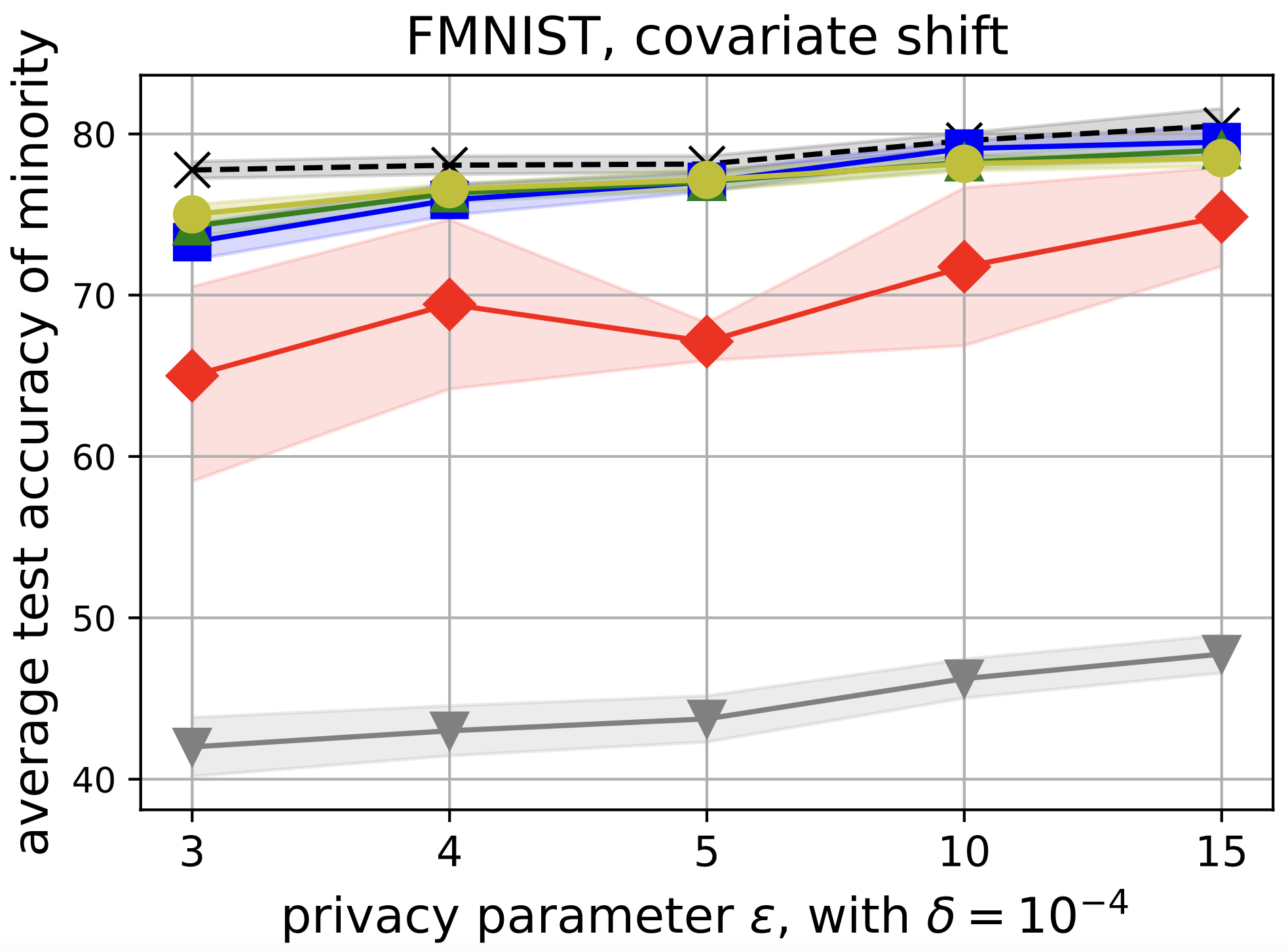}
\includegraphics[width=0.24\columnwidth,height=3.5cm]{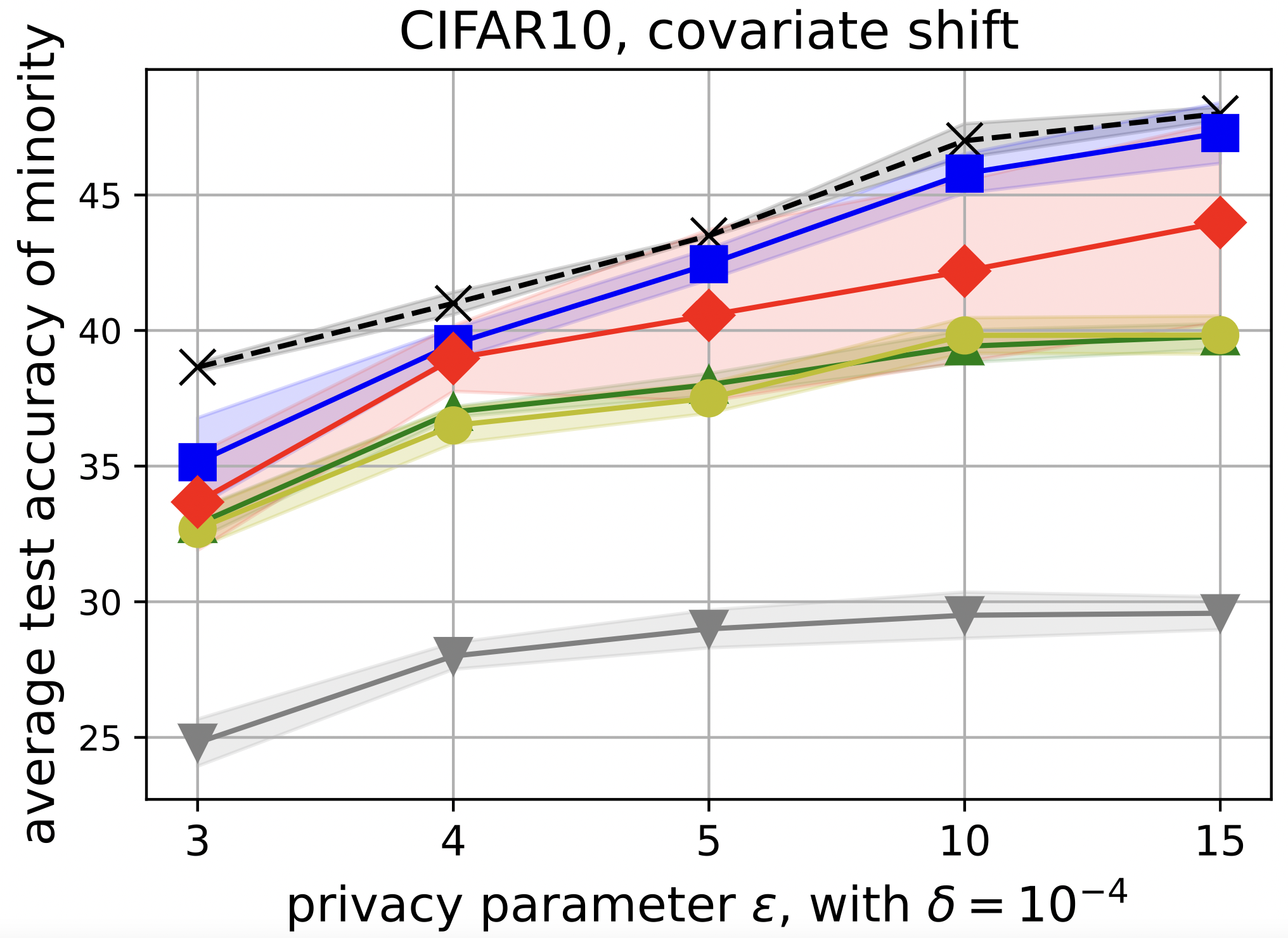}
\includegraphics[width=0.24\columnwidth,height=3.5cm]{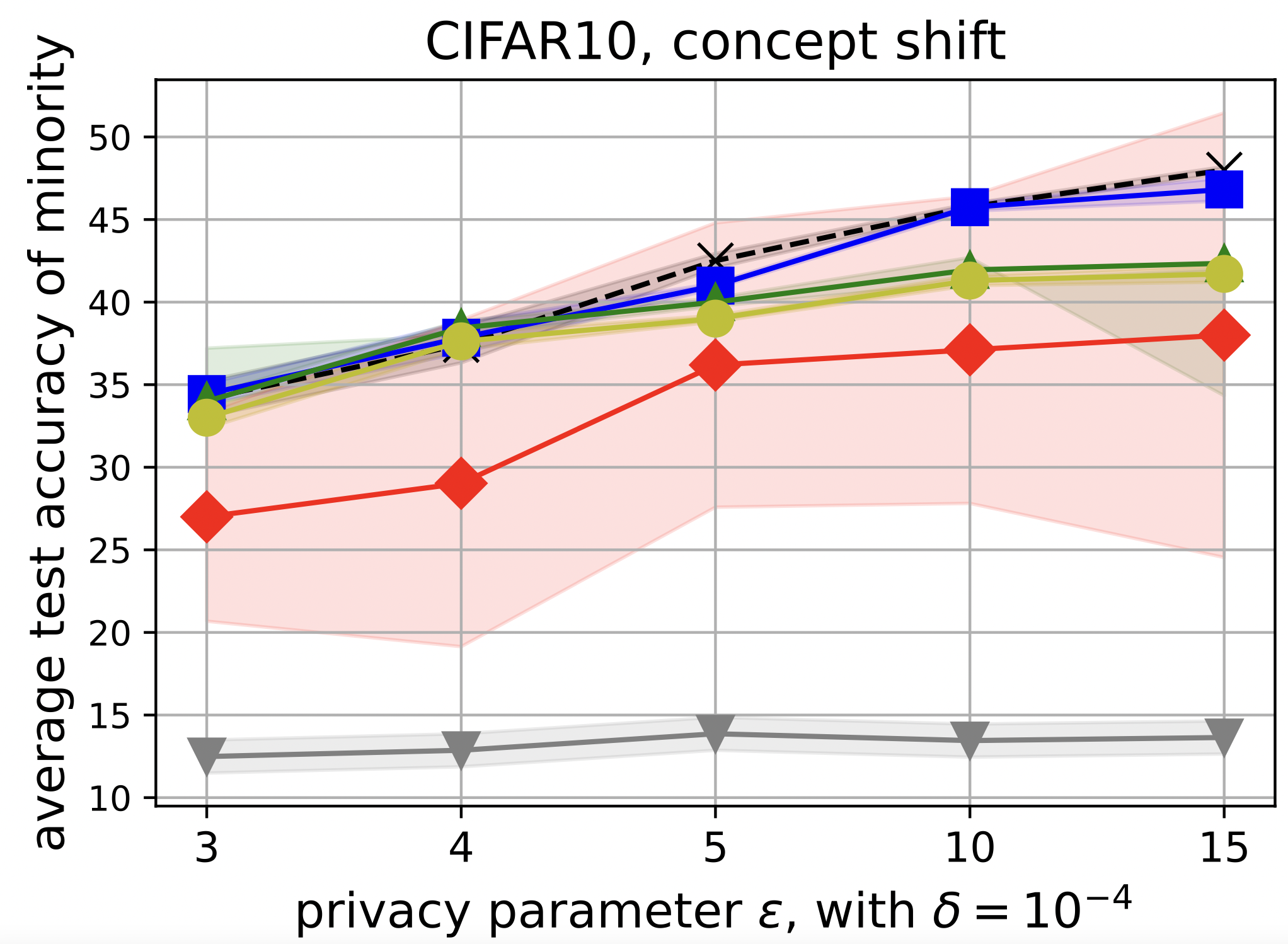}
\caption{Average test accuracy across clients belonging to the minority cluster for different total privacy budgets $\epsilon$ (results are obtained from 4 different random seeds). $10\%$ means performing loss-based clustering by clients only in $10\%$ of the total rounds $(E)$.}
\label{fig:avg_test_acc_minority_allalgs}
\end{figure*}

\Cref{fig:effect_of_MSS1} shows how the \texttt{MSS} score of the learned \GMM at the first round can be indicative of whether the true clients' clusters will be detected correctly or not. An \texttt{MSS} score above 2 almost always yields to correct detection of all clusters. 

\begin{figure}[h]
\centering
\includegraphics[width=0.31\columnwidth, height=4cm]
{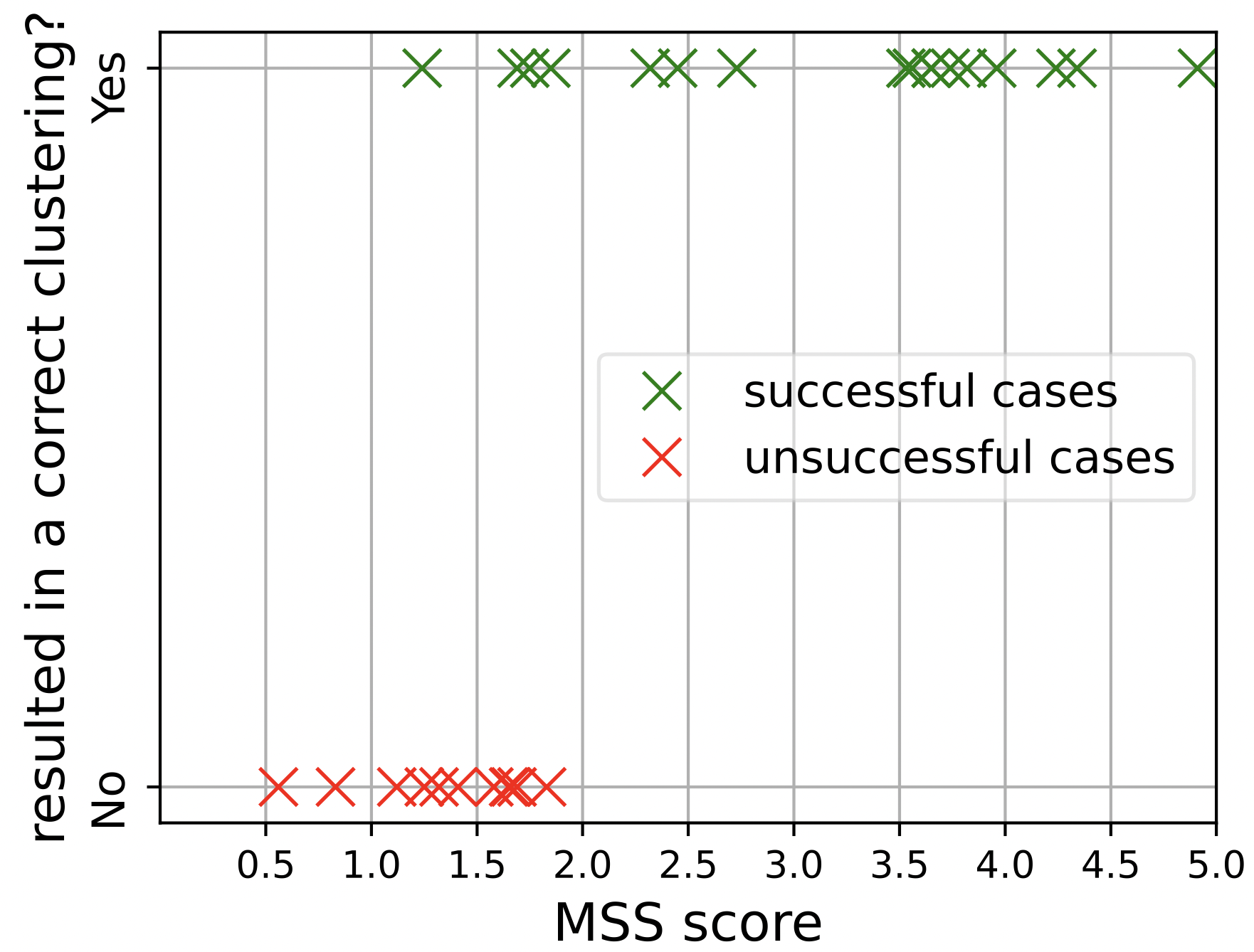}

\caption{The \texttt{MSS} score of the learned \GMM is indicative of whether the true underlying clusters will be detected or not: an \texttt{MSS} score above 2 always leads to correct detection of clusters. Each point is the result of one independent experiment.} 
\label{fig:effect_of_MSS1}
\end{figure}

The results in \Cref{fig:EM_convrate} supports \Cref{theorem:convrate} by showing that if $b_i^1$ is large enough, the convergence rate of \texttt{EM} algorithm for learning the \GMM at the end of the first round increases with $b_i^1$.
\begin{figure}[h]
\centering
\includegraphics[width=0.31\columnwidth, height=4cm]
{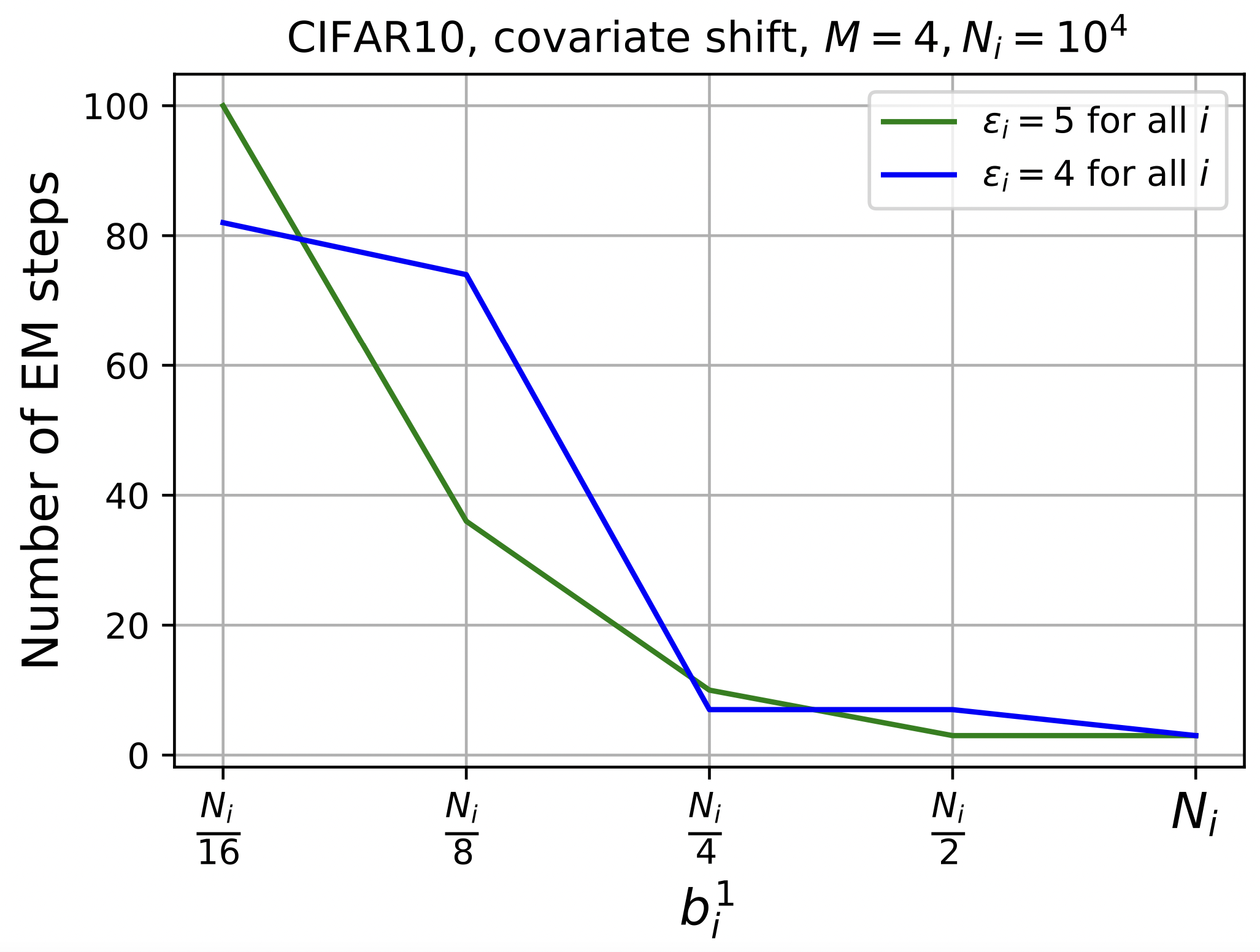}
\caption{The number of \texttt{EM} iterations needed for learning the \GMM decreases as $b_i^1$ increases. Especially, at full batch size $b_i^1=N_i (\forall i)$, very few iterations of \texttt{EM} are needed. The results are obtained on CIFAR10 with covariate shift across $M=4$ clusters. According to \Cref{lemma:localdp}, using a large enough batch size $b_i^1$ in the first round makes the underlying clusters in $\{\Delta \Tilde{\thetav}_i^1\}_{i=1}^n$ more distinguishable. Consequently, according to  \Cref{theorem:convrate}, convergence rate of \texttt{EM} algorithm for learning the \GMM increases with $b_i^1$.} 
\label{fig:EM_convrate}
\end{figure}

\clearpage
\newpage
\section{Proofs}

\subsection{Proof of \cref{lemma:updatesnoise}}
\batchsizeeffectonnoise*

\begin{proof}
    The following proof has some common parts with similar results in \citep{malekmohammadi2024noiseawarealgorithmheterogeneousdifferentially}. We consider two illustrative scenarios: 
    \paragraph{\textbf{Scenario 1: the clipping threshold $c$ is effective for all samples in a batch:}} in this case we have: $\forall j \in \mathcal{B}_i^{e,t}: c< \|g_{ij}(\thetav)\|$.
    Also, we know that the two sources of randomness (i.e. stochastic and Gaussian noise) are independent, thus their variances can be summed up. Let us assume that $E[\bar{g}_{ij}(\mathbf{\thetav})] = G_i(\mathbf{\thetav})$ for all samples $j$. From \cref{eq:noisy_sg}, we can find the mean of each \emph{batch gradient} $\Tilde{g}_i^{e,t}(\mathbf{\thetav})$ (of client $i$ in round $e$ and gradient step $t$) as follows:

\begin{align}
    \mathbb E[\Tilde{g}_i^{e,t}(\mathbf{\thetav})] = \frac{1}{b_i^e}\sum_{j \in \mathcal{B}_i^{e,t}} \mathbb E[\bar{g}_{ij}(\mathbf{\thetav})] = \frac{1}{b_i^e}\sum_{j \in \mathcal{B}_i^{e,t}} G_i(\mathbf{\thetav}) = G_i(\mathbf{\thetav}).
\label{expectation_gilde}
\end{align}

Also, from \cref{eq:noisy_sg}, we can find the variance of each \emph{batch gradient} $\Tilde{g}_i^{e,t}(\mathbf{\thetav})$ (of client $i$ in round $e$ and gradient step $t$) as follows:
\begin{align} 
\label{var_g}
    \sigma_{i, \Tilde{g}}^2(b_i^e) &:= \texttt{Var}[\Tilde{g}_i^{e,t}(\mathbf{\thetav})] = \texttt{Var}\bigg[\frac{1}{b_i^e}\sum_{j \in \mathcal{B}_i^{e,t}} \Bar{g}_{ij}(\mathbf{\thetav})\bigg] + \frac{p \sigma_{i, \texttt{DP}}^2}{b_i^{e^2}} \nonumber \\
    &= \frac{1}{b_i^{e^2}}\bigg(\mathbb E \bigg[\bigg\|\sum_{j \in \mathcal{B}_i^{e,t}}\Bar{g}_{ij}(\mathbf{\thetav})\bigg\|^2\bigg] - \bigg\|\mathbb E \bigg[\sum_{j \in \mathcal{B}_i^{e,t}}\Bar{g}_{ij}(\mathbf{\thetav})\bigg]\bigg\|^2\bigg) + \frac{pc^2 z_i^2(\epsilon_i, \delta_i, b_i^1, b_i^{>1}, N_i, K, E)}{b_i^{e^2}} \nonumber \\ 
    & = \frac{1}{b_i^{e^2}}\bigg(\mathbb E \bigg[\bigg\|\sum_{j \in \mathcal{B}_i^{e,t}}\Bar{g}_{ij}(\mathbf{\thetav})\bigg\|^2\bigg] - \bigg\|\sum_{j \in \mathcal{B}_i^{e,t}}G_i(\mathbf{\thetav}) \bigg\|^2\bigg) + \frac{pc^2 z_i^2(\epsilon_i, \delta_i, b_i^1, b_i^{>1}, N_i, K, E)}{b_i^{e^2}} \nonumber \\
    & = \frac{1}{b_i^{e^2}} \bigg(\underbrace{\mathbb E \bigg[\bigg\|\sum_{j \in \mathcal{B}_i^{e,t}}\Bar{g}_{ij}(\mathbf{\thetav})\bigg\|^2\bigg]}_{\mathcal{A}} - b_i^{e^2}\big\| G_i(\mathbf{\thetav}) \big\|^2\bigg) + \frac{p c^2 z_i^2(\epsilon_i, \delta_i, b_i^1, b_i^{>1}, N_i, K, E)}{b_i^{e^2}}, 
\end{align}

where:

\begin{align}
\mathcal{A} = \mathbb E \bigg[\bigg\|\sum_{j \in \mathcal{B}_i^{e,t}}\Bar{g}_{ij}(\mathbf{\thetav})\bigg\|^2\bigg] &= \sum_{j \in \mathcal{B}_i^{e,t}} \mathbb E \bigg[\big\|\Bar{g}_{ij}(\mathbf{\thetav})\big\|^2\bigg] + \sum_{m \neq n \in \mathcal{B}_i^{e,t}} 2 \mathbb E \bigg[[\Bar{g}_{im}(\mathbf{\thetav})]^\top [\Bar{g}_{in}(\mathbf{\thetav})]\bigg] \nonumber \\
&= \sum_{j \in \mathcal{B}_i^{e,t}} \mathbb E \bigg[\big\|\Bar{g}_{ij}(\mathbf{\thetav})\big\|^2\bigg] + \sum_{m \neq n \in \mathcal{B}_i^{e,t}} 2 \mathbb E \bigg[\Bar{g}_{im}(\mathbf{\thetav})\bigg ]^\top \mathbb E \bigg[\Bar{g}_{in}(\mathbf{\thetav})\bigg] \nonumber \\
&= b_i^e c^2 + 2 \binom{b_i^e}{2} \big\| G_i(\mathbf{\thetav})\big\|^2.
\end{align}

The last equation has used \cref{expectation_gilde} and that we clip the norm of sample gradients $\Bar{g}_{ij}(\mathbf{\thetav})$ with an ``effective" clipping threshold $c$. By replacing $\mathcal{A}$ into eq. \ref{var_g}, we can rewrite it as:

\begin{align}\label{eq:effective}
    \sigma_{i, \Tilde{g}}^2(b_i^e) &:= \texttt{Var}[\Tilde{g}_i^{e,t}(\mathbf{\thetav})] = \frac{1}{b_i^{e^2}}\bigg(\mathbb E \bigg[\bigg\|\sum_{j \in \mathcal{B}_i^{e,t}}\Bar{g}_{ij}(\mathbf{\thetav})\bigg\|^2\bigg] - b_i^{e^2}\big\| G_i(\mathbf{\thetav}) \big\|^2\bigg) + \frac{p c^2 z_i^2(\epsilon_i, \delta_i, b_i^1, b_i^{>1}, N_i, K, E)}{b_i^{e^2}} \nonumber \\
    &=  \frac{1}{b_i^{e^2}}\bigg( b_i^e c^2 + \bigg(2 \binom{b_i^e}{2} -b_i^{e^2}\bigg) \big\| G_i(\mathbf{\thetav})\big\|^2 \bigg) + \frac{p c^2 z_i^2(\epsilon_i, \delta_i, b_i^1, b_i^{>1}, N_i, K, E)}{b_i^{e^2}} \nonumber \\
    &=  \frac{c^2 - \big\| G_i(\mathbf{\thetav})\big\|^2}{b_i^e} + \frac{p c^2 z_i^2(\epsilon_i, \delta_i, b_i^1, b_i^{>1}, N_i, K, E)}{b_i^{e^2}} \approx \frac{p c^2 z_i^2(\epsilon_i, \delta_i, b_i^1, b_i^{>1}, N_i, K, E)}{b_i^{e^2}}
\end{align} 

    The last approximation is valid because $p\gg 1$ (it is the number of model parameters). 

    \paragraph{\textbf{Scenario 2: the clipping threshold $c$ is ineffective for all samples in a batch:}}
    when the clipping is ineffective for all samples, i.e., $\forall j \in \mathcal{B}_i^{e,t}: c >\|g_{ij}(\thetav)\|$, we have a noisy version of the batch gradient $g_i^{e,t}(\mathbf{\thetav}) = \frac{1}{b_i^e} \sum_{j \in \mathcal{B}_i^{e,t}} g_{ij}(\mathbf{\thetav})$, which is unbiased with variance bounded by $\sigma_{i, g}^2(b_i^e)$ (see \Cref{assump:_boundedvariance}). We note that $\sigma_{i, g}^2(b_i^e)$ is a constant that depends on the used batch size $b_i^e$. The larger the batch size $b_i^e$ used during round $e$, the smaller the constant. Hence, in this case:

\begin{align}
    & \mathbb E[\Tilde{g}_i^{e,t}(\mathbf{\thetav})] = \mathbb E[g_i^{e,t}(\mathbf{\thetav})] = \nabla f_i(\mathbf{\thetav}),
\label{mean_g_ineffective}
\end{align}

and 
\begin{align}
    \sigma_{i, \Tilde{g}}^2(b_i^e) = \texttt{Var}[\Tilde{g}_i^{e,t}(\mathbf{\thetav})] = \texttt{Var}[g_i^{e,t}(\mathbf{\thetav})] + \frac{p \sigma_{i, \texttt{DP}}^2}{b_i^{e^2}}
    &\leq \sigma_{i, g}^2(b_i^e) + \frac{p \sigma_{i, \texttt{DP}}^2}{b_i^{e^2}}  \nonumber \\
    &= \sigma_{i, g}^2(b_i^e) + \frac{p c^2 z_i^2(\epsilon, \delta, b_i^1, b_i^{>1}, N_i, K, E)}{b_i^{e^2}} \nonumber \\
    & \approx \frac{p c^2 z_i^2(\epsilon, \delta, b_i^1, b_i^{>1}, N_i, K, E)}{b_i^{e^2}}.
\label{eq:var_g_effective}
\end{align}

\begin{figure*}[t]
\centering
    \includegraphics[width=0.5\columnwidth,height=6cm]{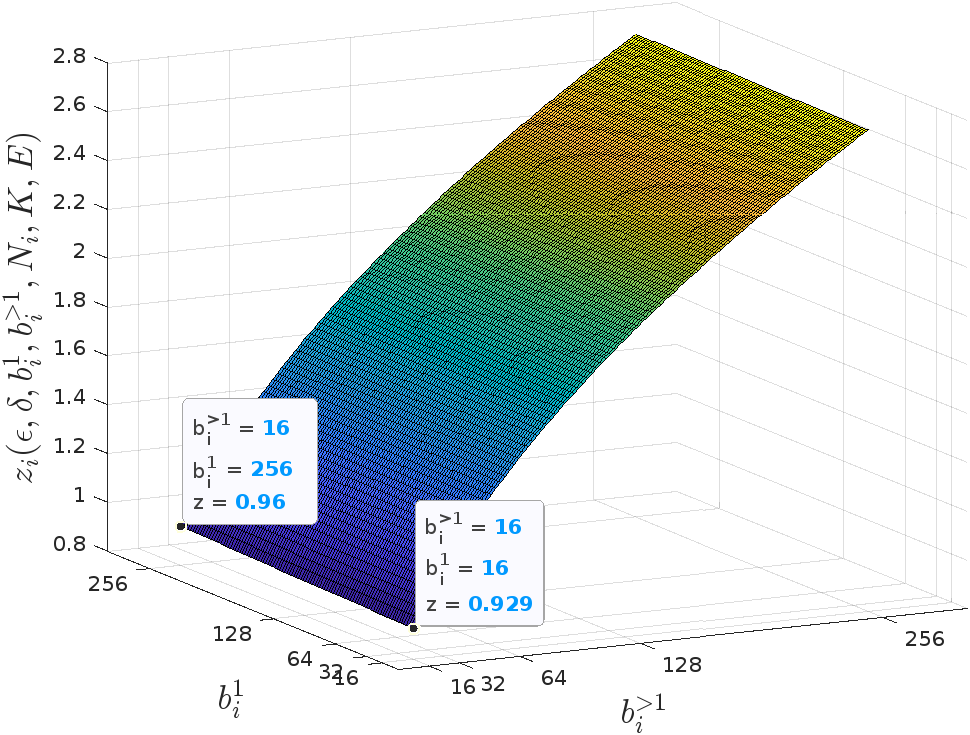}
    \caption{Plot of $z_i(\epsilon, \delta, b_i^1, b_i^{>1}, N_i, K, E)$ v.s. $b_i^1$ and $b_i^{>1}$  obtained from Renyi-\DP Accountant \citep{mironov2019renyidifferentialprivacysampled} in a setting with $N_i=6600, \epsilon=5, \delta=10^{-4}, K=1, E=200$. It is clearly observed that the effect of $b_i^{>1}$ is much more than the effect of $b_i^1$. The reason is that $b_i^{>1}$ is used in $E-1$ rounds, while $b_i^1$ is used only in the first round. So it is the value of $b_i^{>1}$ that affects $z_i$ the most.}
    \label{fig:zvsq}
\end{figure*}

The approximation is valid because $p \gg 1$ (number of model parameters). Also,  note that $\sigma_{i, g}^2(b_i^e)$ decreases with $b_i^e$. Therefore, we got to the same result as in \cref{eq:effective}. 

As observed in see \Cref{fig:zvsq}, $z_i$ grows with $b_i^1$ and $b_i^{>1}$  \emph{sub-linearly} (especially with $b_i^1$). Therefore, the variance of the client $i$'s \DP batch gradients $\Tilde{g}_i^{e,t}(\mathbf{\thetav})$ during communication round $e$, decreases with $b_i^e$ fast. The larger the batch size $b_i^e$, the less the noise existing in its batch gradients during the same round.

With the findings above, we now investigate the effect of batch size $b_i^e$ on \textbf{the noise level in clients' model updates at the end of round $e$}. During the global communication round $e$, a participating client $i$ performs $E^e_i = K \cdot \lceil \frac{N_i}{b_i^e} \rceil$ batch gradient updates locally with step size $\eta_l$:
\begin{align}
    \mathbf{\thetav}_{i}^{e,k} = \mathbf{\thetav}_{i}^{e,k-1} -\eta_l \Tilde{g}_i(\mathbf{\thetav}_{i}^{e,k-1}), ~ k=1, \ldots, E^e_i.
\end{align}
Hence,

\begin{align}
    \Delta \Tilde{\mathbf{\thetav}}_i^e =  \mathbf{\thetav}_{i}^{e,E^e_i} - \mathbf{\thetav}_{i}^{e,0}
\end{align}
In each update, it adds a Gaussian noise from $\mathcal{N}(0, \frac{c^2 z_i^2(\epsilon, \delta, b^1, b^{>1}, N_i, K, E)}{b^{e^2}}\mathbb{I}_p)$ to its batch gradients independently (see \Cref{eq:noisy_sg}). Hence:
\begin{align}
    \texttt{Var}[\Delta \Tilde{\mathbf{\thetav}}_i^e|\mathbf{\thetav}_{i}^{e,0}]
    & = E^e_i \cdot \eta_l^2 \cdot \sigma_{i, \Tilde{g}}^2(b_i^e),
\end{align}
where $\sigma_{i, \Tilde{g}}^2(b_i^e)$ was computed in \Cref{eq:effective} and \Cref{eq:var_g_effective}, and was a decreasing function of $b_i^e$. Therefore:

\begin{align}
    \texttt{Var}[\Delta \Tilde{\mathbf{\thetav}}_i^e|\mathbf{\thetav}_{i}^{e,0}]
    & \approx K \cdot N_i \cdot \eta_l^2 \cdot \frac{p c^2 z_i^2(\epsilon, \delta, b_i^1, b_i^{>1}, N_i, K, E)}{b_i^{e^3}}.
\end{align}

\end{proof}

\subsection{Proof of \cref{lemma:localdp}}
\batchsizeeffect* 

\begin{proof}
We first find the overlap between two arbitrary Gaussian distributions. Without loss of generality, lets assume we are in 1-dimensional space and that we have two Gaussian distributions both with variance $\sigma^2$ and with means $\mu_1=0$ and $\mu_2=\mu$ ($\|\mu_1 - \mu_2\|=\mu$), respectively. Based on symmetry of the distributions, the two components start to overlap at $x=\frac{\mu}{2}$. Hence, we can find the overlap between the two gaussians as follows:

\begin{align}
    O := 2\int_{\frac{\mu}{2}}^{\infty} \frac{1}{\sqrt{2\pi} \sigma} e^{-\frac{x^2}{2\sigma^2}} dx = 2\int_{\frac{\mu}{2\sigma }}^{\infty} \frac{1}{\sqrt{2\pi}} e^{-\frac{x^2}{2}} dx = 2Q(\frac{\mu}{2\sigma}),  
\end{align}

where $Q(\cdot)$ is the tail distribution function of the standard normal distribution. Now, lets consider the 2-dimensional space, and consider two similar symmetric distributions centered at $\mu_1=(0,0)$ and $\mu_2=(\mu,0)$ ($\|\mu_1 - \mu_2\|=\mu$) and with $\Sigma_1 = \Sigma_2 = \begin{bmatrix}
    \sigma^2 & 0\\
    0&  \sigma^2
  \end{bmatrix}$. The overlap between the two gaussians can be found as:

\begin{align} \label{eq:overlap}
    O = 2\int_{-\infty}^{\infty} \int_{\frac{\mu}{2}}^{\infty} \frac{1}{2\pi \sigma^2} e^{-\frac{x^2+y^2}{2\sigma^2}} dx dy &= 2 \int_{\frac{\mu}{2}}^{\infty} \frac{1}{\sqrt{2\pi} \sigma} e^{-\frac{x^2}{2\sigma^2}} dx \cdot \int_{-\infty}^{\infty} \frac{1}{\sqrt{2\pi} \sigma} e^{-\frac{y^2}{2\sigma^2}} dy = 2Q(\frac{\mu}{2\sigma}).
\end{align}

If we compute the overlap for two similar symmetric $p$-dimensional distributions with $\|\mathbf{\mu}_1-\mathbf{\mu}_2\|=\mu$ and variance $\sigma^2$ in every direction, we will get to the same result 
$2Q(\frac{\mu}{2\sigma})$. 

In the lemma, when using batch size $b^1$, we have two Gaussian distributions $\mathcal{N}\big(\mu_m^*(b^1), \Sigma_m^*(b^1)\big)$ and $\mathcal{N}\big(\mu_{m'}^*(b^1), \Sigma_{m'}^*(b^1)\big)$, where 

\begin{align}\label{eq:diagonal_cov}
    \Sigma_m^*(b^1) &= \Sigma_{m'}^*(b^1) =
  \begin{bmatrix}
    \frac{\sigma^{1^2}(b^1)}{p} & & \\
    & \ddots & \\
    & & \frac{\sigma^{1^2}(b^1)}{p}
  \end{bmatrix}.
\end{align}

Therefore, from \Cref{eq:overlap}, we can immediately conclude that the overlap between the two Gaussians, which we denote with $O_{m,m'}(b^1)$, is:

\begin{align}\label{eq:overlapb1}
    O_{m,m'}(b^1) = 2Q(\frac{\sqrt{p}\Delta_{m,m'}(b^1)}{2\sigma^1(b^1)}),
\end{align}
which proves the first part of the lemma.

Now, lets see the effect of increasing batch size. First, note that we had:

\begin{align}
    &\Delta \Tilde{\mathbf{\thetav}}_i^1 =  \mathbf{\thetav}_{i}^{1,E^1_i} - \mathbf{\thetav}_{i}^{1,0},\nonumber \\
    &\mathbf{\thetav}_{i}^{1,k} = \mathbf{\thetav}_{i}^{1,k-1} -\eta_l \Tilde{g}_i(\mathbf{\thetav}_{i}^{1,k-1}), ~ k=1, \ldots, E^1_i,
\end{align}

where $E^1_i = K \cdot \lceil \frac{N}{b^1} \rceil$ is the total number of gradients steps taken by client $i$ during communication round $e=1$. Therefore, considering that \DP batch gradients are clipped with a bound $c$, we have:

\begin{align} \label{eq:upperbound}
    \|\mathbb{E}[\Delta \Tilde{\mathbf{\thetav}}_i^1(b^1)]\| \leq E^1_i \cdot \eta_l \cdot c.
\end{align}
When we increase batch size $b_i^1$ for all clients from $b^1$ to $kb^1$, the upperbound in \Cref{eq:upperbound} gets $k$ times smaller. In fact by doing so, the number of local gradient updates that client $i$ performs during round $e=1$, which is equal to $E^1_i$, decreases $k$ times. 
As such, we can write:

\begin{align}
    \Delta \Tilde{\mathbf{\thetav}}_i^1(b^1) = k\cdot\Delta \Tilde{\mathbf{\thetav}}_i^1(kb^1) + \upsilon_i,
\end{align}

where $\upsilon_i \in \mathbb{R}^p$ is a vector capturing the discrepancies between $\Delta \Tilde{\mathbf{\thetav}}_i^1(b^1)$ and $k \cdot \Delta \Tilde{\mathbf{\thetav}}_i^1(kb^1)$. Therefore, we have:

\begin{align}
    \mu_m^*(b^1) &= \mathbb{E}[\Delta \Tilde{\mathbf{\thetav}}_i^1(b^1)|s(i)=m] = \mathbb{E}[k\cdot \Delta \Tilde{\mathbf{\thetav}}_i^1(kb^1) + \upsilon_i|s(i)=m] \nonumber\\
    & = k\cdot \mathbb{E}[\Delta \Tilde{\mathbf{\thetav}}_i^1(kb^1)] + \mathbb{E}[\upsilon_i | s(i)=m] = k\cdot \mu_m^*(kb^1) + \mathbb{E}[\upsilon_i | s(i)=m].
\end{align}

Therefore, we have:

\begin{align}
\|\mu_m^*(b^1)-\mu_{m'}^*(b^1)\| = \bigg\|k\mu_m^*(kb^1)-k\mu_{m'}^*(kb^1) + \bigg(\mathbb{E}[\upsilon_i | s(i)=m] - \mathbb{E}[\upsilon_i| s(i)=m']\bigg)\bigg\|.
\end{align}

Based on our experiments, the last term above, in parenthesis, is small and we can have the following approximation for the equation above:

\begin{align}
\|\mu_m^*(b^1)-\mu_{m'}^*(b^1)\| \approx \|k\mu_m^*(kb^1)-k\mu_{m'}^*(kb^1) \|,
\end{align}

or equivalently:

\begin{align}\label{eq:meank}
\|\mu_m^*(kb^1)-\mu_{m'}^*(kb^1)\| \approx \frac{\|\mu_m^*(b^1)-\mu_{m'}^*(b^1)\|}{k}.
\end{align}

\Cref{fig:ztoz} (left) shows the validity of the approximation above with some experimental results. On the other hand, from \Cref{eq:sigma_i^2} and also noting that a client, with dataset size $N$ and batch size $b^1$, takes $\frac{N}{b^1}$ gradient steps during each epoch of the first round, we have:

\begin{align}
    \forall m\in [M]: \sigma_m^2(b^1) = \sigma^2(b^1) \approx K \cdot N \cdot \eta_l^2 \cdot \frac{p c^2 z^2(\epsilon, \delta, b^1, b^{>1}, N, K, E)}{b^{1^3}}.
\end{align}

\begin{figure*}[t]
\centering
    
    \includegraphics[width=0.35\columnwidth,height=5.2cm]{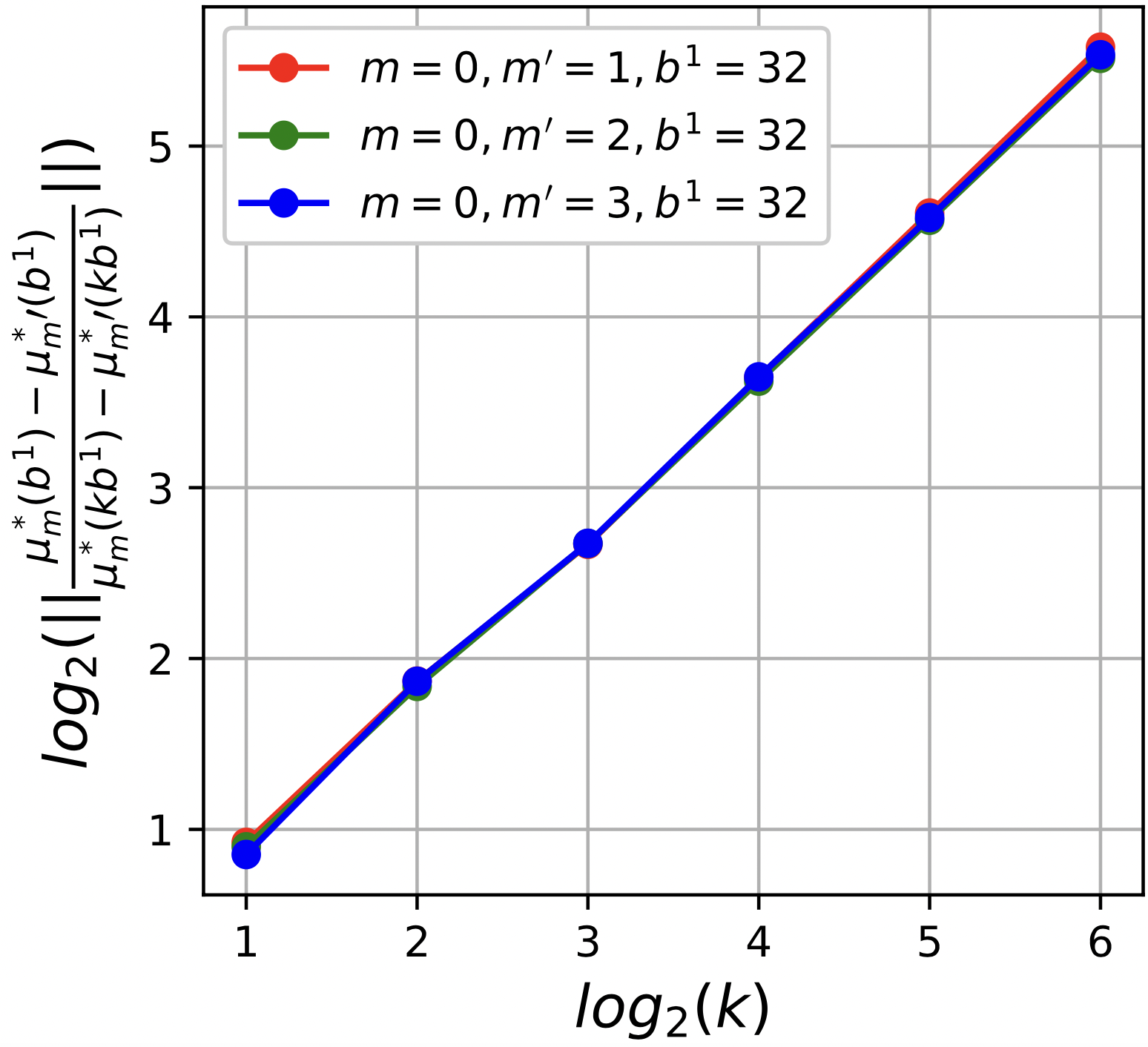}
    \includegraphics[width=0.42\columnwidth,height=5.4cm]{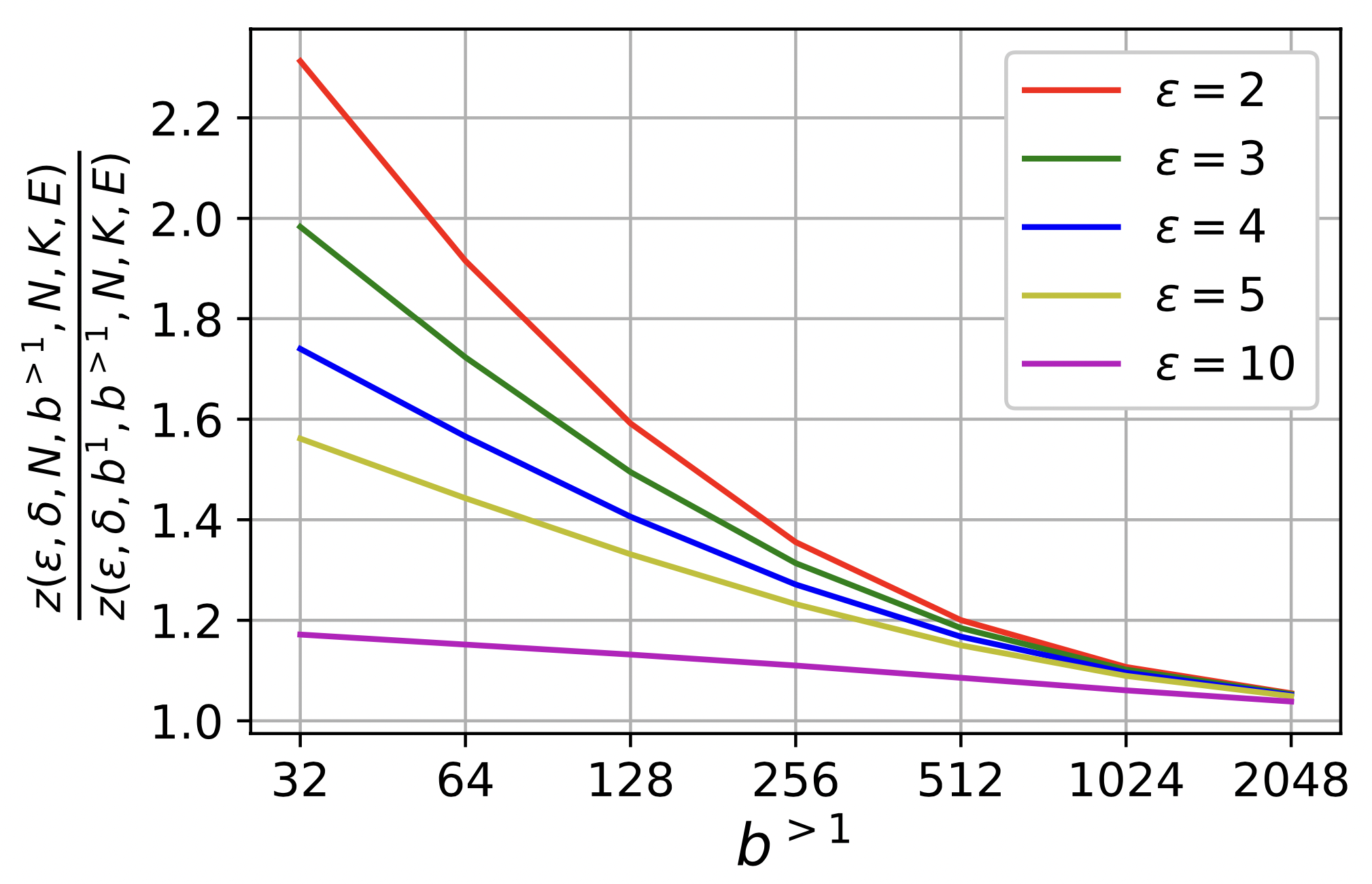}
    
    \caption{\textbf{Left:} Distance between the centers of different clusters, i.e., the distance between $\mu^*_m(b^1)$ and $\mu^*_{m'}(b^1)$, decreases $k$ times as $b^1$ increases $k$ times. The three curves in the plot are obtained on CIFAR10 with 4 clusters $m \in \{0,1,2,3\}$ obtained from covariate shift (rotation). The curves are overlapping all with slope 0.95, which is very close to 1. This shows the validity of the approximation in \cref{eq:meank}. \textbf{Right:} Effect of changing batch size $b^1$ to full batch size in the first round on the noise scale $z$. In the denominator, $b^1$ is equal to $b^{>1}$. Results are obtained from  Renyi-\DP accountant \citep{mironov2019renyidifferentialprivacysampled} with $N=50000$, $K=1$ and $E=200$. For each value of $\epsilon$, we have shown the results for seven values of $b^{>1}$.}
    \label{fig:ztoz}
\end{figure*}

When we change the batch size used during the first communication round $e=1$ from $b^1$ to $kb^1$ and we fix the batch size of rounds $e>1$, then the noise scale $z$ changes from $z(\epsilon, \delta, b^1, b^{>1}, N_i, K, E)$ to $z(\epsilon, \delta, kb^1, b^{>1}, N_i, K, E)$. Confirmed by our experimental analysis (see \cref{fig:ztoz}, right), the amount of change in $z$ due to this is small, as we have changed the batch size only in the first round $e=1$ from $b^1$ to $kb^1$, while the batch sizes in the other $E-1$ rounds are unchanged and $E\gg1$. Therefore, supported by the results in \cref{fig:ztoz}, we can always establish an upper bound on the amount of change in $z$ as $b^1$ increases: $z(\epsilon, \delta, kb^1, b^{>1}, N, K, E) \leq \rho z(\epsilon, \delta, b^1, b^{>1}, N, K, E)$, where $\rho$ is a small constant (e.g. $\rho=2.5$ in \cref{fig:ztoz}). So we have:

\begin{align} \label{eq:vark}
    \forall m \in [M]: \sigma_m^2(kb^1) = \sigma^2(kb^1) &\approx  K \cdot N \cdot \eta_l^2 \cdot \frac{p c^2 z^2(\epsilon, \delta, kb^1, b^{>1}, N, K, E)}{(kb^1)^3}\nonumber\\
    &\leq  K \cdot N \cdot \eta_l^2 \cdot \frac{p c^2 \rho^2 z^2(\epsilon, \delta, b^1, b^{>1}, N, K, E)}{(kb^1)^3}\nonumber\\
    & = \frac{\rho^2 \sigma^2(b^1)}{k^3}.
\end{align}

From \Cref{eq:meank} and \Cref{eq:vark}, we have:

\begin{align}\label{eq:overlapkb1}
    O_{m,m'}(kb^1) = 2Q\bigg(\frac{\sqrt{p}\Delta_{m,m'}(kb^1)}{2\sigma(kb^1)}\bigg)\leq  2Q\bigg(\frac{\sqrt{p}\frac{\Delta_{m,m'}(b^1)}{k}}{2\frac{\rho \sigma(b^1)}{k^{\frac{3}{2}}}}\bigg) = 2Q(\frac{\sqrt{kp}\Delta_{m,m'}(b^1)}{2\rho \sigma(b^1)}),
\end{align}
which completes the proof.
\end{proof}

\subsection{Proof of \cref{theorem:convrate}}
\convrate* 

\begin{proof}
    The proof directly follows from the proof of Theorem 1 in \cite{Ma2000AsymptoticCR} by considering $\{\Delta \Tilde{\mathbf{\thetav}}_i^1(b^1)\}_{i=1}^n$ as the samples of Gaussian mixture $\{\mathcal{N}\big(\mu_m^*(b^1), \Sigma_m^*(b^1)\big), \alpha_m^*\}_{m=1}^M$. 
\end{proof}

\subsection{Formal privacy guarantees of \algname{R-DPCFL}}\label{app:dp_guaranteres}

The privacy guarantee of \algname{R-DPCFL} for each client $i$ in the system comes from the fact that the client runs \DPSGD with a fixed \DP noise variance $\sigma_{i,\texttt{DP}}^2 = c^2\cdot z_i^2(\epsilon, \delta, b_i^1, b_i^{>1}, N_i, K, E)$ in each of its batch gradient computations. In the following \Cref{thm:localdp}, we provide a formal privacy guarantee for the algorithm to show the sample-level \DP privacy guarantees provided to each client $i$ with respect to its local dataset $\mathcal{D}_i$ and against the untrusted server (and any other external third party).

\begin{restatable}{theorem}{localdp}
The set of model updates $\{\Delta \Tilde{\thetav}_i^e\}_{e=1}^E$, which are uploaded to the server by each client $i \in \{1,\cdots,n\}$ during the training time, as well as their local model cluster selections satisfy $(\epsilon, \delta)$-\DP with respect to the client's local dataset $\mathcal{D}_i$, where the parameters $\epsilon$ and $\delta$ depend on the amount of \DP noise $\sigma_{i,\texttt{DP}}^2$ used by the client. 
\label{thm:localdp}
\end{restatable}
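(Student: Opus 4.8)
The plan is to treat client $i$'s entire uploaded transcript as an adaptive composition of differentially private mechanisms, bound the Rényi divergence contributed by each component, sum these contributions with \Cref{prop:1mironov}, and convert the aggregate \RDP curve to $(\epsilon,\delta)$-\DP via \Cref{lemma:rdptodp}; the noise scale $z_i$ returned by the accountant is, by construction, the exact value making the converted guarantee equal to the target $(\epsilon,\delta)$. So the whole argument reduces to (a) isolating every operation in \Cref{alg:R-DPCFL} that actually queries $\mathcal{D}_i$, and (b) checking that the standard \RDP accounting for those operations is what the routine $\texttt{RDP}(\epsilon,\delta,b_i^1,b_i^{>1},N_i,K,E)$ computes.

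For (a), I would first argue that the only operations in client $i$'s transcript that depend on $\mathcal{D}_i$ are: (i) the per-batch noisy gradient computations inside each call to \DPSGD (one call per round $e$, producing $\Delta\tilde\thetav_i^e$), and (ii) the privatized local cluster selection $R^e(i)=\argmin_m f_i(\thetav_m^e)$, carried out with the exponential mechanism during the third-stage rounds (see \Cref{app:EMimplementation}). Everything else is post-processing of already-released \DP outputs: the soft assignments $\pi_i$ and the sampled $R^e(i)$ for $e\in\{2,\dots,E_c\}$ are deterministic/randomized functions of the first-round updates $\{\Delta\tilde\thetav_j^1\}_{j=1}^n$ through the server-side \GMM; the cluster models $\{\thetav_m^e\}_m$, the weights $w_i^e$, and the starting point $\thetav_{R^e(i)}^e$ handed to client $i$ in round $e$ are functions of previously released updates and assignments only; and nothing on the server re-touches $\mathcal{D}_i$. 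Hence, by post-processing invariance and the adaptive form of \Cref{prop:1mironov} (which allows $\mathcal{M}_2$ to depend on the output of $\mathcal{M}_1$), it suffices to account for (i) and (ii).

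For (b): each \DPSGD step releases $\frac{1}{b_i^e}\big(\sum_{j\in\mathcal{B}_i^{e,t}}\bar g_{ij}(\thetav)+\mathcal{N}(0,\sigma_{i,\texttt{DP}}^2\mathbb{I}_p)\big)$ with $\sigma_{i,\texttt{DP}}=c\,z_i$; since per-sample gradients are clipped to norm $c$, the summed query has $\ell_2$-sensitivity $\le 2c$ under the replacement neighbourhood of \Cref{def:epsilondeltadp} (and $c$ under add/remove), so by \Cref{prop:7mironov} the unamplified step is $(\alpha,O(\alpha/z_i^2))$-\RDP, with additional privacy amplification by subsampling in rounds $e>1$ where $b_i^{>1}<N_i$; round $e=1$ uses the full batch ($b_i^1=N_i$, no amplification) and performs $K$ steps, each round $e>1$ performs $K\lceil N_i/b_i^{>1}\rceil$ steps. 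For (ii), by \Cref{app:EM} each exponential-mechanism selection with budget $\epsilon_{\text{select}}$ is $\tfrac18\epsilon_{\text{select}}^2$-zCDP, i.e. $(\alpha,\tfrac{\alpha}{8}\epsilon_{\text{select}}^2)$-\RDP for all $\alpha>1$, and this is incurred only in the $\lfloor E/10\rfloor$ active local-clustering rounds. Summing all these per-step \RDP functions over every gradient step of every round and over the local-clustering rounds is exactly the heterogeneous composition that the Rényi-\DP accountant of \citet{mironov2019renyidifferentialprivacysampled} evaluates; calling the result $\alpha\mapsto\epsilon_i(\alpha)$ and applying \Cref{lemma:rdptodp} yields $(\epsilon_i(\delta),\delta)$-\DP for client $i$. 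Since $z_i$ is chosen precisely so that $\epsilon_i(\delta)\le\epsilon$, the transcript $\{\Delta\tilde\thetav_i^e\}_{e=1}^E$ together with the local cluster selections satisfies $(\epsilon,\delta)$-\DP with respect to $\mathcal{D}_i$, proving the claim.

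The main obstacle is the bookkeeping of adaptivity rather than any hard inequality: one must verify scrupulously that every quantity outside (i) and (ii) is a genuine post-processing of earlier \DP releases (in particular that the \GMM fit and all cross-client aggregation never re-access $\mathcal{D}_i$, and that the weights $w_i^e$, which depend on other clients' assignments, carry no information about $\mathcal{D}_i$ beyond what $R^e(i)$ already released), and that the accountant is fed the correct per-round subsampling rates and step counts so that its reported composition upper-bounds the true Rényi divergence of the full transcript. Once this is established, the theorem follows directly from \Cref{prop:1mironov}, \Cref{prop:7mironov}, and \Cref{lemma:rdptodp}.
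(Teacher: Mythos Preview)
Your proposal is correct and follows essentially the same route as the paper: account each \DPSGD step as a Gaussian mechanism via \Cref{prop:7mironov}, compose all steps (together with the exponential-mechanism selections) via \Cref{prop:1mironov}, and convert to $(\epsilon,\delta)$-\DP via \Cref{lemma:rdptodp}, with $z_i$ chosen so the accountant's output matches the target budget. Your version is in fact more careful than the paper's on several points the paper leaves implicit---the explicit post-processing argument for the server-side \GMM and aggregation, the adaptive nature of the composition, the distinction between replacement and add/remove sensitivity, and the per-round subsampling amplification---so there is no gap to flag.
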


\begin{proof}
The sensitivity of the batch gradient in \cref{eq:noisy_sg} to every data sample is $c$. Therefore, based on \Cref{prop:7mironov}, each of the batch gradient computations by client $i$ (in the first round $e=1$ as well as the next rounds $e>1$) is $(\alpha, \frac{\alpha c^2}{2 \sigma_{i,\texttt{DP}}^2})$-\RDP w.r.t the local dataset $\mathcal{D}_i$. Therefore, if the client runs $E_i^{\texttt{
tot}}$ total number of gradient updates during the training time, which results in the model updates $\{\Delta \Tilde{\thetav}_i^e\}_{e=1}^E$ uploaded to the server, the set of model updates will be $(\alpha, \frac{ E_i^{\texttt{
tot}} \alpha c^2}{2 \sigma_{i,\texttt{DP}}^2})$-\RDP w.r.t $\mathcal{D}_i$, according to \Cref{prop:1mironov}. Finally, according to \Cref{lemma:rdptodp}, this guarantee is equivalent to 
$(\frac{ E_i^{\texttt{
tot}} \alpha c^2}{2 \sigma_{i,\texttt{DP}}^2}+\frac{log(1/\delta)}{\alpha - 1}, \delta)$-\DP (for any $\delta>1$). The \RDP-based guarantee is computed over a bunch of orders $\alpha$ and the best result among them is chosen as the privacy guarantee. Therefore, the proof is complete and the set $\{\Delta \Tilde{\thetav}_i^e\}_{e=1}^E$ satisfies $(\epsilon, \delta)$-\DP w.r.t $\mathcal{D}_i$, with $\epsilon = \frac{ E_i^{\texttt{
tot}} \alpha c^2}{2 \sigma_{i,\texttt{DP}}^2}+\frac{log(1/\delta)}{\alpha - 1}$ derived above, and $\delta>0$. Tight bounds for $\epsilon$ can be derived by using the numerical procedure, proposed in \citep{mironov2019renyidifferentialprivacysampled}, for accounting sampled Gaussian mechanism. On the other hand, clients' local cluster selections are also privatized by exponential mechanism and satisfy $(\epsilon, \delta)$-\DP. Therefore, the overall training process for each client is private and satisfies ($\epsilon, \delta$)-\DP.
\end{proof}

\section{The relation between \cref{lemma:updatesnoise} and the law of large numbers} \label{app:lolns}

We first state the weak law of large numbers and then explain how \cref{lemma:updatesnoise} is closely related to it.

\begin{theorem}[Weak law of large numbers \citep{patrickB}]\label{thm:wlolns}
Suppose that $\{X_i\}_{i=1}^b$, is an independent sequence (of size $b$) of i.i.d random variables with expected value $\mu$ and positive variance $\sigma^2$. Define $\bar X_b = \frac{\sum_{i=1}^b X_i}{b}$ as their sample mean. Then, for any positive number $\Delta>0$:

\begin{align}
    \lim_{b\rightarrow \infty}\texttt{Pr}[|\bar X_b - \mu |>\Delta] = 0 .
\end{align} 

\end{theorem}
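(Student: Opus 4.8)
The plan is to reduce the statement to Chebyshev's inequality once the variance of the sample mean has been computed. First I would use linearity of expectation to record that $\mathbb{E}[\bar X_b] = \frac{1}{b}\sum_{i=1}^b \mathbb{E}[X_i] = \mu$, so that $\bar X_b - \mu$ has mean zero. Next, since the $X_i$ are independent, all the cross-covariance terms vanish, and therefore
\[
\mathrm{Var}(\bar X_b) = \frac{1}{b^2}\sum_{i=1}^b \mathrm{Var}(X_i) = \frac{\sigma^2}{b}.
\]

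Then, for an arbitrary but fixed $\Delta > 0$, I would apply Chebyshev's inequality to the random variable $\bar X_b$:
\[
\texttt{Pr}\big[\,|\bar X_b - \mu| > \Delta\,\big] \;\le\; \frac{\mathrm{Var}(\bar X_b)}{\Delta^2} \;=\; \frac{\sigma^2}{b\,\Delta^2}.
\]
Because $\sigma^2$ and $\Delta$ do not depend on $b$, the right-hand side tends to $0$ as $b \to \infty$; combined with the non-negativity of probability, this forces $\lim_{b\to\infty} \texttt{Pr}[\,|\bar X_b - \mu| > \Delta\,] = 0$, which is exactly the assertion.

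There is no serious obstacle in this argument: it uses only linearity of expectation, additivity of variance over independent summands, and Chebyshev's inequality. The one hypothesis that genuinely matters is that the common variance $\sigma^2$ is finite — that is precisely what makes Chebyshev applicable and yields the $1/b$ decay rate. Under merely a finite-mean assumption one would instead need Khinchin's truncation argument, but that strengthening is unnecessary here since $\sigma^2 < \infty$ is assumed. (The connection to \cref{lemma:updatesnoise} is then immediate: taking $X_i$ to be the per-sample clipped gradients in a batch of size $b$, the sample mean is the batch gradient, its variance scales like $1/b$, and the additional additive Gaussian term there scales like $1/b^2$, so the total noise vanishes as $b$ grows.)
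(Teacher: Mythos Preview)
Your argument is correct and is exactly the standard Chebyshev proof of the weak law under a finite second moment: compute $\mathbb{E}[\bar X_b]=\mu$, use independence to get $\mathrm{Var}(\bar X_b)=\sigma^2/b$, and bound the tail probability by $\sigma^2/(b\Delta^2)\to 0$. There is nothing to fix.

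As for comparison with the paper: the paper does \emph{not} give its own proof of this theorem. It is stated in \Cref{app:lolns} purely as a quoted classical result (with a citation to a probability textbook) and is used only to motivate, in words, why the averaged noise in \Cref{lemma:updatesnoise} shrinks with the batch size. So your Chebyshev derivation actually supplies more than the paper does, and your closing remark about the $1/b$ and $1/b^2$ scalings matches the informal discussion the paper gives immediately after the theorem statement.
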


In fact, the weak law of large numbers states that the sample mean of some i.i.d random variables converges in probability to their expected value ($\mu$). Furthermore, we can see that $\texttt{Var}[\bar X_b]=\frac{\sigma^2}{b}$, which means that \emph{the variance of the sample mean decreases as the sample size $b$ increases}.

Now, remember from \cref{eq:noisy_sg} that when computing the \DP stochastic batch gradients in round $e$ (with batch size $b_i^e$), we add \DP noise with variance $\sigma_{i, \texttt{DP}}^2/b_i^e$ to each of the $b_i^e$ clipped sample gradients in the batch \emph{and average the resulting $b_i^e$ noisy clipped sample gradients}. The sampled noise terms added to the clipped sample gradients in a batch are i.i.d with mean zero. Therefore, based on the above theorem, the variance of their average over each batch should  approach zero as the batch size $b_i^e$ grows. The same discussion applies to all the $K\cdot N_i/b_i^e$ gradient updates performed by client $i$ during a communication round $e$ (whose noises will be summed up), which results in \cref{lemma:updatesnoise}.

\section{Gradient accumulation}\label{app:grad_acc}
When training large models with \DPSGD, increasing the batch size results in memory exploding during training or finetuning. This might happen even when we are not using \DP training. On the other hand, using a small batch size results in larger stochastic noise in batch gradients. Also, in the case of \DP training, using a small batch size results in fast increment of \DP noise (as explained in \cref{lemma:updatesnoise} in details). Therefore, if the memory budget of devices allow, we prefer to avoid using small batch sizes. But what if there is a limited memory budget? A solution for virtually increasing batch size is ``gradient accumulation", which is very useful when the available physical memory of GPU is insufficient to accommodate the desired batch size. In gradient accumulation, gradients are computed for smaller batch sizes and summed over multiple batches, instead of updating model parameters after computing each batch gradient. When the accumulated gradients reach the target logical batch size, the model weights are updated with the accumulated batch gradients. The page in \url{https://opacus.ai/api/batch_memory_manager.html} shows more details.

\section{Further Related Works}\label{app:related_work}

\par{\textbf{Performance parity in \FL}}: Performance parity of the final trained model across clients is an important goal in \FL. Addressing this goal, \cite{mohri2019agnostic} proposed Agnostic \FL (\algname{AFL}) by using a min-max optimization approach. \algname{TERM} \citep{li2020tilted} used tilted losses to up-weight clients with large losses. Finally, \cite{li2019fair} and \cite{zhang2023proportional} proposed \algname{$q$-FFL} and \algname{PropFair}, inspired by $\alpha$-fairness \citep{lan_axiomatic_2010} and proportional fairness \citep{bertsimas2011price}, respectively. Generating one common model for all clients, these techniques do not perform well when the data distribution across clients is highly heterogeneous or a structured data heterogeneity exists across clusters of clients. While model personalization techniques (e.g., \algname{MR-MTL} \citep{liu2022csfl}) are proposed for the former case, stronger personalization techniques, e.g., client clustering, are used for the latter.

\par{\textbf{Differential privacy, group fairness and performance parity:}} Gradient clipping and random noise addition used in \DPSGD disproportionately affect underrepresented groups. Some works tried to address the tension between group fairness and \DP in centralized settings \citep{Tran2020DifferentiallyPA} (by using Lagrangian duality) and \FL settings \citep{pentyala2022privfairfl} (by using Secure Multiparty Computation (\texttt{MPC})). Another work tried to remove the disparate impact of \DP on model performance of minority groups in centralized settings \citep{Esipova2022DisparateII}, by preventing gradient misalignment across different groups of data. Unlike the previous works on group fairness, our work adopts cross-model fairness, where the utility drop after adding \DP must be close for different groups \citep{chu2023focus}, including minority and majority clients. As we consider a structured data heterogeneity across clients, the mentioned approaches are not appropriate, due to generating one single model for all. 


\end{document}